\newcommand{\xk}{\mathbf{x}_u}
\newcommand{\xh}{\mathbf{x}_v}
\newcommand{\Jskew}{\mathcal{J}}
\newcommand{\Jkskew}{\mathcal{J}_u}
\newcommand{\dsmth}{\textnormal{d}}
\newcommand{\bfI}{{\bf I}}
\newcommand{\bfV}{{\bf V}}
\newcommand{\bfW}{{\bf W}}
\newcommand{\bfX}{{\bf X}}
\newcommand{\bfb}{{\bf b}}
\newcommand{\bfx}{{\bf x}}
\newcommand{\bfy}{{\bf y}}
\newcommand{\bfq}{{\bf q}}
\newcommand{\bfp}{{\bf p}}
\definecolor{second}{HTML}{785EF0}
\definecolor{first}{cmyk}{0, 0.54, 0.87, 0}
\definecolor{third}{cmyk}{0, 0.7808, 0.4429, 0.1412}
\newcommand{\one}[1]{\textcolor{first}{\textbf{#1}}}
\newcommand{\two}[1]{\textcolor{second}{\textbf{#1}}}
\newcommand{\three}[1]{{\textbf{#1}}}
\newcommand{\comm}[1]{}
\theoremstyle{plain}
\newtheorem{theorem}{Theorem} % Theorems numbered by section
\newtheorem{lemma}{Lemma}     % Lemmas share numbering with theorems
\numberwithin{theorem}{section}
\numberwithin{lemma}{section}
\numberwithin{corollary}{section}
\numberwithin{definition}{section}
\newcommand{\ie}{i.e., }
\newcommand{\eg}{e.g., }
\newcommand{\myparagraph}[1]{\textbf{#1}\ }
\title{Port-Hamiltonian Architectural Bias for Long-Range Propagation in Deep Graph Networks}%Alternative: Symmetries imply Constants: Noether's Theorem in the light of Deep Graph Neural Networks
\author{
Simon Heilig$^{1}$\thanks{Equal Contribution. Correspondence: simon.heilig@rub.de, alessio.gravina@di.unipi.it} ,$\;$ Alessio Gravina$^{2}$\footnotemark[1] ,$\;$ Alessandro Trenta$^2$,$\;$ Claudio Gallicchio$^2$,$\;$ Davide Bacciu$^2$ \\\\
$^1$Ruhr University Bochum, Department of Computer Science, Germany\\
$^2$University of Pisa, Department of Computer Science, Italy\\
%$^*$equal contribution
}
\begin{document}

\maketitle

\begin{abstract}
The dynamics of information diffusion within graphs is a critical open issue that heavily influences graph representation learning, especially when considering long-range propagation. This calls for principled approaches that control and regulate the degree of propagation and dissipation of information throughout the neural flow. Motivated by this, we introduce \emph{port-Hamiltonian Deep Graph Networks}, a novel framework that models neural information flow in graphs by building on the laws of conservation of Hamiltonian dynamical systems. We reconcile under a single theoretical and practical framework both non-dissipative long-range propagation and non-conservative behaviors, introducing tools from mechanical systems to gauge the equilibrium between the two components. Our approach can be applied to general message-passing architectures, and it provides theoretical guarantees on information conservation in time. Empirical results prove the effectiveness of our port-Hamiltonian scheme in pushing simple graph convolutional architectures to state-of-the-art performance in long-range benchmarks.

% Deep Graph Networks have shown great success in computing information-rich embeddings for machine learning with graphs. Inference on a diverse set of graphs of varying shapes and diameters requires strong generalization capabilities over long chains of messages. Acknowledging the properties of Hamiltonian dynamical systems for long-time integration, we take an orthogonal approach to existing neural graph differential equation-based methods by modeling the dynamics under energy conservation constraints. This allows us to bound the gradients away from zero and hence, arbitrary nonlinear deep graph %neural 
% networks can be computed.%without suffering over-smoothing.
% The structure reveals a simple, yet symmetrically extension to classical message passing neural networks allowing to reuse existing models. If forgetting the past is necessary to solve the task, a port-Hamiltonian relaxation is adopted to enable dissipation in an interpretable manner. The two biases are evaluated both on synthetic and real-world settings to demonstrate the capabilities of long-range message passing. New state-of-the-art results can be achieved on several benchmarks without requiring global positional encoding nor changing the underlying graph to enhance information flow.
\end{abstract}

\section{Introduction}
%- graphs are awesome and they are a powerful tool in representation learning. The conjoining of dynamical systems and deep learning has become a topic of great interest in recent years
%- current limit of GNNs, \ie long range propagation
%- what we propose
%- summary of the main contributions of the paper
%Our contributions are:
%\begin{itemize}
    %\item Proposing architectures for graph neural networks derived from the theory of energy preservation in Hamiltonian dynamical systems
    %\item This strong bias allows us to bound the gradients from below, allowing for arbitrarly deep graph neural architectures and drawing connections to constants of motions through time
    %\item In case the downstream task requires the model to dissipate energy, we also extend the framework to the Port-Hamiltonian theory for internal dampening and external forces
%\end{itemize}
%\noindent\rule[0.5ex]{\linewidth}{1pt}
The conjoining of dynamical systems and deep learning has become a topic of great interest in recent years. In particular, neural differential equations (neural DEs) demonstrate that neural networks and differential equations are two sides of the same coin \citep{HaberRuthotto2017, neuralODE, chang2018antisymmetricrnn}. This connection has been pushed to the domain of graph learning \citep{BACCIU2020203, GNNsurvey}, forging the field of differential-equations inspired Deep Graph Networks (DE-DGNs)~\citep{GDE,GRAND,gravina_adgn,han2024from,gravina_swan}.

In this paper, we are interested in designing the information flow within a graph as a solution of a port-Hamiltonian system~\citep{van2000l2}, which is a general formalism for physical systems that allows for both conservative and non-conservative dynamics, with the aim of allowing flexible long-range propagation in DGNs. Indeed, long-range propagation is an ongoing challenge that limits the power of the Message-Passing Neural Network (MPNN) family~\citep{MPNN}, as %the capacity of MPNNs 
their capacity to transmit information between nodes exponentially decreases as %their 
the distance increases~\citep{alon2021oversquashing, diGiovanniOversquashing}. This prevents DGNs from effectively solving real-world tasks, \eg predicting anti-bacterial properties of peptide molecules~\citep{LRGB}.
While recent literature proposes various approaches to mitigate this issue, such as graph rewiring~\citep{DIGL, topping2022understanding, drew} and graph transformers \citep{graphtransformer, dwivedi2021generalization,wu2023difformer}, here we aim to address this problem providing a theoretically grounded framework through the prism of port-Hamiltonian-inspired DE-DGNs. Therefore, we propose \emph{port-Hamiltonian Deep Graph Network} (PH-DGN) a new message-passing scheme %inspired by port-Hamiltonian dynamics. By design, our method 
that, by design, introduces the flexibility to balance non-dissipative long-range propagation and non-conservative behaviors as required by the specific task at hand. Therefore, when using purely %Hamiltonian 
conservative dynamics, our method allows the preservation and propagation of long-range information by obeying the conservation laws. In contrast, when our method is used to its full extent, internal damping and additional forces can deviate from this purely conservative behavior, potentially increasing effectiveness in the downstream task. To the best of our knowledge, we are the first to propose a
port-Hamiltonian-inspired DE-DGNs. Leveraging the connection with Hamiltonian systems, we provide theoretical guarantees that information is conserved over time. Lastly, the general formulation of our approach can seamlessly incorporate any neighborhood aggregation function (\ie DGN), thereby endowing these methods with the distinctive properties of our PH-DGN. 

Our main contributions can be summarized as follows: (i) We introduce PH-DGN, a novel general DE-DGN inspired by port-Hamiltonian dynamics, which enables the balance and integration of non-dissipative long-range propagation and non-conservative behavior while seamlessly incorporating the most suitable aggregation function; (ii) We theoretically prove that, when pure %Hamiltonian 
conservative dynamic is employed, both the continuous and discretized versions of our framework allow for long-range propagation in the message passing flow, since node states retain their past; (iii) We introduce tools inspired by mechanical systems that deviate %the Hamiltonian dynamic from its conservative behavior
from such conservative behavior, thus facilitating a clear interpretation from the physics perspective; and (iv) We conduct extensive experiments to demonstrate the benefits of our method and the ability to stack thousands of layers. Our PH-DGN outperforms existing state-of-the-art methods on both synthetic and real-world tasks.

\section{Port-Hamiltonian Deep Graph Network}\label{sec:method}
%The structure should follow something similar to:
%1) a small introduction where we introduce the fact we are dealing with neural-odes based models/hamiltonian systems. 
%2) describe the setting we are dealing with, introducing the notation. Static graph $\mathcal{G}=(\mathcal{V}, \mathcal{E})$, node $u$’s state/representation is $\mathbf{x}_u \in\mathbb{R}^d$
%3) a paragraph super clear about what we are proposing in the paper and why it is relevant.
%4) the equations of our method. Ie eq 1,2,3
%5) at this point we can describe the framework in all its components from a general perspective. 
%6) We refer to the framework defined by Equation XX as <name> (H-DGN), …
%7) We used XX as discretisation method, but other methods can be used … (I am not sure if this should go to the end or not)
%7) paragraph{Non-dissipativity} in the paragraph we should put the theory regarding the non dissipative properties relevant for the framework
%9) paragraph{dissipative components}
% not all the problems require pure non-dissipativity thus it is possible to include dissipative components
% description of dampening and external force
%\noindent\rule[0.5ex]{\linewidth}{1pt}
We consider the problem of learning node embeddings for a graph $\mathcal{G}=(\mathcal{V},\mathcal{E})$, where $\mathcal{V}$ is a set of $n$ entities (the nodes) interacting through relations (\ie edges) in $\mathcal{E}\subseteq \mathcal{V}\times\mathcal{V}$. Each node $u\in \mathcal{V}$ is associated to state $\bfx_u(t)\in\mathbb{R}^d$, that is the representation of the node at time $t$. The term $\bfX(t)\in{\mathbb{R}^{n\times d}}$ is the matrix of all node states in graph $\mathcal{G}$.
%We consider the setting of computing node embeddings for a system of interacting entities, modeled by a static graph $\mathcal{G}=(\mathcal{V},\mathcal{E})$, by solving DE-DGNs with Hamiltonian dynamics. We represent with $\mathcal{V}$ the fixed set of $n$ entities (\ie nodes) alongside the set of relations (\ie edges), $\mathcal{E}\subseteq \mathcal{V}\times\mathcal{V}$. For each node $u\in \mathcal{V}$, we consider the state $\bfx_u(t)\in\mathbb{R}^d$, with $d$ features, that provides the representation of the node at time $t$. The term $\bfX(t)\in{\mathbb{R}^{n\times d}}$ is the matrix that contains the state of all nodes in the graph $\mathcal{G}$.

We introduce a new DE-DGN framework that designs the information flow within a graph as the solution of a port-Hamiltonian system~\citep{van2000l2}. 
Hamiltonian mechanics is a formalism for physical systems based on the Hamiltonian function $H(\mathbf{p},\mathbf{q},t)$, which represents the generalized energy of the system with position $\mathbf{q}$ and momentum $\mathbf{p}$. A classic example of a Hamiltonian system is that of a simple mass-spring pendulum, with mass $m$ attached to a spring with constant $k$ having position $q = x$ and momentum $p = m\dot{x}$. The Hamiltonian of the system is the total energy $H = K+P$ %$H = T+V$ 
where $K$ is the kinetic component $K=\frac{p^2}{2m} = \frac{1}{2}m\dot{x}^2$ and $P$ the spring potential component $P=\frac{1}{2}kx^2$. Hamilton's equations are then defined as:
\begin{equation}
        \dot{p}  = -\pdv{H}{q} = -kx, \quad \dot{q}  = \pdv{H}{p} = \frac{p}{m}, 
\end{equation}
from which we recover the well-known mass-spring pendulum equation $m\ddot{x} = -kx$. 
The Hamiltonian formalism allows for an easy description of the dynamics of a system based on its energy and provides theoretical results that will allow us to guarantee relevant properties on our system,
% and on the H-DGN as a model
such as energy preservation. It is widely used both in classical mechanics \citep{arnold2013hamiltonian} as well as in quantum mechanics \citep{griffiths2018quantum} due to its generality. In the port-Hamiltonian formulation, the system allows for energy exchange between subsystems and interaction with external environments. Therefore, port-Hamiltonian systems let us introduce non-conservative phenomena in the system, such as internal dampening $D(q)p$ and external forcing $F(q,t)$, acting on the momentum equation as $\dot{p} = -\pdv{H}{q} -D(q)p + F(q,t)$. %Thus, port-Hamiltonian dynamics deviate from the purely conservative trajectories of pure Hamiltonian systems.

Here, we show how the port-Hamiltonian formulation provides the backing to preserve and propagate long-range information between nodes in the absence of non-conservative behaviors, thus %, 
in adherence to the laws of conservation. The casting of the system in the more general, full port-Hamiltonian setting, then, introduces the possibility of 
trading non-dissipation with non-conservative behaviors when needed by the task at hand.
Our approach is general, as it can be applied to any message-passing DGN, and frames in a theoretically sound way the integration of non-dissipative propagation and non-conservative behaviors.
%Thus, we reconcile under a single theoretical and practical framework both non-dissipative propagation and non-conservative behaviors. The main goal of our work is therefore achieving preservation of \emph{long}-range information between nodes by adhering to the laws of conservation while introducing tools from mechanical systems to gauge the equilibrium between the two behaviors. 
In the following, we refer to %a \emph{Hamiltonian Deep Graph Network} (H-DGN) when the framework is instantiated to a purely conservative DGN, and \emph{port-Hamiltonian Deep Graph Network} (PH-DGN) otherwise. 
our framework as \emph{port-Hamiltonian Deep Graph Network} (PH-DGN).
Figure~\ref{fig:model} shows our high-level architecture hinting at how the initial state of the system is propagated up to the terminal time $T$. While the state %can 
evolves %while 
preserving energy, internal dampening and additional forces (in the following denoted as driving forces) can intervene to alter its conservative trajectory.

\begin{figure}[h]
    \centering
    \includegraphics[width=\textwidth]{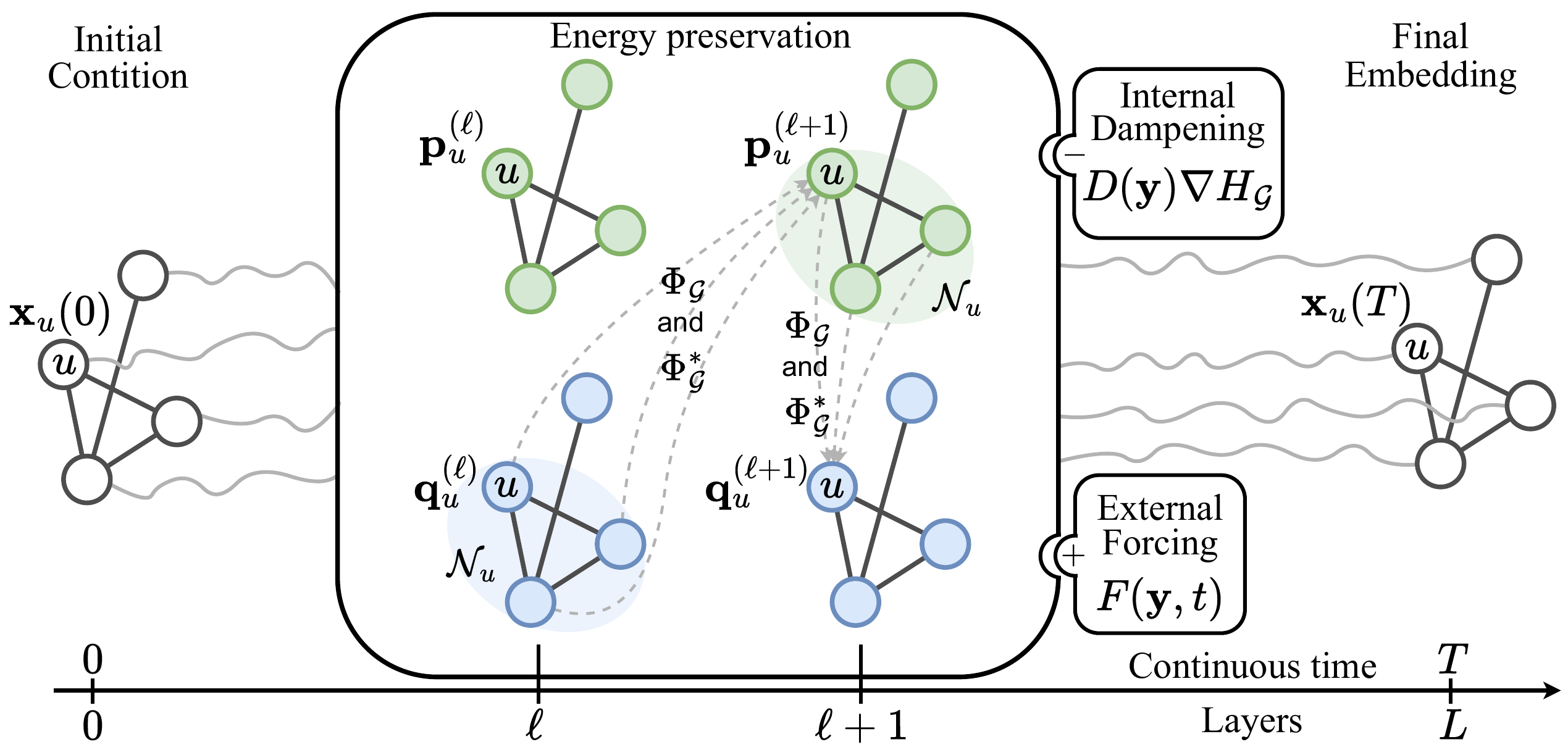}
    %\caption{A high-level overview of the proposed framework, summarizing how the initial state of the system is propagated  by means of energy preservation up until the terminal time $T$ is reached. Additional forces can be easily introduced to drive the system's state trajectory when a pure conservative behavior is not needed. While the gray waves between initial and final conditions represent the continuous evolution of the system, the box contains the Symplectic Euler discretization of the system.  \revise{improve}}
    \caption{A high-level overview of the proposed port-Hamiltonian Deep Graph Network. It summarizes how the initial node state $\bfx_u(0)$ is propagated by means of energy preservation up until the terminal time $T$ (\ie layer $L$), $\bfx_u(T)$. 
    %Additional forces can be easily introduced to drive the system's state trajectory when pure conservative behavior is too restrictive.
    % Additional forces (\ie dampening $D(\mathbf{y})$ and external control $F(\mathbf{y},t)$) can be easily introduced to drive the system's state (\ie $\mathbf{y}$) trajectory when pure conservative behavior is too restrictive.
    While the global system's state $\mathbf{y}$ evolves preserving energy, external forces (\ie dampening $D(\mathbf{y})$ and external control $F(\mathbf{y},t)$) can intervene to alter its conservative trajectory.
    % By adding additional forces, such as dampening $D(\mathbf{y})$ and external control $F(\mathbf{y},t)$, the global system's state $\mathbf{y}$ trajectory is driven to lose or gain energy if needed.
    %While t
    The gray trajectories between the initial and final states represent the continuous evolution of the system. %, 
    The discrete message passing step from layer $\ell$ to $\ell +1$, which is shown in middle of the figure, is given by the coupling of coordinates $\bfq$ and momenta $\bfp$ in terms of neighborhood aggregation $\Phi_{\mathcal{G}}$ and %its energy correction $\Phi^*_{\mathcal{G}}$ accounting for the influence to adjacent neighbors. %Following this, the general MPNN structure preserves the energy in each step to allow for stacking arbitrary deep layers. \revise{approve}
    influence to adjacent neighbors $\Phi^*_{\mathcal{G}}$. Self-influence on both $\bfq$ and $\bfp$ from the previous step $\ell$ are omitted for simplicity.
    }
    \label{fig:model}
\end{figure}

% The forward pass of a graph neural network is hence computing a numerical integration of an initial value problem (Cauchy problem). To be precise, the initial features of a node $\xk^{(0)} \in \Rd$ are subjected to a neural flow $\phi_{\theta}(\xk^{(0)},T)$ until a specific terminal time $T>0$, to retrieve the final embedding $\xk^{(T)}$ to be further processed in a downstream task. Instead of solving this problem by means of neural ODE solvers \citep{chen2018neural}, we discretize the system to obtain a classical message-passing neural network. %This setup allows us to study both the continuous and discrete perspective on the model and its properties in the forward and backward pass.

% Within this framework, we aim to enforce a bias on the dynamics to follow only trajectories that preserve the global energy in the features, ultimately preventing over-smoothing behavior and unlocking arbitrary deep layers. The Hamiltonian formalism, fundamental to classical and quantum mechanics, is the standard tool to achieve this \citep{Hairer2006-ks}. To this end, we introduce the first nonlinear message passing that adheres to energy conservation by design while studying the theoretical properties of the vector field driving the features in the forward pass and the gradients in the backward pass. Finally, by leveraging the conservative model, dampening and external forces are added to the dynamics to obtain a port-Hamiltonian model that can learn how much the trajectories should deviate from the conservative regime to optimally solve the downstream task. 

In the following, we %first derive our method from a purely Hamiltonian system and prove its conservative behavior theoretically. We then complete our method by incorporating non-conservative behaviors through port-Hamiltonian dynamics.
present our method in a bottom-up fashion. Thus, we start by deriving our PH-DGN from a purely conservative system, proving its conservative behavior theoretically, and then extend it by integrating non-conservative behaviors.

%\paragraph{Hamiltonian-inspired message passing.}
\paragraph{Conservative message passing.}
To inject a purely conservative behavior inspired by port-Hamiltoninan dynamics %the Hamiltonian dynamics
into a DE-DGN, we start %with the global perspective of the system:
by considering the graph Hamiltonian system described by the following ODE
\begin{equation}\label{eq:global_gde}
    \odv{\bfy(t)}{t} = \Jskew \nabla H_{\mathcal{G}}(\bfy(t)),%\quad \bfy(0) = \bfy_0,
\end{equation}
for time $t\in[0, T]$ and subject to an initial condition $\bfy(0) = \bfy^0$. %\revise{$\bfy(t)=\mathrm{vec}(\begin{pmatrix}\bfX_{:, :n/2}(t) \\ \bfX_{:, n/2:}(t)\end{pmatrix}) \in \mathbb{R}^{n d}$} 
The term $\bfy(t)\in \mathbb{R}^{n d}$ is the vectorized view of $\bfX(t)$ that represents the global state of the graph at time $t$, with an even dimension $d$, following the notation of Hamiltonian systems \citep{Hairer2006-ks}. %and
$H_{\mathcal{G}}: \mathbb{R}^{nd} \to \mathbb{R}$ is a neural-parameterized Hamiltonian function capturing the energy of the system. %The crucial difference to gradient flows is t
The skew-symmetric matrix $\Jskew = \begin{pmatrix}\bf0 & -\bfI_{nd/2}\\\bfI_{nd/2} & \bf0\end{pmatrix}$,  with $\bfI_{nd/2}$ being the identity matrix of dimension $nd/2$, reflects a rotation of the gradient $\nabla H_{\mathcal{G}}$ and couples the position and momentum of the system.% We note that the global feature matrix $\bfX(t)$ is vectorized (and represented as $\bfy(t)$) and the state dimension $d$ is even to follow notation from Hamiltonian systems \citep{Hairer2006-ks}.

Since we are dealing with a port-Hamiltonian system, the global state $\bfy(t)$ is composed by two components which are the momenta, $\bfp(t) = (\bfp_1(t), \dots, \bfp_n(t))$, and the position, $\bfq(t) = (\bfq_1(t), \dots, \bfq_n(t))$, of the system, thus $\bfy(t) = (\bfp(t), \bfq(t))$. Therefore, from the node (local) perspective, each node state is expressed as $\bfx_u(t) = (\bfp_u(t), \bfq_u(t))$. %The initial conditions $(\bfp(0), \bfq(0))$ needed to solve the ODE are another degree of freedom that we can control.
%For the ease of readability, in the following we drop the time $t$.

Under this local node-wise perspective, \cref{eq:global_gde} can be equivalently written as 
\begin{equation}\label{eq:local_gde}
    % \odv{\xk(t)}{t} = \Jkskew \nabla_{\xk} H_{\mathcal{G}}(\bfy(t)) \Leftrightarrow 
    \odv{\bfx_u(t)}{t} = \begin{pmatrix}
        \dot{\bfp}_u(t)\\\dot{\bfq}_u(t)
    \end{pmatrix} = \begin{pmatrix}
        -\nabla_{\bfq_u}H_{\mathcal{G}}(\bfp(t),\bfq(t))\\\phantom{-}\nabla_{\bfp_u}H_{\mathcal{G}}(\bfp(t),\bfq(t))
    \end{pmatrix},\,\, \forall u \in \mathcal{V}.
\end{equation}
With the aim of designing a purely conservative port-Hamiltonian system (\ie driving forces are null, reducing it to a Hamiltonian system) based on message passing, we instantiate the Hamiltonian function $H_\mathcal{G}$ as
\begin{equation}\label{eq:hamiltonian}
    H_{\mathcal{G}}(\bfy(t)) = \sum_{u \in \mathcal{V}}\Tilde{\sigma}(\bfW\xk(t) + \Phi_{\mathcal{G}}(\{\bfx_v(t)\}_{v\in\mathcal{N}_u}) + \bfb)^{\top}\mathbf{1}_d,
\end{equation}
where $\Tilde{\sigma}(\cdot)$ is the anti-derivative of a monotonically non-decreasing activation function $\sigma$, $\mathcal{N}_u$ is the neighborhood of node $u$, and $\Phi_{\mathcal{G}}$ is a neighborhood aggregation permutation-invariant function. Terms $\bfW\in\mathbb{R}^{d\times d}$ and $\bfb\in\mathbb{R}^d$ are the weight matrix and the bias vector, respectively, containing the trainable parameters of the system; $\mathbf{1}_d$ denotes a vector of ones of length $d$.

By computing the gradient $\nabla_{\bfx_u}H_{\mathcal{G}}(\bfy(t))$ we obtain an explicit version of \cref{eq:local_gde}, which can be rewritten
%. Therefore, we can rewrite \cref{eq:local_gde} %as the node-wise dynamics 
from the node-wise perspective
of the information flow as the sum of the self-node evolution influence and its neighbor's evolution influence (referred to as $\Phi_\mathcal{G}^*$). More formally, for each node $u\in\mathcal{V}$
\begin{multline}\label{eq:local_dyn}
\odv{\xk(t)}{t} = \Jkskew\Biggl[\bfW^{\top}\sigma(\bfW\xk(t) + \Phi_{\mathcal{G}}(\{\bfx_v(t)\}_{v\in\mathcal{N}_u}) + \bfb)
 \\+\underbrace{\sum_{v \in \mathcal{N}_u\cup \{u\}}{\pdv{\Phi_{\mathcal{G}}(\{\bfx_v(t)\}_{v\in\mathcal{N}_u})}{\xk{(t)}}^\top\sigma(\bfW\xk(t) + \Phi_{\mathcal{G}}(\{\bfx_v(t)\}_{v\in\mathcal{N}_u}) + \bfb)}}_{\Phi^*_\mathcal{G}}\Biggr].
\end{multline}
Here, $\Jkskew$ has the same structure as $\Jskew$, but the identity blocks have dimension $d/2$ as it is applied to the single node $u$%, and $\Phi_\mathcal{G}^*$ is the energy correction term allowing for energy preservation
%. We refer to the system in \cref{eq:local_dyn} as \emph{Hamiltonian Deep Graph Network} (H-DGN) as it adheres solely to conservation laws.
. Notice that the system in \cref{eq:local_dyn} implements a Hamiltonian system, so it adheres solely to conservation laws.

Now, given an initial condition $\bfx_u(0)$ for a node $u$, and the other nodes in the graph, the ODE defined in \cref{eq:local_dyn} is a continuous information processing system over a graph governed by conservation laws that computes the final node representation $\bfx_u(T)$. This is visually summarized in Figure~\ref{fig:model} when dampening and external forcing are excluded.

Moreover, we observe that the general formulation of the neighborhood aggregation function $\Phi_\mathcal{G}(\{\bfx_v(t)\}_{v\in\mathcal{N}_u})$ allows implementing any function that aggregates nodes (and edges) information. Therefore, $\Phi_\mathcal{G}(\{\bfx_v(t)\}_{v\in\mathcal{N}_u})$ allows enhancing a standard DGN with our Hamiltonian conservation. As a demonstration of this, in Section~\ref{sec:experiments}, we experiment with two neighborhood aggregation functions, which are the classical GCN aggregation \citep{GCN} and
\begin{equation}\label{eq:simple_agg}
\Phi_\mathcal{G}(\{\bfx_v(t)\}_{v\in\mathcal{N}_u}) = \sum_{v\in\mathcal{N}_u} \bfV\bfx_v(t).
\end{equation}
Further details about %H-DGN discretization 
the discretization of the purely conservative PH-DGN are in Appendix~\ref{ap:discretization}.

%\paragraph{H-DGN allows long-range propagation.}
\paragraph{Purely conservative PH-DGN allows long-range propagation.}

We show that our PH-DGN in \cref{eq:local_dyn} adheres to the laws of conservation, allowing long-range propagation in the message-passing flow. 

As discussed in \citep{HaberRuthotto2017, gravina_adgn}, non-dissipative propagation is directly linked to the sensitivity of the solution of the ODE to its initial condition, thus to the stability of the system. Such sensitivity is controlled by the Jacobian's eigenvalues of \cref{eq:local_dyn}. Under the assumption that the Jacobian varies sufficiently slow over time and its eigenvalues are purely imaginary, then the initial condition is effectively propagated into the final node representation, making the system both stable and non-dissipative, thus allowing for %arbitrary-
long-range propagation.
%In the following theorem, we prove that H-DGN possesses purely imaginary eigenvalues.
\begin{theorem}\label{th:im_eig}
    The Jacobian matrix of the system defined by the ODE in \cref{eq:local_dyn} possesses eigenvalues purely on the imaginary axis, \ie $$\operatorname{Re}\left(\lambda_i\left(\pdv{}{\bfx_u}\Jkskew\nabla_{\bfx_u}H_{\mathcal{G}}(\bfy(t))\right)\right) = 0,\quad \forall i,$$
where $\lambda_i$ represents the $i$-th eigenvalue of the Jacobian.
\end{theorem}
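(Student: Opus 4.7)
The plan is to recognize the Jacobian as a product $\Jkskew\,\bfS$ of a skew-symmetric and a symmetric positive-semidefinite matrix, then invoke the standard spectral fact that such products have purely imaginary spectrum.

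First I would compute the Jacobian. Since the RHS of \cref{eq:local_dyn} equals $\Jkskew\nabla_{\xk}H_{\mathcal{G}}(\bfy(t))$, differentiating once more in $\xk$ yields $\Jkskew\,\bfS$, where $\bfS$ is the $(u,u)$-diagonal block of the Hessian of $H_{\mathcal{G}}$. By Schwarz's theorem $\bfS=\bfS^{\top}$.

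Second I would argue $\bfS\succeq 0$ by showing that $H_{\mathcal{G}}$ is convex in $\bfy$. Since $\sigma$ is monotonically non-decreasing and $\Tilde{\sigma}'=\sigma$, the antiderivative $\Tilde{\sigma}$ is convex. Each scalar contribution $\Tilde{\sigma}\bigl((\bfW\bfx_v+\Phi_{\mathcal{G}}(\{\bfx_w\}_{w\in\mathcal{N}_v})+\bfb)_{i}\bigr)$ appearing in \cref{eq:hamiltonian} is therefore a convex scalar function composed with an affine map of $\bfy$ (using linearity in $\bfy$ of the aggregators of interest, such as GCN and \cref{eq:simple_agg}). Sums of convex functions are convex, so $H_{\mathcal{G}}$ is convex, its full Hessian is PSD, and in particular its principal diagonal block $\bfS$ is PSD.

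Third I would conclude via a direct eigenvalue calculation. Suppose $\Jkskew\bfS\bfv=\lambda\bfv$ for some $\bfv\neq 0$. Multiplying on the left by $\bfv^{*}\bfS$ gives
\begin{equation*}
\lambda\,\bfv^{*}\bfS\bfv \;=\; \bfv^{*}\bfS\Jkskew\bfS\bfv.
\end{equation*}
The matrix $\bfS\Jkskew\bfS$ is real skew-symmetric, since $(\bfS\Jkskew\bfS)^{\top}=\bfS\Jkskew^{\top}\bfS=-\bfS\Jkskew\bfS$, so $\bfv^{*}\bfS\Jkskew\bfS\bfv$ is purely imaginary, while $\bfv^{*}\bfS\bfv\in\mathbb{R}_{\geq 0}$ by positive semi-definiteness of $\bfS$. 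If $\bfv^{*}\bfS\bfv>0$, then $\lambda\in i\mathbb{R}$; otherwise $\bfS\bfv=0$ (since $\bfS\succeq 0$), and $\lambda\bfv=\Jkskew\bfS\bfv=0$ forces $\lambda=0$. In either case $\operatorname{Re}(\lambda)=0$.

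The main obstacle I anticipate is the rank-deficient case of $\bfS$. The slickest argument, establishing similarity of $\Jkskew\bfS$ to the manifestly skew-symmetric matrix $\bfS^{1/2}\Jkskew\bfS^{1/2}$, applies directly only when $\bfS\succ 0$; the PSD-only case requires either the dichotomy above or a perturbation to $\bfS+\epsilon\bfI$ combined with continuity of eigenvalues as $\epsilon\to 0^{+}$.
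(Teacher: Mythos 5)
Your proposal is correct, and it is not the paper's argument --- it is a strictly more careful one. The paper also factors the Jacobian as (symmetric Hessian) times (skew-symmetric matrix), but then takes a left eigenvector, derives $\bfv^*\bfA\bfv=\lambda\,\bfv^*\bfB^{-1}\bfv$, and concludes $\lambda$ is imaginary because the left side is real while $\bfv^*\bfB^{-1}\bfv$ is imaginary --- using only symmetry of the Hessian and never its sign. That step is incomplete exactly where you anticipated trouble: if both quadratic forms vanish the identity reads $0=\lambda\cdot 0$ and says nothing, and indeed the bare claim ``symmetric times skew-symmetric has purely imaginary spectrum'' is false for indefinite symmetric factors (take $\bfA=\operatorname{diag}(1,-1)$ and $\bfB=\left(\begin{smallmatrix}0&-1\\1&0\end{smallmatrix}\right)$; then $\bfA\bfB$ has eigenvalues $\pm1$). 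Your second step --- positive semi-definiteness of the Hessian block, obtained from convexity of $\Tilde{\sigma}$ or equivalently from the explicit form $\bfW^{\top}\operatorname{diag}(\sigma')\bfW+\sum_{v}\bfV^{\top}\operatorname{diag}(\sigma')\bfV$ with $\sigma'\ge 0$ --- is therefore not optional polish but the ingredient that makes the theorem true, and your dichotomy ($\bfv^*\bfS\bfv>0$ gives $\lambda\in i\mathbb{R}$, while $\bfv^*\bfS\bfv=0$ forces $\bfS\bfv=0$ and hence $\lambda=0$) closes the degenerate case the paper skips. The one caveat is that your convexity argument requires the aggregator $\Phi_{\mathcal{G}}$ to be affine in the node states (as in \cref{eq:simple_agg} or the GCN aggregation); the theorem is stated for a general $\Phi_{\mathcal{G}}$, for which the Hessian need not be PSD, but that restriction appears genuinely necessary, so your proof delimits the statement's actual scope rather than falling short of it.
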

We report the proof in Appendix~\ref{ap:proof_im_eig}. %Theorem~\ref{th:im_eig} proves that H-DGN allows a stable and non-dissipative behavior. %in the information propagation. 
Then, we take a further step and strengthen such result by proving that the nonlinear vector field defined by conservative PH-DGN is divergence-free, thus preserving information within the graph during the propagation process and helping to maintain informative node representations.
In other words, the PH-DGN's dynamics possess a non-dissipative behavior independently of both the assumption regarding the slow variation of the Jacobian and the position of the Jacobian eigenvalues on the complex plane. 
\begin{theorem}\label{th:divergence}
    The autonomous Hamiltonian $H_{\mathcal{G}}$ of the system in \cref{eq:local_dyn} with learnable weights shared across time stays constant at the energy level specified by the initial value $H_{\mathcal{G}}(\bfy(0))$, \ie
    % \begin{equation}
    %     \odv{H_{\mathcal{G}}}{t} = 0.
    % \end{equation}
    % H-DGN also possesses a divergence-free nonlinear vector field
    $\mathrm{d}H_{\mathcal{G}}/ \mathrm{d}t = 0$, and possesses a divergence-free nonlinear vector field
    \begin{equation}
    \nabla \cdot \Jkskew \nabla_{\xk}H_{\mathcal{G}}(\mathbf{y}(t)) = 0,\quad t\in[0,T].
    %\nabla \cdot \Jkskew \nabla_{\xk}H_{\mathcal{G}}(\mathbf{y}(t)) = 0\quad,t\in[0,T].
    \end{equation}
\end{theorem}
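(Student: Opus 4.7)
The plan is to handle the two claims separately, since they are classical consequences of the symplectic structure of Hamiltonian systems but need to be verified against the specific parameterization used in PH-DGN.

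For the conservation statement $\mathrm{d}H_{\mathcal{G}}/\mathrm{d}t = 0$, I would start from the chain rule along trajectories of \cref{eq:global_gde}:
\begin{equation*}
\odv{H_{\mathcal{G}}(\mathbf{y}(t))}{t} = \nabla H_{\mathcal{G}}(\mathbf{y}(t))^{\top}\odv{\mathbf{y}(t)}{t} = \nabla H_{\mathcal{G}}(\mathbf{y}(t))^{\top}\Jskew\nabla H_{\mathcal{G}}(\mathbf{y}(t)).
\end{equation*}
Because $\Jskew$ is skew-symmetric by construction, the quadratic form $v^{\top}\Jskew v$ vanishes for every $v$, so the right-hand side is identically zero. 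The autonomy of $H_{\mathcal{G}}$ (no explicit $t$-dependence, since the weights $\bfW,\bfb,\bfV$ are shared across time) is what removes the extra $\partial H_{\mathcal{G}}/\partial t$ term that would otherwise appear.

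For the divergence-free claim, I would work with the local node-wise block $\Jkskew\nabla_{\xk}H_{\mathcal{G}}$ appearing in \cref{eq:local_dyn}. Writing $\xk = (\bfp_u,\bfq_u)$ and using the block form of $\Jkskew$, the vector field at node $u$ decomposes as $(-\nabla_{\bfq_u}H_{\mathcal{G}},\,\nabla_{\bfp_u}H_{\mathcal{G}})$. Taking the divergence with respect to $\xk$ gives
\begin{equation*}
\nabla_{\xk}\cdot\Jkskew\nabla_{\xk}H_{\mathcal{G}} = -\sum_{i=1}^{d/2}\pdv{}{p_{u,i}}\pdv{H_{\mathcal{G}}}{q_{u,i}} + \sum_{i=1}^{d/2}\pdv{}{q_{u,i}}\pdv{H_{\mathcal{G}}}{p_{u,i}},
\end{equation*}
which is zero by equality of mixed partials (Schwarz's theorem), provided $H_{\mathcal{G}}$ is $C^{2}$ in its arguments. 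This smoothness is inherited from our choice in \cref{eq:hamiltonian}, since $\Tilde{\sigma}$ is the antiderivative of a monotonically non-decreasing activation $\sigma$, hence twice continuously differentiable whenever $\sigma$ is $C^{1}$ (as for $\tanh$, softplus, GELU, etc.), and $\Phi_{\mathcal{G}}$ is taken smooth in its arguments. The same computation goes through for the full global field $\Jskew\nabla H_{\mathcal{G}}$ over all coordinates of $\bfy$.

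The only mild subtlety—and therefore the only place requiring care—is ensuring the hypothesis of Schwarz's theorem across the message-passing coupling: the Hamiltonian in \cref{eq:hamiltonian} sums node-wise terms that each depend on the neighbors' states through $\Phi_{\mathcal{G}}$, so cross-node second derivatives appear when we expand $\partial^{2}H_{\mathcal{G}}/\partial p_{u,i}\partial q_{u,i}$. These cross terms, however, do not affect the divergence at node $u$, which only sums the diagonal mixed partials in the $(\bfp_u,\bfq_u)$ block; they do matter for the global computation, but there too the mixed partials commute pairwise, so the cancellation is term-by-term. Apart from this bookkeeping, the argument reduces to the textbook observation that Hamiltonian vector fields are symplectic and hence volume-preserving (Liouville's theorem), which in turn—together with $\mathrm{d}H_{\mathcal{G}}/\mathrm{d}t=0$—gives the desired non-dissipative behavior.
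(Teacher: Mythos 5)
Your proposal is correct and follows essentially the same route as the paper's proof: the chain rule combined with the skew-symmetry of $\Jskew$ gives $\mathrm{d}H_{\mathcal{G}}/\mathrm{d}t=0$, and the divergence is written as the sum of mixed partials $-\partial^2 H_{\mathcal{G}}/\partial p_{u,i}\partial q_{u,i}+\partial^2 H_{\mathcal{G}}/\partial q_{u,i}\partial p_{u,i}$, which cancel by symmetry of the Hessian. Your added remarks on the $C^2$ regularity needed for Schwarz's theorem and on the cross-node coupling are sensible elaborations of hypotheses the paper leaves implicit, but they do not change the argument.
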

See proof in Appendix~\ref{ap:proof_divergence}. This allows us to interpret the system dynamics as purely rotational, without energy loss, and demonstrates that our PH-DGN is governed by conservation laws when driving forces are null. 
% \revise{We need an autonomous Hamiltonian in order to establish constant energy, hence the weights need to be shared over time/layers. Layer-dependent weights would result in a nonautonomous Hamiltonian, still possessing a symplectic flow, \ie the next statement still holds \citep{galimberti2023hamiltonian}. In the discrete setting we need to show the extended system (with time) is still symplectic. Port-Hamiltonian directly able to have layer-dep. weights. Maybe we also use this as justification for adding this step.}

%After having observed the stability and non-dissipative properties of H-DGN, we proceed by providing a sensitivity analysis, which follows \cite{chang2018antisymmetricrnn,gravina_adgn,galimberti2023hamiltonian}, to prove that the non-dissipative behavior of our H-DGN effectively allows for %arbitrary
We now provide a sensitivity analysis, following \cite{chang2018antisymmetricrnn,gravina_adgn,galimberti2023hamiltonian}, %to prove that H-DGN non-dissipative behavior effectively allows %arbitrary
%long-range information propagation. 
to prove that our conservative PH-DGN effectively allows for long-range information propagation. Specifically, we measure the sensitivity of a node state after an arbitrary time $T$ of the information propagation with respect to its previous state, $\|\partial\bfx_u(T)/\partial\bfx_u(T-t)\|$. In other words, we compute the backward sensitivity matrix (BSM). We now provide a theoretical bound of our PH-DGN, with its proof in Appendix~\ref{ap:proof_lowerb}.
\begin{theorem}\label{th:lowerb}
Considering the continuous system defined by \cref{eq:local_dyn}, the backward sensitivity matrix (BSM) is bounded from below:
$$\left\|\pdv{\xk(T)}{\xk(T-t)}\right\| \ge 1,\quad \forall t \in [0,T].$$
\end{theorem}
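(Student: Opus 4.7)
The plan is to analyze the backward sensitivity matrix $M(t) := \partial\xk(T)/\partial\xk(T-t)$ via its variational ODE and exploit the Hamiltonian (symplectic) structure to show that its determinant is preserved along trajectories, which in turn forces its spectral norm to be at least one.

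First, I would derive the evolution equation for $M$ along the flow. By the semigroup property of the flow generated by \cref{eq:local_dyn} and linearization of the vector field around the trajectory, the standard local sensitivity analysis (as used in \citet{chang2018antisymmetricrnn,gravina_adgn}) yields
\begin{equation*}
    \dot{M}(t) \;=\; M(t)\,\Jkskew\,\nabla^{2}_{\xk\xk} H_{\mathcal{G}}(\mathbf{y}(T-t)), \qquad M(0) = \mathbf{I},
\end{equation*}
where $\nabla^{2}_{\xk\xk}H_{\mathcal{G}}$ denotes the $d\times d$ Hessian block of $H_{\mathcal{G}}$ associated with node $u$.

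Next, I would observe that the local Jacobian $\Jkskew\,\nabla^{2}_{\xk\xk}H_{\mathcal{G}}$ is trace-free. Since $\Jkskew^{\top}=-\Jkskew$ and the Hessian is symmetric, the trace identity $\mathrm{tr}(\Jkskew S) = \mathrm{tr}((\Jkskew S)^{\top}) = \mathrm{tr}(S^{\top}\Jkskew^{\top}) = -\mathrm{tr}(S\Jkskew) = -\mathrm{tr}(\Jkskew S)$ forces $\mathrm{tr}(\Jkskew\nabla^{2}_{\xk\xk}H_{\mathcal{G}})=0$, which is precisely the local counterpart of the divergence-free property of Theorem~\ref{th:divergence}. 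Applying Jacobi's formula to the variational ODE then gives
\begin{equation*}
    \frac{d}{dt}\det M(t) \;=\; \det M(t)\cdot \mathrm{tr}\!\left(\Jkskew\,\nabla^{2}_{\xk\xk}H_{\mathcal{G}}(\mathbf{y}(T-t))\right) \;=\; 0,
\end{equation*}
so $\det M(t) = \det M(0) = 1$ for all $t\in[0,T]$. Finally, the spectral norm coincides with the largest singular value $\sigma_{\max}(M)$, and the product of all singular values equals $|\det M(t)| = 1$; since the $\sigma_{i}$ are nonnegative, they cannot all be strictly smaller than one, so $\sigma_{\max}(M)\ge 1$ and $\|M(t)\|\ge 1$.

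The step I expect to be most delicate is the first one: isolating a clean variational equation for the node-$u$ sub-block despite the coupling of $\xk$ to its neighbors through $H_{\mathcal{G}}$. This is dealt with by the now-standard local linearization argument for DGN sensitivity analysis; once that is accepted, the trace-vanishing identity inherited from the Hamiltonian skew/symmetric structure carries the rest of the proof without further computation.
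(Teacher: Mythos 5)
Your proof is correct and reaches the stated bound, but it takes a genuinely different route from the paper's after the common starting point. Both arguments hinge on the same node-wise variational equation for the BSM (the paper's Lemma~\ref{lemma:time_bsm_self_loop}, whose generator is $F_u=\nabla^2_{\xk}H_{\mathcal{G}}\,\Jkskew^{\top}$; your version differs only by the Jacobian layout convention), so the "delicate step" you flag is exactly the step the paper isolates into that lemma, and your proof inherits whatever care it requires. From there the paper proves the \emph{stronger} invariant that the BSM is symplectic, $\Psi_u^{\top}\Jkskew\Psi_u=\Jkskew$ for all $t$, by an explicit block-matrix cancellation using $\Jskew^{\top}L_u^{\top}=-L_u^{\top}\Jkskew$, and then gets the bound from sub-multiplicativity, $1=\|\Jkskew\|\le\|\Psi_u\|^2\|\Jkskew\|$. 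You instead prove only the weaker invariant $\det\Psi_u\equiv 1$ via Jacobi's formula and the trace identity $\operatorname{tr}(S_u\Jkskew^{\top})=0$ (symmetric times skew-symmetric), i.e.\ the node-local Liouville theorem already implicit in Theorem~\ref{th:divergence}, and conclude from the singular-value product. Your route is more elementary and makes the connection to the divergence-free statement explicit; what the paper's symplecticity invariant buys in exchange is the discrete counterpart, Theorem~\ref{th:lowerb_discrete}, where the symplectic Euler scheme preserves $\Psi^{\top}\Jkskew\Psi=\Jkskew$ exactly and the identical norm argument carries over verbatim, whereas the determinant route would require re-establishing volume preservation for the discretized map. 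One small polish point: your singular-value argument directly covers the spectral and Frobenius norms; to recover the paper's claim for an arbitrary sub-multiplicative norm, use $\|M\|\ge\rho(M)\ge|\det M|^{1/d}=1$.
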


We note that since $\partial\xk(T)/ \partial\xk(T-t) \in\mathbb{R}^{d\times d}$, our statement is valid for all sub-multiplicative matrix norms, \eg p-norm and Frobenius norm. The result of Theorem~\ref{th:lowerb} indicates that the gradients in the backward pass do not vanish, enabling the effective propagation of previous node states through successive transformations to the final nodes' representations. Therefore, whenever driving forces are null, PH-DGN has a conservative message passing, where the final representation of each node retains its complete past. 
We observe that Theorem~\ref{th:lowerb} holds even during discretization when the Symplectic Euler method is employed (see Appendix~\ref{ap:discretization}).

To give the full picture of the time dynamics of the gradients, we present a similar analysis and provide an upper bound of the BSM in~\cref{th:upperb}. Recently, \cite{topping2022understanding, diGiovanniOversquashing} proposed to evaluate the long-range propagation ability of a model by measuring the sensitivity of the node embedding after $\ell$ layers with respect to the input of another node, \ie $\|\partial \bfx_u^{(\ell)} / \partial\bfx_v^{(0)}\|$, bounding such a measure on a MPNN:
\begin{equation}\label{eq:mpnn_oversquashing_1}
    \left\|\frac{\partial\mathbf{x}_v^{(\ell)}}{\partial\mathbf{x}_u^{(0)}}\right\|_{L_1} \leq (c_\sigma w d)^\ell((c_r\mathbf{I}+c_a\mathbf{A})^\ell)_{vu}
\end{equation}  

where $c_\sigma$ is the Lipschitz constant of non linearity $\sigma$, $w$ is the maximal entry-value over all weight matrices, $d$ is the embedding dimension, and $c_r$ and $c_a$ being the weighted contributions of the residual and aggregation term, respectively. Following a similar analysis, in~\cref{th:upper_crossnode} we provide a bound for our PH-DGN when Symplectic Euler method is used as discretization method.

\begin{theorem}\label{th:upper_crossnode}
Considering our PH-DGN discretized via Symplectic Euler method (\cref{eq:discretized_local_hamil_p,eq:discretized_local_hamil_q}), with neighborhood aggregation function of the form $\Phi_{\mathcal{G}} = \sum_{v \in \mathcal{N}_u}\mathbf{V}\bfx_v$, then
%     Consider the Symplectic Euler scheme in \cref{eq:discretized_local_hamil_p,eq:discretized_local_hamil_q}. If $\sigma$ is a non-linear function with bounded derivative, i.e. $\exists c_{\sigma} > 0, |\sigma'(x)| \le c_{\sigma}$, and the neighborhood aggregation function is of the form $\Phi_{\mathcal{G}} = \sum_{v \in \mathcal{N}_u}\mathbf{V}\bfx_v$, the sensitivity matrix between two nodes is upper bounded by:
\begin{equation}
        \left\|\pdv{\xk^{(\ell)}}{\xh^{(0)}}\right\|_{L_1} \leq (dwNc_{\sigma})^{\ell}((w\mathbf{I} + w(N+1)\mathbf{A})^\ell)_{uv},
\end{equation}
% where $w = \max \{|\bfW|, |\bfV|\}$ and $N = \max_{u}|\mathcal{N}_u|$. Furthermore, our upper bound on the right-hand side of the inequality is \textbf{at least} $N^{\ell}$ times bigger than the one in \cite{diGiovanniOversquashing}.
where $N = \max_{u}|\mathcal{N}_u|$, and $c_r=c_a=1$ for simplicity.
\end{theorem}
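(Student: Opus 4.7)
}
The plan is to follow the template of the MPNN sensitivity bound in \cref{eq:mpnn_oversquashing_1}: first bound the one-step Jacobian entrywise by a non-negative matrix of the form $dwNc_\sigma(w\mathbf{I}+w(N+1)\mathbf{A})$, and then iterate $\ell$ times using submultiplicativity of the induced $L_1$ norm together with the entrywise monotonicity of products of non-negative matrices, so that the composition matches the $(u,v)$ entry of $(w\mathbf{I}+w(N+1)\mathbf{A})^\ell$ scaled by $(dwNc_\sigma)^\ell$.

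First I would write out the Symplectic Euler step $(\bfp_u^{(\ell)},\bfq_u^{(\ell)})\mapsto(\bfp_u^{(\ell+1)},\bfq_u^{(\ell+1)})$ from \cref{eq:discretized_local_hamil_p,eq:discretized_local_hamil_q} by substituting the linear aggregator $\Phi_{\mathcal{G}}(\{\bfx_v\}_{v\in\mathcal{N}_u})=\sum_{v\in\mathcal{N}_u}\bfV\bfx_v$ into \cref{eq:local_dyn}. Since $\partial\Phi_{\mathcal{G}}(\{\bfx_v\}_{v\in\mathcal{N}_w})/\partial\bfx_u=\bfV$ precisely when $u\in\mathcal{N}_w$, the update reduces to
\begin{equation*}
\bfx_u^{(\ell+1)} = \bfx_u^{(\ell)} + \Jkskew\Bigl[\bfW^\top \sigma_u^{(\ell)} + \textstyle\sum_{w\in\mathcal{N}_u}\bfV^\top \sigma_w^{(\ell)}\Bigr],
\end{equation*}
where $\sigma_w^{(\ell)}=\sigma(\bfW\bfx_w^{(\ell)}+\sum_{z\in\mathcal{N}_w}\bfV\bfx_z^{(\ell)}+\bfb)$ (the Symplectic-Euler interleaving only changes which component is updated first, which is immaterial for the $L_1$ bound).

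Next I would compute $\partial\bfx_u^{(\ell+1)}/\partial\bfx_z^{(\ell)}$ case by case. For $z=u$ the Jacobian is $\mathbf{I}+\Jkskew[\bfW^\top D_u^{(\ell)}\bfW+\sum_{w\in\mathcal{N}_u}\bfV^\top D_w^{(\ell)}\bfV]$ where $D_w^{(\ell)}=\mathrm{diag}(\sigma'(\cdot))$; for $z\in\mathcal{N}_u$ it combines $\bfW^\top D_u^{(\ell)}\bfV$, $\bfV^\top D_z^{(\ell)}\bfW$, and $\sum_{w\in\mathcal{N}_u\cap\mathcal{N}_z}\bfV^\top D_w^{(\ell)}\bfV$; otherwise the Jacobian is zero. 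Bounding each $|\sigma'|\le c_\sigma$, each entry of $\bfW,\bfV$ by $w$, and using that every inner sum has at most $N+1$ terms, each $d\times d$ block has $L_1$-norm at most $dwNc_\sigma\cdot w$ (self case, absorbing the identity) or at most $dwNc_\sigma\cdot w(N+1)$ (neighbor case). In other words, the $L_1$-norm of $\partial\bfx_u^{(\ell+1)}/\partial\bfx_z^{(\ell)}$ is dominated entrywise by the $(u,z)$ entry of the non-negative matrix $dwNc_\sigma\bigl(w\mathbf{I}+w(N+1)\mathbf{A}\bigr)$.

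Finally, I would apply the chain rule $\partial\bfx_u^{(\ell)}/\partial\bfx_v^{(0)}=\sum_{\text{paths}}\prod_i \partial\bfx^{(i+1)}/\partial\bfx^{(i)}$ through the graph, take $L_1$ norms, and use submultiplicativity together with the telescoping identity that products of the single-step bounds yield exactly the $(u,v)$ entry of $(dwNc_\sigma)^\ell(w\mathbf{I}+w(N+1)\mathbf{A})^\ell$. The main obstacle I anticipate is the bookkeeping in the neighbor case: the term $\sum_{w\in\mathcal{N}_u\cap\mathcal{N}_z}\bfV^\top D_w^{(\ell)}\bfV$ introduces a genuine two-hop dependence in a single Symplectic-Euler step, and the crucial step is to verify that the combinatorial factor $N+1$ in front of $\mathbf{A}$ is large enough to absorb these shared-intermediate contributions, so that they remain dominated by the $\ell$-fold matrix power instead of forcing a $2\ell$ power. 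Once this counting is done carefully, the remainder of the proof is a routine iterated application of the single-step bound.
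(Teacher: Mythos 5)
Your plan follows essentially the same route as the paper's proof: bound the one-step Jacobian blocks for the self-node and neighboring-node cases (including the shared-neighbor $\sum_{w\in\mathcal{N}_u\cap\mathcal{N}_v}\bfV^\top(\cdot)\bfV$ terms), then iterate via the chain rule exactly as in Theorem~3.2 of \cite{diGiovanniOversquashing}, absorbing the two-hop contributions into the $w(N+1)\mathbf{A}$ factor. The only cosmetic difference is that the paper keeps the $(\bfp,\bfq)$ sub-block structure explicit and uses the Symplectic-Euler interleaving to show that two of the four sub-blocks are $O(\epsilon^2)$ before discarding them, whereas you fold the update into a single step and argue the interleaving is immaterial at first order; both treatments arrive at the same one-step bound and the same induction.
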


See proof in Appendix \ref{app:proof_crossnode_upper_bound}. The result of Theorem~\ref{th:upper_crossnode} indicates that our upper bound on the right-hand side of the inequality is \textbf{at least} $N^{\ell}$ times bigger than the one computed for an MPNN. Therefore, together with previous theoretical results, it holds the capability of PH-DGN to perform long-range propagation effectively.

\paragraph{Introducing dissipative components.}% for H-DGN.} 
% H-DGN limits node states to follow constant energy trajectories. However, not all the downstream tasks require this bias to be effectively solved. 
%A purely conservative Hamiltonian inductive bias
Without driving forces, a purely conservative inductive bias forces the node states to follow trajectories that maintain constant energy, potentially limiting the effectiveness of the DGN on downstream tasks by restricting the system's ability to model all complex nonlinear dynamics.
To this end, we complete the formalization of our port-Hamiltonian framework by introducing tools from mechanical systems, such as friction and external control, to learn how much the dynamic should deviate from this purely conservative behavior. Therefore, we extend the dynamics in \cref{eq:local_dyn} %to a \emph{port}-Hamiltonian 
by including two new terms $D(\bfq)\in\mathbb{R}^{d/2 \times d/2}$ and $F(\bfq,t)\in\mathbb{R}^{d/2}$, \ie
\begin{equation}\label{eq:port-hamiltonian}
    \odv{\xk(t)}{t} = \left[\Jkskew-\begin{pmatrix}
    D(\bfq(t)) & \bf0\\\bf0 & \bf0 
\end{pmatrix} \right]\nabla_{\xk} H_{\mathcal{G}}(\bfy(t)) + \begin{pmatrix}
    F(\bfq(t),t)\\\bf0
\end{pmatrix},\quad \forall u \in \mathcal{V}.
\end{equation}
Depending on the definition of $D(\bfq(t))$ we can implement different forces. Specifically, if $D(\bfq(t))$ is positive semi-definite then it implements internal dampening, while a negative semi-definite implementation leads to internal acceleration. A mixture of dampening and acceleration is obtained otherwise. In the case of dampening, the energy is decreased along the flow of the system \citep{van2000l2}. 
To further enhance the modeling capabilities, we integrate the learnable state- and time-dependent external force $F(\bfq(t),t)$, which further drives node representation trajectories. Figure~\ref{fig:model} visually summarizes how such tools can be plugged in our framework during node update. %of the node representation.

Although $D(\bfq(t))$ and $F(\bfq(t),t)$ can be implemented as static (fixed) functions, in our experiments in Section~\ref{sec:experiments} we employ neural networks to learn such terms. We provide additional details on the specific architectures in Appendix~\ref{ap:arch_details}.
%In the following, we refer to a DGN following \cref{eq:port-hamiltonian} as \emph{port-Hamiltonian Deep Graph Network} (PH-DGN) to distinguish it from the purely conservative H-DGN in \cref{eq:local_dyn}. 
We provide further details about the discretization of PH-DGN in Appendix~\ref{ap:discretization}. Additional theoremes supporting the long-range propagation capability of our PH-DGN with driving forces are provided in \cref{ap:theo_diving_forces}.

\section{Experiments}\label{sec:experiments}
%In this section, we discuss the empirical assessment of our method against GNN variants from the literature. Specifically, we show the efficacy of preserving long-range information between nodes and mitigating the over-squashing by evaluating our framework on varius synthetic and real world tasks. 
%Graph transfer task in Section~\ref{sec:graph_transfer_exp} ...
%graph property prediction tasks in Section~\ref{sec:gpp_exp} ...
%LRGB in section~\ref{sec:lrgb_exp} ....
%We carried the experiments on (resources).... We release openly the code implementing our methodology and reproducing our empirical analysis upon acceptance.

We empirically verify both theoretical claims and practical benefits of our framework on popular graph benchmarks for long-range propagation. First (Section~\ref{sec:numerical}), we conduct a controlled synthetic test showing non-vanishing gradients even %with H-DGN using thousands of layers
when thousands of layers are used. Afterward (Section~\ref{sec:graph_transfer_exp}), we run a graph transfer task inspired by \cite{diGiovanniOversquashing} to assess the efficacy in preserving long-range information between nodes. Then, we assess our framework in popular benchmark tasks requiring the exchange of messages at large distances over the graph, including graph property prediction (Section~\ref{sec:gpp_exp}) and the long-range graph benchmark \citep{LRGB} (Section~\ref{sec:lrgb_exp}). An additional ablation on the impact of the different dissipative components on the Minesweeper \citep{luo2024classic} and graph transfer tasks is reported in \cref{sec:exp_ablation}. We compare our performance to state-of-the-art methods, such as MPNN-based models, DE-DGNs (including Hamiltonian-inspired DGNs), higher-order DGNs, and graph transformers. Notice that DE-DGNs represent a direct competitor to our method. Additional details on literature methods are in Appendix~\ref{app:baselines_details}. We investigate two neighborhood aggregation functions for our %H-DGN and 
PH-DGN, which are the classical GCN aggregation and that %proposed 
in \cref{eq:simple_agg}. Our model is implemented in PyTorch~\citep{pytorch} and PyTorch-Geometric~\citep{Fey/Lenssen/2019}. We release openly the code implementing our methodology and reproducing our empirical analysis at {\small\url{https://github.com/simonheilig/porthamiltonian-dgn}}. Our experimental results were obtained using NVIDIA A100 GPUs and Intel Xeon Gold 5120 CPUs.

\subsection{Numerical Simulations}\label{sec:numerical}
\myparagraph{Setup.}
%As outlined in the main section, the Hamiltonian bias leads to several theoretical properties summarized in Theorem~\ref{th:divergence} and Theorem~\ref{th:lowerb}; namely a constant energy given by the initial condition, a divergence-free vector field in the forward process and a non-vanishing gradient in the backward process. Hence, by following the Hamiltonian trajectories, stacking arbitrary deep layers of message passing becomes possible. To prove this numerically, we consider a synthetic forward process on a Carbon-60 molecule with fixed random initialized weights. Here, we configure the H-DGN by fixing a terminal time $T$ and an integration step $\epsilon$, leading to a total number of layers given by $L=T/\epsilon$. Note that this is optimal when considering the forward pass as a numerical solution to an ODE and testing different levels of numerical precision of the integration.
%In order to provide %numerical empirical evidence that our theoretical considerations hold true, we %implement Eq.~\eqref{eq:local_dyn} as a message-passing model with the understanding of being a numerical integrator of the continuous dynamic. 
%employ H-DGN and propagate information within a Carbon-60 molecule graph without learning a specific task, \ie we perform no gradient update step. 
%
We empirically verify that our theoretical considerations on the purely conservative PH-DGN (\ie, the driving forces are null) hold true by an experiment requiring to propagate information within a Carbon-60 molecule graph without training on any specific task, \ie we perform no gradient update step. While doing so, we measure the energy level captured in $H_{\mathcal{G}}(\bfy(\ell\epsilon))$ in the forward pass and the sensitivity, $\|\partial{\bfx_u^{(L)}}/\partial{\bfx_u^{(\ell)}}\|$, from each intermediate layer $\ell=1,\dots,L$ in the backward pass. 
We consider the 2-d position of the atom in the molecule as the input % \revise{1-d randomly initialized} 
node features, %and 
fixed terminal propagation time $T=10$ with various integration step sizes $\epsilon \in \{0.1, 0.01, 0.001\}$ and $T=300$ with $\epsilon=0.3$. Note that the corresponding number of layers is computed as $L=T/\epsilon$, \ie we use tens to thousands of layers. For the ease of the simulation, we use $\tanh$-nonlinearity, fixed learnable weights that are randomly initialized, and the aggregation function in \cref{eq:simple_agg}. 

% We employ $\tanh$-nonlinearity and fix weights randomly to perform one forward pass through the model on a graph obtained from the Carbon-60 molecule. By tracking the energy level captured in $H_{\mathcal{G}}(\bfy(\ell\epsilon))$, we are able to verify Theorem~\ref{th:divergence} by comparing the energy to the initial state. Via auto-differentiation, we further compute the gradients $\partial{\bfx_u^{(L)}}/\partial{\bfx_u^{(\ell)}}$ from each intermediate layer $\ell=1,\dots,L$ to the final output $\bfx_u^{(L)}$. To this end, we configure the model by setting a fixed terminal time $T=10$ and varying the integration step size $\epsilon\in\{1\mathrm{e}{-01},1\mathrm{e}{-02},1\mathrm{e}{-03}\}$, directly leading to the number of layers $L = T/\epsilon$. 

\myparagraph{Results.}
In Figure~\ref{subfig:energy}, we show the energy difference $H_{\mathcal{G}}(\bfy(\ell\epsilon))-H_{\mathcal{G}}(\bfy(0))$ for different step sizes. For a fixed time $T$, a smaller step size $\epsilon$ is related to a higher number of stacked layers. %As we can see, 
We note that the energy difference oscillates around zero, and the smaller the step size the more accurately the energy is preserved. This supports our intuition of the conservative PH-DGN being a discretization of a divergence-free continuous Hamiltonian dynamic, that allows for non-dissipative forward propagation, as stated in Theorem~\ref{th:im_eig} and Theorem~\ref{th:divergence}. Even for larger step sizes, energy is neither gained nor lost.% in the long run. %The interested reader is referred to Appendix~\ref{ap:extended_numerics} for more experiments on different integration schemes and even longer time horizons.

Regarding the backward pass, Figures~\ref{subfig:norms_a},~\ref{subfig:norms_b} assert that the lower bound $\|\partial{\bfx(L)}/\partial{\bfx(\ell)}\|\ge 1$ stated in Theorem~\ref{th:lowerb} and its discrete version in Theorem~\ref{th:lowerb_discrete} leads to non-vanishing gradients. In particular, Figure~\ref{subfig:norms_b} shows a logarithmic-linear increase of sensitivity with respect to the distance to the final layer, hinting at the exponential upper bound derived in Theorem~\ref{th:upperb}. This growing %explosive 
behavior can be controlled by regularizing the weight matrices, or by use of normalized aggregation functions, as in GCN~\citep{GCN}.
%In Figure~\ref{subfig:energy}, we show the time evolution of the energy difference $H_{\mathcal{G}}(\bfy(\ell\epsilon))-H_{\mathcal{G}}(\bfy(0))$ for each configuration of layers $L$ and step size $\epsilon$. One can clearly see that energy is preserved the more accurate the more layers are used \revise{WHY?}. This supports our understanding of a forward pass as numerical integration, which approximates the true continuous dynamics. Most importantly, we see that using up to thousands of layers can be computed without suffering issues related to \revise{over-smoothing}. The interested reader is referred to Appendix \revise{TODO} for more experiments on different integration schemes and even longer time horizons.
%Regarding the backward pass, Figure~\ref{subfig:norms_a} and Figure~\ref{subfig:norms_b} show the time evolution of the gradient norms on a logarithmic scale for visualization purposes. Hence, the lower bound of $\|\partial{\bfx(L)}/\partial{\bfx(\ell)}\|\ge 1$ is obtained at $\log(\cdot) = 0$. As our Theorem~\ref{th:lowerb_discrete} proved, the gradients do not vanish and in contrast, an exponential focus on the initial states is obtained. This behavior is described in our informative upper bound in Theorem~\ref{th:upperb}. Note that a regularization on the weight matrices can limit this effect and as our experiments show, degree normalization also helps to keep this bound in a moderate limit. 
\begin{figure}[h]
    \centering
    \begin{subfigure}[b]{0.3\textwidth}
        \centering
        \includegraphics[width=\textwidth]{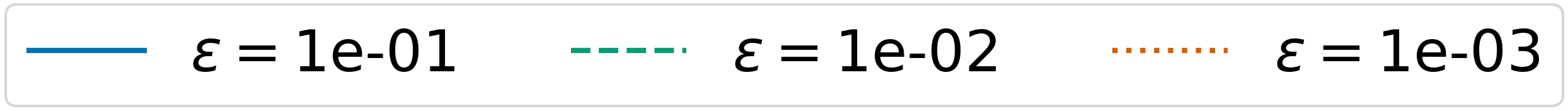}
        \includegraphics[width=\textwidth,height=3.3cm]{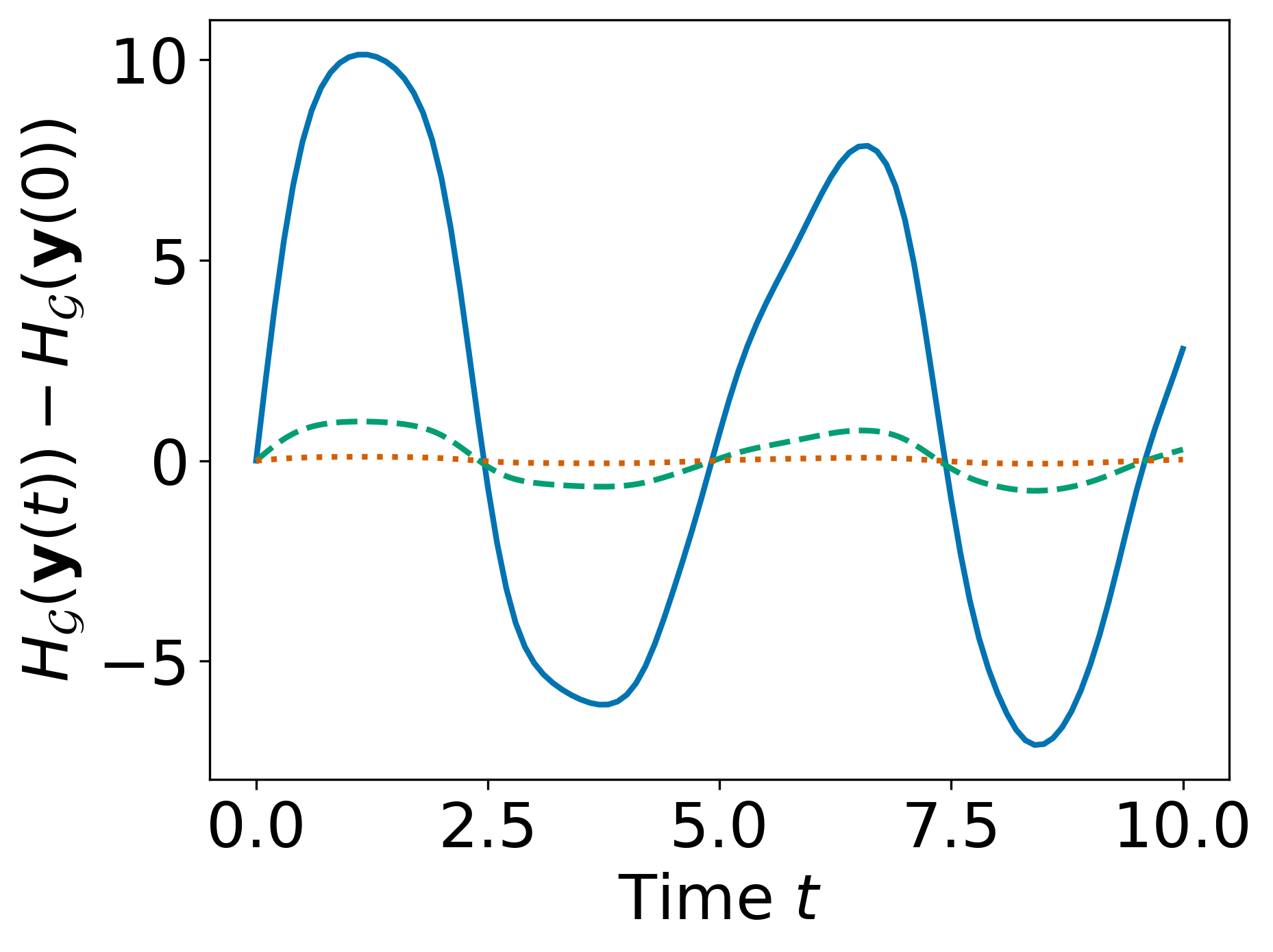}
        \caption{}\label{subfig:energy}
    \end{subfigure}%
    \hspace*{0.5em}
    \begin{subfigure}[b]{0.3\textwidth}
        \centering
        \includegraphics[width=\textwidth,height=3.3cm]{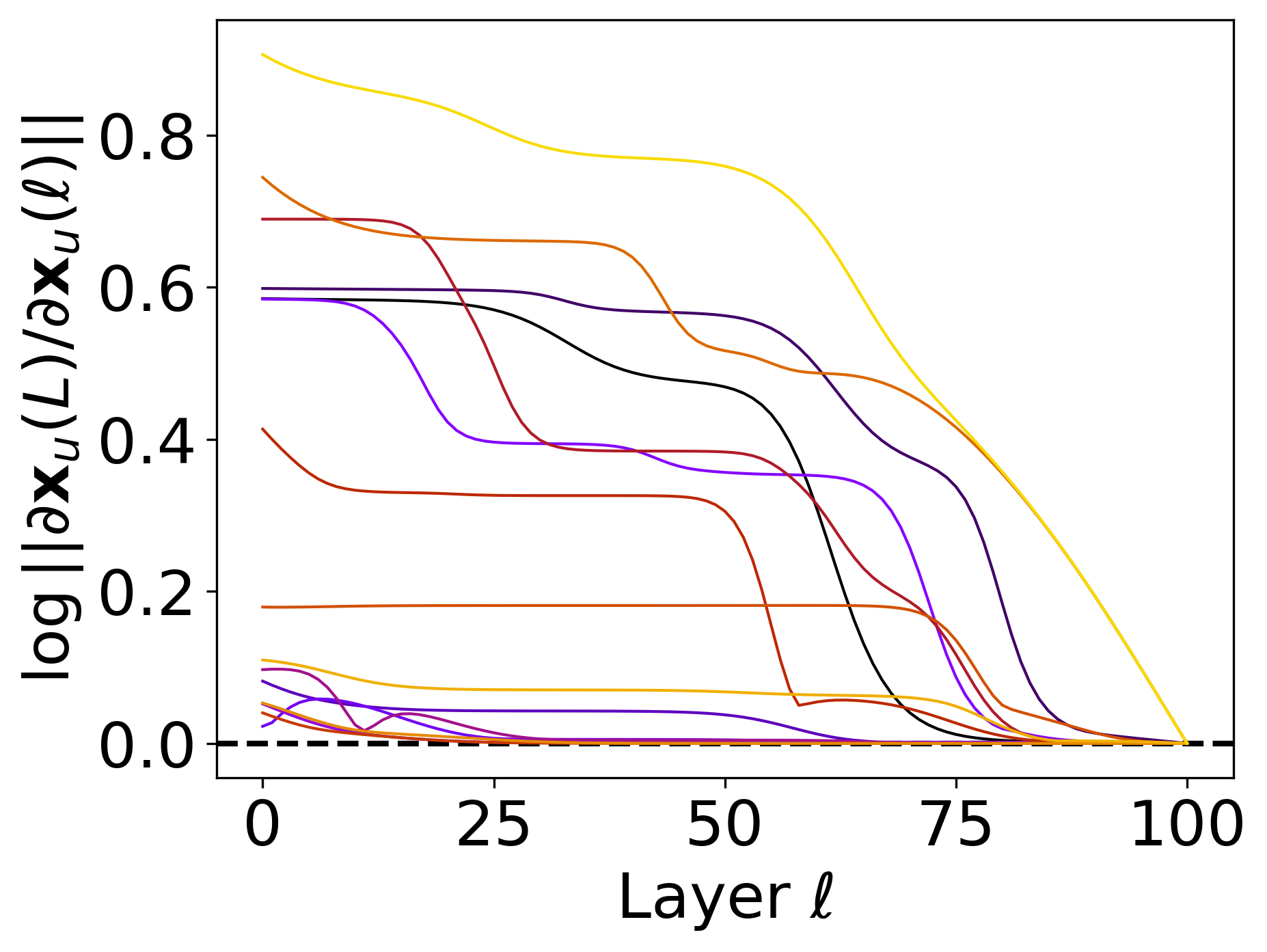}
        \caption{}\label{subfig:norms_a}
    \end{subfigure}%
    \hspace*{0.5em}
    \begin{subfigure}[b]{0.345\textwidth}
        \centering
        \includegraphics[width=\textwidth,height=3.3cm]{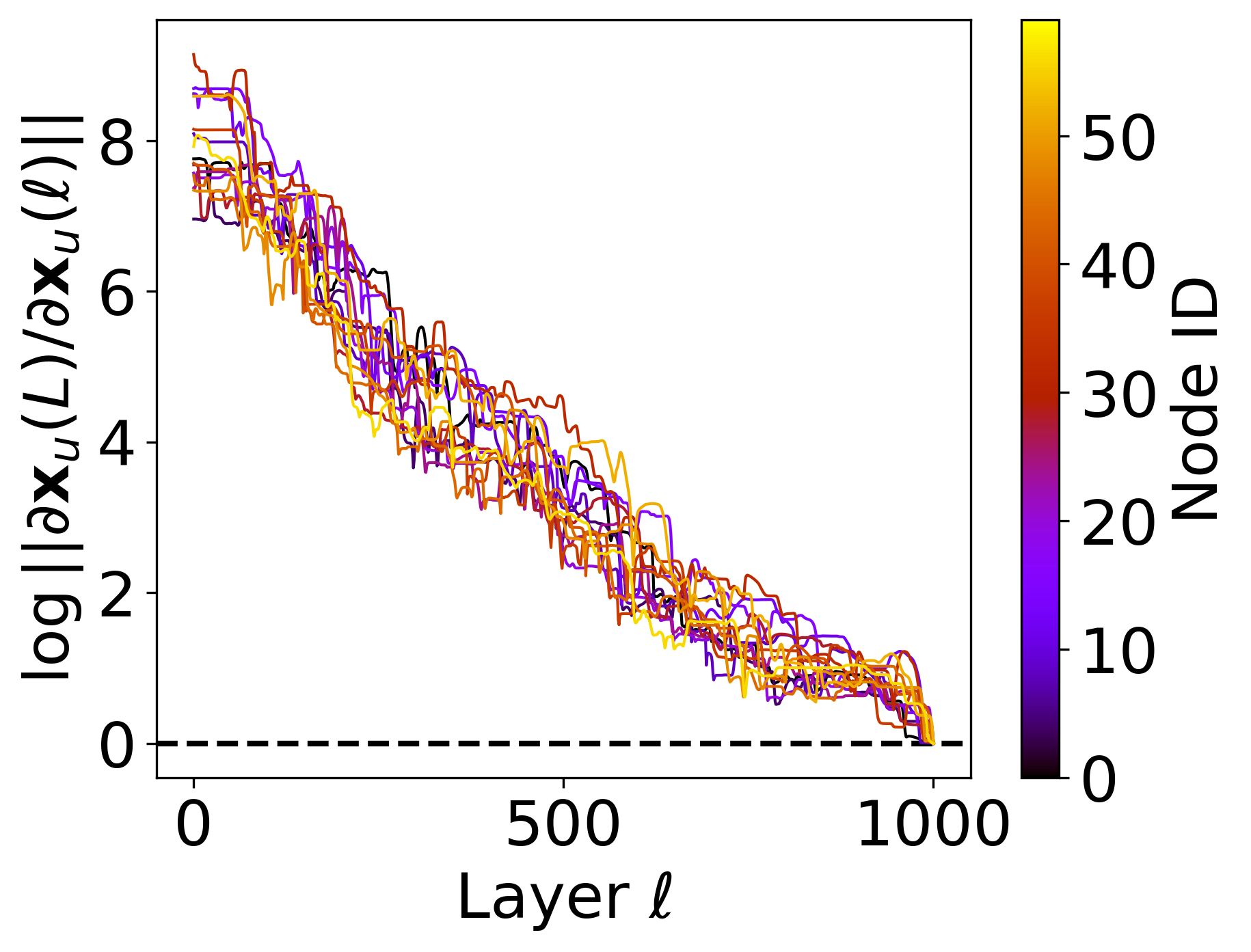}
        \caption{}\label{subfig:norms_b}
    \end{subfigure}%
    \caption{(a) Time evolution of the energy difference to the initial state $\mathbf{y}(0) =\mathbf{y}^{0}$ obtained from one forward pass of conservative PH-DGN with fixed random weights on the Carbon-60 graph with three different numbers of layers given by $T/\epsilon$. The sensitivity $\|{\partial\xk^{(L)}}/{\partial\xk^{(\ell)}}\|$ of 15 different node states to their final embedding obtained by backpropagation on the Carbon-60 graph after (b) $T=10$ and $\epsilon=0.1$ (\ie 100 layers) and (c) $T=300$ and $\epsilon=0.3$ (\ie 1000 layers). The log scale's horizontal line at $0$ indicates the theoretical lower bound.}\label{fig:grad_norms}
\end{figure}

\subsection{Graph Transfer %task
}\label{sec:graph_transfer_exp}
\myparagraph{Setup.}
We build on the graph transfer tasks by \cite{diGiovanniOversquashing} and consider the problem of propagating a label from a source node to a target node located at increasing distances $k$ in the graph, following the setting proposed by \citet{gravina_swan}. We use the same graph distributions as in the original work, \ie line, ring, and crossed-ring graphs. To increase the difficulty of the task, we randomly initialize intermediate nodes with a feature uniformly sampled in $[0,0.5)$. Source and destination nodes are initialized with labels ``1'' and ``0'', respectively. We considered problems at distances $k \in \{3, 5, 10, 50\}$, thus requiring incrementally higher efficacy in propagating long-range information. 
Given the conservative nature of the task, we focus on assessing %the purely Hamiltonian H-DGN model.
our PH-DGN without driving forces.
More details on the task and the hyperparameters can be found in the Appendix~\ref{app:graph_transf_exp_detail} and \ref{app:hyperparams}.

% After demonstrating that H-DGN can be built with deep layers, we strive to explore its capabilities using the graph transfer task from \cite{diGiovanniOversquashing}. The supervised task is to transfer a label signal issued from a source node to a target node placed at distance $k$ in varying graph topologies. To make the task even more challenging as in \cite{diGiovanniOversquashing}, each node is initialized with random noise, \ie $\bfx_u \sim N(0,1)$, while the source node and target nodes are represented as $\bfx_s = 1$ and $\bfx_t = 0$. The optimal output asserts in no change in the intermediate nodes, while the label signal has to be successfully sent to the target node: $\hat{y}_t = 1$ and $\hat{y}_s = 0$. We expect classical models, such as GCN~\citep{GCN}, GAT~\citep{GAT}, GraphSAGE~\citep{SAGE}, and GIN~\citep{GIN}, to drop in performance as $k$ increases the chains of messages necessary identify the source and task node's position. In particular, we generate problems at distances $k \in \{3, 5, 10, 50\}$ on three different topologies: \emph{line} graphs, \emph{ring} graphs, and \emph{crossed-ring} graphs. Further details on the dataset and hyperparameters are provided in Appendix~\ref{app:gt_stats}.

\myparagraph{Results.}
Figure~\ref{fig:graph_transfer} reports the test mean-squared error (and std. dev.) of PH-DGN compared to literature models. It appears that classical MPNNs do not effectively propagate information across long ranges as their performance decreases when $k$ increases. Differently, PH-DGN achieves low errors even at higher distances, \ie $k\geq 10$. The only competitor to our PH-DGN is A-DGN, which is another non-dissipative method. Overall, PH-DGN outperforms all the classical MPNNs baseline while having on average better performance than A-DGN, thus empirically supporting our claim of long-range capabilities while introducing a new architectural bias. Moreover, our results highlight how our framework can push simple graph convolutional architectures to
state-of-the-art performance when imbuing them with dynamics capable of long-range message exchange.
% Especially, when $k>10$ and the topology is highly cyclic, long-range information exchange becomes inevitable. Here, A-DGN and H-DGN, both built upon GCN, demonstrate a clear advantage over the baselines. For \emph{ring} graphs, H-DGN further improves the score as compared to A-DGN. This supports our hypothesis, that modeling dynamics capable of long-range message exchange is improving the performance of simple graph convolutional architectures.

%Figure~\ref{fig:graph_transfer} reports the results of the graph transfer tasks. The ring-graph proved to be the hardest, leading to higher errors when compared with other graphs. Overall, baselines struggle to accurately transfer the information through the graph, especially when the distance is high, \ie \#hops $\ge 10$. Differently, non-dissipative methods, such as A-DGN and H-DGN, achieve low errors across all distances. Moreover, H-DGN consistently outperforms A-DGN, empirically supporting our theoretical findings that H-DGN can better propagate information among distant nodes.
\begin{figure}[h]
    \centering
    \includegraphics[width=\textwidth]{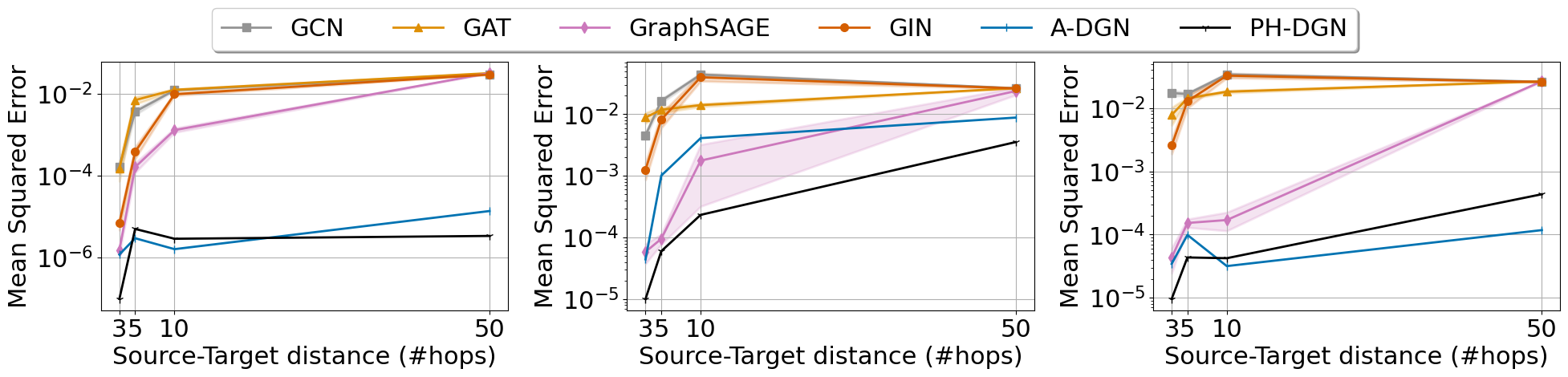}
    {\\\footnotesize \hspace{10mm}(a) Line \hspace{35mm}(b) Ring \hspace{30mm}(c) Crossed-Ring}
    \caption{Information transfer performance on (a) Line, (b) Ring, and (c) Crossed-Ring graphs. Overall, baseline approaches are not able to transfer the information accurately as distance increase, while non-dissipative methods like A-DGN and our PH-DGN achieve low errors.}
    \label{fig:graph_transfer}
\end{figure}

\subsection{Graph Property Prediction}\label{sec:gpp_exp}
\paragraph{Setup.}
We consider three graph property prediction tasks introduced in \cite{PNA} under the experimental setting of \cite{gravina_adgn}. %Therefore, 
We investigate the performance of our port-Hamiltonian framework in predicting graph diameters, single source shortest paths (SSSP), and node eccentricity on synthetic graphs. Note that to effectively solve these tasks, it
is fundamental to propagate not only the information of direct neighborhoods, but also the information coming from nodes far away in the graph. Therefore, good performance in these tasks highlights long-range propagation capabilities. 
In this experiment, we investigate the performance of our complete %port-Hamiltonian 
framework (\ie including driving forces), %PH-DGN, 
and present the pure %Hamiltonian H-DGN 
conservative PH-DGN (referred to as PH-DGN\textsubscript{C}) as an ablation study. More details on the task and the hyperparameters can be found in Appendix~\ref{app:gpp_exp_details} and \ref{app:hyperparams}.

% To ablate on how the (port-)Hamiltonian dynamics influence the performance of simple convolutional architectures, we follow \cite{gravina_adgn} in predicting graph-theoretical properties. In particular, we consider the eccentricity of each node, the diameter of a graph, and the single-source shortest path to a randomly selected node. These targets are analytically solved by computing the extensive set of the shortest paths between each pair of nodes. The approximation obtained by DGNs needs to collect sufficient information about this set in only one forward pass. More details on the graphs and the tasks can be found in Appendix~\ref{app:gp_stats}. Using this setup, we perform a model selection on the validation performance in an extensive grid of hyper-parameters which are reported in Appendix~\ref{app:gp_hyperp}. To allow for a fair comparison, we perform the evaluation using the vanilla aggregation~\eqref{eq:simple_agg} and the GCN function as $\Phi_{\mathcal{G}}$. Thanks to the separation of $\mathbf{p}$ and $\mathbf{q}$, we study the combination of different coupling functions, \ie $\Phi_{\mathcal{G}}$ aggregating in the $\mathbf{p}$-domain is different from that for $\mathbf{q}$. We also ablate on the neural networks used in the dynamics of the port-Hamiltonian formalism to learn the dampening and external inputs. For this purpose, we test pure parameter vectors and shallow MLPs as proposed by \citep{desai2021port}, and vanilla neighborhood coupling-based DGN to inform the dampening and external forces about the neighbors in the graph. 

\myparagraph{Results.}
We present the results on the graph property prediction tasks in Table~\ref{tab:graphproppred}, %
\begin{wraptable}{r}{8.7cm}%\begin{table}[h]
\renewcommand{\tabcolsep}{1pt}
\centering
\vspace{-2mm}
%\vspace{-2.6mm}

\caption{Mean test $log_{10}(\text{MSE})$ and std average over 4 training seeds on the Graph Property Prediction. 
%Graph Property Prediction. Baseline results are taken from \cite{gravina_adgn}. Results are given in the form of mean ${\pm\text{ std. dev.}}$ on the log-score of the test set mean-squared error (MSE), after training with 4 different seeds. 
Our methods and DE-DGN baselines are implemented with weight sharing.
The \one{first}, \two{second}, and \three{third} best scores are colored.}
    \label{tab:graphproppred}
%\begin{adjustbox}{max width=\textwidth}
\begin{tabular}{lccc}
%&&&\\[-.5em]
\toprule
\textbf{Model}& \textbf{Eccentricity} & \textbf{Diameter} & \textbf{SSSP} \\
%\multirow{2}{*}{Model}& \textbf{Eccentricity} & \textbf{Diameter} & \textbf{SSSP} \\
%&{\scriptsize$log_{10}(\text{MSE})$ $\downarrow$}&\newline{$\log_{10}\operatorname{MSE}$ $\downarrow$}&\newline{$\log_{10}\operatorname{MSE}$ $\downarrow$}\\\hline
\midrule
\textbf{MPNNs} &&&\\
$\,$ GCN         & 0.8468$_{\pm0.0028}$ & 0.7424$_{\pm0.0466}$ & 0.9499$_{\pm0.0001}$\\%9.2 \cdot10^{-5}}$\\
$\,$ GAT         & 0.7909$_{\pm0.0222}$ & 0.8221$_{\pm0.0752}$ & 0.6951$_{\pm0.1499}$\\
$\,$ GraphSAGE   & 0.7863$_{\pm0.0207}$ & 0.8645$_{\pm0.0401}$ & 0.2863$_{\pm0.1843}$\\
$\,$ GIN         & 0.9504$_{\pm0.0007}$ & 0.6131$_{\pm0.0990}$ & -0.5408$_{\pm0.4193}$\\
$\,$ GCNII       & 0.7640$_{\pm0.0355}$ & 0.5287$_{\pm0.0570}$ & -1.1329$_{\pm0.0135}$\\\midrule
\textbf{DE-DGNs} &&&\\
$\,$ DGC         & 0.8261$_{\pm0.0032}$ &  0.6028$_{\pm0.0050}$ & -0.1483$_{\pm0.0231}$\\
$\,$ GraphCON    & 0.6833$_{\pm0.0074}$ &  0.0964$_{\pm0.0620}$ & -1.3836$_{\pm0.0092}$\\
$\,$ GRAND       & 0.6602$_{\pm0.1393}$ &  0.6715$_{\pm0.0490}$ & -0.0942$_{\pm0.3897}$ \\
$\,$ A-DGN       & \three{0.4296$_{\pm0.1003}$} & \three{-0.5188$_{\pm0.1812}$} & \two{-3.2417$_{\pm0.0751}$}\\
%$\,$ {SWAN} & {-0.0739$_{\pm0.2190}$} & {-0.5981$_{\pm0.1145}$}  & {-3.5425$_{\pm0.0830}$}   \\
$\,$ HamGNN  & 0.7851$_{\pm0.0140}$ & 0.6762$_{\pm0.1317}$               & 0.9449$_{\pm0.0008}$ \\
$\,$ HANG  & 0.8302$_{\pm0.0051}$ & 1.1036$_{\pm0.1025}$ & 0.1671$_{\pm0.0160}$ \\
\midrule
\textbf{Ours} &&&\\
$\,$ PH-DGN\textsubscript{C}   & \two{-0.7248$_{\pm0.1068}$}& \one{-0.5473$_{\pm0.1074}$} & \three{-3.0467$_{\pm0.1615}$}\\
$\,$ PH-DGN & \one{-0.9348$_{\pm0.2097}$}& \two{-0.5385$_{\pm0.0187}$} &  \one{-4.2993$_{\pm0.0721 }$} \\
\bottomrule
\end{tabular}
%\end{adjustbox}
\vspace{-4.6mm}
\end{wraptable}%\end{table}
 reporting {\small$log_{10}(\text{MSE})$} as evaluation metric. 
We observe that %both our H-DGN and PH-DGN 
our PH-DGN show a strong improvement with respect to baseline methods, achieving new state-of-the-art performance on all the tasks.
%Indeed, our 
Our ablation reveals that the purely conservative %H-DGN 
model has, on average, a {\small$log_{10}(\text{MSE})$} that is 0.33 lower than the best baseline. 
Such gap is pushed to 0.81 when the full port-Hamiltonian bias (\ie PH-DGN) is employed, marking a significant decrease in the test loss. The largest gap is achieved by PH-DGN in the eccentricity task, where it improves the {\small$log_{10}(\text{MSE})$} performance of the best baseline by 1.36. Moreover, we note that our PH-DGN is more effective than its purely conservative version and existing Hamiltonian-inspired DE-DGN, \ie HamGNN and HANG, highlighting the significance of port-Hamiltonian dynamics with respect to a purely conservative inductive bias. This effectiveness is also reflected in the computational cost, as shown in Appendix~\ref{app:comparison_ham_gnns}, where our PH-DGN results to be more efficient both in terms of speed and memory usage compared to HamGNN and HANG.

Although our purely conservative %H-DGN 
PH-DGN\textsubscript{C} shows improved performance with respect to all baselines, it appears that relaxing such bias via PH-DGN is more beneficial overall, leading to even greater improvements in long-range information propagation. 
Our intuition is that such tasks do not require purely conservative behavior since nodes need to count distances while exchanging
more messages with other nodes, similar to standard algorithmic solutions such as \cite{Dijkstra}. Therefore, the energy may not be constant during the resolution of the task, hence benefiting from the non-purely conservative behavior of PH-DGN.

As for the graph transfer task, our results demonstrate that our PH-DGNs can effectively learn and exploit long-range information while pushing simple graph neural architectures to state-of-the-art performance when modeling dynamics capable of long-range propagation. 

% We report the results on the test set in Table~\ref{tab:graphproppred}. We observe strong improvements especially on Eccentricity compared to all DGNs considered, including the stable and non-dissipative A-DGN \citep{gravina_adgn} model. Both the Hamiltonian and port-Hamiltonian dynamic achieve new state-of-the-art scores in the negative range of the logarithm, which marks a significant decrease in the test loss. On Diameter, a graph-level task, our ablation reveals that the conservative H-DGN model is superior to the port-Hamiltonian bias and to any DGN other than A-DGN, which is doing similarly well within standard deviations. On SSSP we observe that the pure conservative model is within standard deviations of previous state-of-the-art, while the relaxed port-Hamiltonian outperforms all considered DGNs. These findings support our theory, enabling an interpretable energy-conservative model with relaxations taken from physics to improve on the downstream task in a fully end-to-end regime.

% Regarding the complexity of our proposed models, Appendix \revise{TODO} reports the average computation time per epoch. We want to recall, that our discrete models belong to the class of MPNNs models with linear time and memory complexity of $O(|\mathcal{V}|+|\mathcal{E}|)$. As the symplectic numerical discretization requires a sequential update of $\mathbf{p}$ and $\mathbf{q}$, we need to do two sequential neighborhood aggregations along with their energy correction aggregations.

\subsection{Long-Range Graph Benchmark}\label{sec:lrgb_exp}
\paragraph{Setup.}
We assess the performance of our method on the real-world long-range graph benchmark (LRGB) from \cite{LRGB}, focusing on the Peptide-func and Peptide-struct tasks. As in Section~\ref{sec:gpp_exp}, we decouple our method into PH-DGN and PH-DGN\textsubscript{C} to provide an ablation study on the strictly conservative behavior. For our evaluation, we follow the experimental setting in \cite{LRGB}. 
Acknowledging the results from \cite{tonshoff2023where}, we also report results with a 3-layer MLP readout. While some baselines leverage positional or structural encodings, our approach does not depend on these mechanisms. More details on the task and the hyperparameters can be found in Appendix~\ref{app:lrgb_exp_details} and \ref{app:hyperparams}.

% To show the competitive performance of our method on real-world benchmarks, we employ the long-range graph benchmark from \cite{LRGB}. We focus on the Peptide-func and Peptide-struct tasks since Peptides represent much longer amino acid chains than drug-like molecules and they require global properties to be predicted, such as the sphericity of the resulting 3D Peptide. More details on the data and tasks are provided in Appendix~\ref{app:lrgb_stats}.
% To this end, we compare H-DGN and PH-DGN against classical MPNNs, rewiring- and transformer-based models, and the latest DE-DGNs. Following \cite{tonshoff2023where}, we acknowledge the influence of the entire network structure, \ie we also report results on the tuned setting with a 3-layered MLP decoding head. As it is introduced by the original benchmark of \cite{LRGB} we stick to the 500k parameter budget. The hyper-parameters evaluated on the validation set, obtained from a 70\%-15\%-15\% stratified split, can be seen in Appendix~\ref{app:lrgb_hyperp}.

\begin{wraptable}{r}{8cm} %\begin{table}[h]
\renewcommand{\tabcolsep}{2pt}
%\vspace{-12.6mm}
\vspace{-4.6mm}

\centering
\caption{Results for Peptides-func and Peptides-struct averaged over 3 training seeds. %Baseline results are taken from \cite{LRGB} \cite{drew}, while re-evaluated scores denoted by ``$\ddagger$'' are taken from \cite{tonshoff2023where}. 
The \one{first}, \two{second}, and \three{third} best scores are colored% within standard deviation respectively
. %While in this table are reported only the best version of each method, we provide in Appendix~\ref{app:complete_lrgb_res} the extended version of this table. 
Extended version of this table is provided in Appendix~\ref{app:complete_lrgb_res}. 
"+PE/SE" indicates the use of positional or structural encoding. We have detailed the type of encoding wherever the original source explicitly specifies it.
%Note that all MPNN-{\color{red}baselines include positional or structural encoding denoted by ``+PE/SE'' and ``+LapPE''}.
}\label{tab:long_range}

%\begin{tabular}{@{}lccc@{}}
\begin{tabular}{@{}lcc@{}}
\toprule
\multirow{2}{*}{\textbf{Model}} & \textbf{Peptides-func}  & \textbf{Peptides-struct} %& \color{red}\textbf{PCQM-Contact}  %&  \textbf{Pascal}              
\\
% & \textbf{func} & \textbf{struct} %& \textbf{voc-sp}
% \\
& \small{AP $\uparrow$} & \small{MAE $\downarrow$} %&\color{red}\small{MRR $\uparrow$}%& \scriptsize{F1 $\uparrow$}                             
\\ \midrule  
\multicolumn{3}{l}{\textbf{Modified MPNNs, \cite{tonshoff2023where}}} \\

$\,$ GCN+PE/SE   & 0.6860$_{\pm0.0050}$ & \one{0.2460$_{\pm0.0007}$} \\%& \color{red}0.3234$_{\pm0.0006}$\\
$\,$ GCNII+PE/SE                    & 0.6444$_{\pm0.0011}$ & 0.2507$_{\pm0.0012}$\\%& \\ %& 0.1698$_{\pm0.0080}$\\
$\,$ GINE+PE/SE      & 0.6621$_{\pm0.0067}$ & 0.2473$_{\pm0.0017}$ \\%& \color{red}0.3180$_{\pm0.0027}$\\
$\,$ GatedGCN+PE/SE  & 0.6765$_{\pm0.0047}$ & 0.2477$_{\pm0.0009}$ \\%& \color{red}0.3218$_{\pm0.0011}$\\

\midrule
\multicolumn{3}{l}{\textbf{Multi-hop DGNs, \cite{drew}}}\\
$\,$ DIGL+MPNN+LapPE     & 0.6830$_{\pm0.0026}$         & 0.2616$_{\pm0.0018}$   \\%& \color{red}0.1707$_{\pm0.0021}$ \\ % & 0.2921$_{\pm0.0038}$\\
$\,$ MixHop-GCN+LapPE    & 0.6843$_{\pm0.0049}$         & 0.2614$_{\pm0.0023}$   \\%& \color{red}0.3250$_{\pm0.0010}$\\ % & 0.2218$_{\pm0.0174}$\\ 
$\,$ DRew-GCN+LapPE        & \one{0.7150$_{\pm0.0044}$}   & 0.2536$_{\pm0.0015}$ \\%& \\%\color{red}\textbf{0.3442$_{\pm0.0006}$}\\ % & 0.1851$_{\pm0.0092}$\\
\midrule
\multicolumn{3}{l}{\textbf{Transformers, \cite{drew}}} \\
$\,$ Transformer+LapPE  & 0.6326$_{\pm0.0126}$ & 0.2529$_{\pm0.0016}$  \\%& \color{red}0.3174$_{\pm0.0020}$         \\ % & 0.2694$_{\pm0.0098}$\\
$\,$ SAN+LapPE          & 0.6384$_{\pm0.0121}$ & 0.2683$_{\pm0.0043}$  \\%& \color{red}0.3350$_{\pm0.0003}$        \\ % & \three{0.3230$_{\pm0.0039}$}\\
$\,$ GraphGPS+LapPE    & 0.6535$_{\pm0.0041}$ & {0.2500$_{\pm0.0005}$}   \\%& \color{red}0.3337$_{\pm0.0006}$  \\ % & \one{0.3748$_{\pm0.0109}$}\\ 
\midrule
\textbf{DE-DGNs} \\
$\,$ GRAND      & 0.5789$_{\pm0.0062}$ & 0.3418$_{\pm0.0015}$ \\%& \\ % &  0.1918$_{\pm0.0097}$ \\
$\,$ GraphCON   & 0.6022$_{\pm0.0068}$ & 0.2778$_{\pm0.0018}$ \\%& \\ % &  0.2108$_{\pm0.0091}$ \\
$\,$ A-DGN      & 0.5975$_{\pm0.0044}$ & 0.2874$_{\pm0.0021}$ \\%& \\ % &  0.2349$_{\pm0.0054}$\\ 
$\,$ SWAN & 0.6751$_{\pm0.0039}$ & \three{0.2485$_{\pm0.0009}$}\\
\midrule
\textbf{Ours} \\    
%$\,$ H-DGN             & 0.6698$_{\pm0.0037}$         & 0.3129$_{\pm0.0049}$\\
$\,$ %H-DGN  
PH-DGN\textsubscript{C} & \three{0.6961$_{\pm0.0070}$}         & 0.2581$_{\pm0.0020}$ \\%& \\
%$\,$ PH-DGN            & 0.6356$_{\pm0.0037}$         & 0.2885$_{ \pm 0.0041}$ \\
$\,$ PH-DGN & \two{0.7012$_{\pm0.0045}$} & \two{0.2465$_{\pm0.0020}$} \\%& \color{red} 0.3321\\
\bottomrule
\end{tabular}
%\vspace{-15mm}
\end{wraptable}%\end{table}
\myparagraph{Results.}
We report results on the LRGB 
tasks in Table \ref{tab:long_range} (extended results are reported in Appendix \ref{app:complete_lrgb_res} due to space limits). Our results show that both %H-DGN
PH-DGN\textsubscript{C} and PH-DGN outperform classical MPNNs,
graph transformers, most of the multi-hop DGNs, and recent DE-DGNs (which represent a direct competitor to our method). 
Overall, our port-Hamiltonian %methods show 
framework (with and without driving forces) shows great benefit in propagating long-range information without requiring additional strategies such as global position encoding (as evidenced by comparisons with MPNN-based models using positional encoding), global attention mechanisms (as seen in comparisons with Transformer-based models), or rewiring techniques (as shown in comparisons with Multi-hop DGNs) that increase the overall complexity of the method. 
Consequently, our results reaffirm the effectiveness of our %(port-)Hamiltonian 
framework in enabling efficient long-range propagation, even in simple DGNs characterized by purely local message exchanges.
Lastly, we believe that our PH-DGN with driving forces achieves better performance than PH-DGN\textsubscript{C} on the real-world LRGB because the learned driving forces act as an adaptive filter mechanism that filters noisy information, facilitating the learning of relevant information.
%\input{tables/lrgb}

% We observe that, in the re-evaluated setting, DRew does not benefit from the increased capacity of the decoding. In contrast, our (port-)Hamiltonian methods benefits greatly from the tuned setting without requiring global positional encoding.

% We show the results for the original benchmark setting from \cite{LRGB} in Table~\ref{tab:long_range} along with the re-evaluated ones from \cite{tonshoff2023where}. It becomes evident, that on Peptides-func, H-DGN outperforms classical MPNNs, recent DE-DGNs and even globally acting transformers. In the classical setup, graph rewiring and global positional encoding, as performed by Drew \citep{drew}, is still placing state-of-the-art. We observe in the re-evaluated setting, that Drew does not benefit from the increased capacity of the decoding unit despite exceeding the 500k parameter budget when using the same configuration. In contrast, H-DGN benefits greatly from the tuned setting without requiring global positional encoding. This supports our theory, that arbitrary-range propagation is improving the downstream performance when predicting global properties.
% On Peptides-struct, the regression task, the port-Hamiltonian bias is further improving the scores as compared to the conservative bias. In the original setting, PH-DGN is doing equally well as compared to A-DGN, which is the forward Euler non-dissipative model. With the three-layer MLP as the decoding head, PH-DGN decreases significantly the mean absolute error and is placing within standard deviation with previous state-of-the-art.

\section{Related works}\label{sec:relatedworks}
%Discussion of related works. There are 2 ways:
%1) comparison with classical GNNs that solve more or less the same problem. for example, our approach does this and this other that are limitations of X. This is similar to what is in ADGN
%2) provide a huge list of methods explaining what has been done in the domain of differential-equation-based GNNs, oversquashing, .. Similarly to SWAN
%\noindent\rule[0.5ex]{\linewidth}{1pt}

\myparagraph{DGN based on differential equations.}
Recent advancements in the field of representation learning have introduced new architectures
that establish a connection between neural networks and dynamical systems. 
Inspired by pioneering works on recurrent neural networks \citep{neuralODE,HaberRuthotto2017,chang2018antisymmetricrnn,galimberti2023hamiltonian}, such connection has been pushed to the domains of DGNs \citep{han2024from}. 
Indeed, works like GDE~\citep{GDE}, GRAND~\citep{GRAND}, PDE-GCN~\citep{pdegcn}, DGC~\citep{DGC}, GRAND++~\citep{grand++} propose to interpret DGNs as discretization of ODEs and PDEs. The conjoint of dynamical systems and DGNs have found favorable consensus, as these new methods exploit the intrinsic properties of differential equations to extend the 
characteristic of message passing
within DGNs. GRAND, GRAND++, and DGC bias the node representation trajectories to follow the heat diffusion process, thus performing a gradual smoothing of the initial node states. On the contrary, GraphCON~\citep{graphcon} used oscillatory properties to enable linear dynamics that preserve the Dirichlet energy encoded in the node features; PDE-GCN${_{\text{M}}}$~\citep{pdegcn} uses an interpolation between anisotropic diffusion and conservative oscillatory properties;  more recently, A-DGN~\citep{gravina_adgn} introduces an anti-symmetric mechanism that leads to node-wise non-dissipative dynamics.
Related work in the line of Hamiltonian systems for DGNs, such as HamGNN \citep{Ham_ICML}, exclusively leverage Hamiltonian dynamics to encode node input features, which are then fed into classical DGNs to enhance their conservative properties. Similarly% to HamGNN
, HANG~\citep{HANG_Neurips} leverages Hamiltonian dynamics to improve robustness to adversarial attacks to the graph structure.
Differently from HamGNN and HANG, our PH-DGN %is the first port-Hamiltonian framework, which provides theoretical guarantees of both conservative and non-conservative behaviors while seamlessly incorporating the most suitable aggregation function for the task at hand, enabling long-range information propagation.
is (to the best of our knowledge) the first DGN that leverages port-Hamiltonian dynamics, thus balancing non-dissipative and non-conservative behaviors while providing theoretical guarantees of long-range propagation. A deeper discussion on the differences with HamGNN and HANG is provided in Appendix~\ref{app:comparison_ham_gnns}.

While the aforementioned works focus on the spatial aggregation term of DE-DGNs, the temporal domain has also been studied in works such as \cite{eliasof2024temporal,gravina_ctan, MTGODE, gravina_tgode}.

\myparagraph{Long-range propagation on DGNs.}
Effectively transferring information across distant nodes is still an open challenge in the graph representation learning community \citep{shi2023exposition}. Various strategies have been explored in recent years to address this challenge, %such as graph rewiring. 
including regularizing the model’s weight space \citep{gravina_adgn,gravina_randomized_adgn, gravina_swan}, filter messages in the information flow \citep{errica2024adaptivemessagepassinggeneral}, and graph rewiring. In this latter setting, methods like SDRF~\citep{topping2022understanding}, GRAND~\citep{GRAND}, BLEND~\citep{blend}, and DRew~\citep{drew} (dynamically) alter the original edge set to densify the graph during preprocessing to facilitate node communication.  Differently, Transformer-based methods~\citep{graphtransformer, dwivedi2021generalization, ying2021transformers,wu2023difformer} enable message passing between all node pairs. FLODE~\citep{maskey2024fractional} incorporates non-local dynamics by using a fractional power of the graph shift operator. Although these techniques are effective in addressing the problem of long-range communication, they can also increase the complexity of information propagation due to denser graph shift operators.%A-DGN~\citep{gravina_adgn} introduces anti-symmetric weights to enable node-wise non-dissipative dynamics. 

\section{Conclusions}\label{sec:conclusion}
%- We have presented ..., a new general framework for effectively ... based on Hamiltonian ....
%- To demonstrate the benefits of our approach, we conducted extensive experiments ... The results of our experimental analysis show that our method outperforms state-of-the-art models ... 
%- future works
%\noindent\rule[0.5ex]{\linewidth}{1pt}
In this paper, we have presented \textit{port-Hamiltonian Deep Graph Network} (PH-DGN), a general framework that gauges the equilibrium between non-dissipative long-range propagation and non-conservative behavior while seamlessly incorporating the most suitable neighborhood aggregation function.  We theoretically prove that, when pure %Hamiltonian 
conservative dynamic is employed, both the continuous and discretized versions of our framework allow for long-range propagation in the message passing flow since node states retain their past. To demonstrate the benefits of including port-Hamiltonian dynamics in DE-DGNs, we conducted several experiments on synthetic and real-world benchmarks requiring long-range interaction. Our results show that our method outperforms state-of-the-art models and that the inclusion of data-driven forces that deviate from a purely conservative behavior is often key to maximize efficacy of the approach on tasks requiring long-range propagation. 
Indeed, in practice, effective information propagation requires a balance between long-term memorization and propagation and the ability to selectively discard and forget information when necessary. 
%can further strengthen performance.
Looking ahead to future developments, our port-Hamiltonian dynamic can be extended to handle time-varying streams of graphs \citep{gravina_dynamic_survey} and can be evaluated with alternative discretization methods, \eg adaptive multistep schemes \citep{Rufai2023}.

% In this paper, we have presented \textit{(port-)Hamiltonian Deep Graph Network} (H-DGN), a general framework for message-passing neural networks to follow an energy-preserving dynamic. By employing a symplectic integration scheme, we proved that gradients cannot vanish in our model, ultimately preventing \revise{over-smoothing} and enable arbitrary deep layers. Using tools from physics, we extended the model to include dampening and external forces into the forward pass. The proposed port-Hamiltonian Deep Graph Network (PH-DGN) is able relax the strong bias given by the lower bound in an interpretable fashion. 

% To demonstrate the benefits of the two biases given by the Hamiltonian and port-Hamiltonian dynamics, we conducted several numerical experiments on synthetic and real-world data requiring long-range interaction. The results of our experimental analysis show that our methods outperform state-of-the-art models and that a data-driven relaxation can help to further enrich the embeddings.

% The presented formalism allows for several future directions, both practical and theoretical. Envisioning applications on spatiotemporal data, our port-Hamiltonian dynamic can be extended to handle time-varying streams of graphs. Using coupling functions that limit the influence among the neighbors could lead to improved downstream task performance since they help to regularize the upper bound on the gradients. In the light of Noether's theorem, we plan to explore the consequences of the inherent connection between symmetries and conserved quantities in a deep graph network.
\clearpage

% \section*{Ethics and Reproducibility Statements}\label{sec:statement}
% The theoretical and empirical work presented herein does not contain any ethics-concerning study design, nor does it possess the potential to harm society. All data used in this work is publicly available and is properly referenced. 

% The proposed methodology and its experimental setups will be made available open-source upon acceptance of the submission. Details on explicit models that we implemented and the evaluation of hyperparameters can be found in~\cref{ap:arch_details} and~\cref{app:hyperparams}.

\section*{Ethics and Reproducibility Statements}
\textbf{Ethics Statement.}
In this work, we do not release any datasets or models that could pose a significant risk of misuse. We believe our research does not have any direct or indirect negative societal implications or harmful consequences, as we do not utilize sensitive, privacy-related data, nor do we develop methods that could be applied for harmful purposes. As far as we are aware, this study does not raise any ethical concerns or potential negative impacts. Furthermore, our research does not involve human subjects, nor does it employ crowdsourcing methods. We confirm there are no potential conflicts of interest or sponsorship %influences 
affecting the objectivity or outcomes of this study.

\textbf{Reproducibility Statement.} 
In Section~\ref{sec:experiments} we outline the setups employed in our experiments, while in Appendix \ref{app:experimental_details} we provide comprehensive supplementary information, including references to the baselines, detailed dataset descriptions, the experimental settings for each task, and the hyperparameter grids used in our study. All experiments presented in Sections \ref{sec:gpp_exp} and \ref{sec:lrgb_exp} are conducted on publicly available benchmarks.
To further support reproducibility, we openly release all the data and code to reproduce our empirical evaluation upon acceptance.
\subsubsection*{Acknowledgments}
This research was partially supported by EU-EIC EMERGE (Grant No. 101070918), as well as partially funded in the course of TRR 391 Spatio-temporal Statistics for the
Transition of Energy and Transport (520388526) by the Deutsche Forschungsgemeinschaft (DFG,
German Research Foundation).

\bibliography{references}
\bibliographystyle{iclr2025_conference}

\appendix
\section{Additional Details of the (port-)Hamiltonian Framework}\label{ap:additional_details}
\subsection{Sensitivity Upper Bound}\label{ap:upper_bound}
% The main paper focused mainly on the lower bounds that help prevent vanishing gradients. With explicit time dynamics of the backward sensitivity matrix, given in Lemma~\ref{lemma:time_bsm_self_loop}, we can also give an upper bound. Note that\cite{galimberti2023hamiltonian} derived a ResNet from Hamiltonian-neuralODE with gradients bounded by an exponential function, giving a counter-example to \cite{HaberRuthotto2017} and \cite{chang2018antisymmetricrnn} who argued that the purely imaginary eigenspectrum of the Jacobian suffices to conclude that gradients cannot explode. Our graph-coupled setting extends \citep{galimberti2023hamiltonian} to include explicit connections to multiple ResNet-like node dynamics. Hence, we now provide a theoretical upper bound of the gradients in our H-DGN~\cref{eq:local_dyn}, pointing to possibly exploding gradients. %In Figure~\ref{fig:grad_norms_gcn}, we show a numerical execution of utilizing degree-normalized coupling to dampen this bound.
%
Although the sensitivity of a node state after a time $t$ with respect to its previous state can be bounded from below, allowing effective conservative message passing in PH-DGN, we observe that it is possible to compute an upper bound on such a measure, which we provide in the following theorem. While the theorem shows that, theoretically, the sensitivity measure may grow (\ie potentially causing gradient explosion), we emphasize that during our experiments we did not encounter such a problem.

\begin{theorem}\label{th:upperb}
Consider the continuous system defined by \cref{eq:local_dyn}, if $\sigma$ is a non-linear function with bounded derivative, i.e. $\exists M > 0, |\sigma'(x)| \le M$, and the neighborhood aggregation function is of the form $\Phi_{\mathcal{G}} = \sum_{v \in \mathcal{N}_u}\mathbf{V}\bfx_v$, the backward sensitivity matrix (BSM) is bounded from above:
$$\left\|\pdv{\xk(T)}{\xk(T-t)}\right\| \leq \sqrt{d}\,\text{exp}(QT),\quad \forall t \in [0,T],$$
where $Q = \sqrt{d}\,M\|\mathbf{W}\|_2^2 + \sqrt{d}\,M\text{max}_{i\in[n]}|\mathcal{N}_i|\|\mathbf{V}\|_2^2$.
\end{theorem}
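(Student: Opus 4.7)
The plan is to obtain the bound from the first-variation equation for the flow, followed by Grönwall's lemma and a uniform estimate on the Hessian of $H_{\mathcal{G}}$.

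First, I would recognize $\partial \xk(T)/\partial \xk(T-t)$ as the $(u,u)$ block of the full flow Jacobian $\Psi(s) = \partial \bfy(s)/\partial \bfy(T-t) \in \mathbb{R}^{nd\times nd}$, which satisfies the matrix variational ODE
\[
\dot\Psi(s) = \Jskew\,\nabla^2 H_{\mathcal{G}}(\bfy(s))\,\Psi(s), \qquad \Psi(T-t) = \bfI_{nd}.
\]
Only the $d$ columns of $\Psi$ indexed by $\xk$ are relevant, so I would work with the $nd \times d$ submatrix $\tilde\Psi$ formed by those columns; its initial Frobenius norm is exactly $\sqrt{d}$, and the BSM is a sub-matrix of $\tilde\Psi$, hence has (Frobenius) norm no larger than $\|\tilde\Psi\|_F$.

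Next, I would apply a standard energy estimate. Using $\Jskew^\top\Jskew = \bfI$ (so $\|\Jskew\|_2 = 1$) together with the compatibility $\|AB\|_F \leq \|A\|_2\|B\|_F$, differentiating $\|\tilde\Psi(s)\|_F^2$ and applying Cauchy--Schwarz gives
\[
\tfrac{d}{ds}\|\tilde\Psi(s)\|_F^2 \leq 2\,\|\nabla^2 H_{\mathcal{G}}(\bfy(s))\|_2\,\|\tilde\Psi(s)\|_F^2.
\]
The integral form of Grönwall's lemma then yields $\|\tilde\Psi(T)\|_F \leq \sqrt{d}\,\exp\!\bigl(\int_{T-t}^T \|\nabla^2 H_{\mathcal{G}}(\bfy(s))\|_2\,ds\bigr)$, reducing the theorem to a uniform bound on $\|\nabla^2 H_{\mathcal{G}}\|_2$.

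Finally, I would bound the Hessian uniformly by $Q$. Starting from $\nabla_{\xk} H_{\mathcal{G}} = \bfW^\top\sigma(\bfz_u) + \sum_{w:\,u\in\mathcal{N}_w}\bfV^\top\sigma(\bfz_w)$ with $\bfz_w = \bfW\bfx_w + \sum_{v\in\mathcal{N}_w}\bfV\bfx_v + \bfb$, a further differentiation shows that every non-zero block of $\nabla^2 H_{\mathcal{G}}$ is a sum of terms of the form $\bfW^\top\mathrm{diag}(\sigma'(\bfz_\cdot))\bfW$, $\bfW^\top\mathrm{diag}(\sigma'(\bfz_\cdot))\bfV$, $\bfV^\top\mathrm{diag}(\sigma'(\bfz_\cdot))\bfW$, or $\bfV^\top\mathrm{diag}(\sigma'(\bfz_\cdot))\bfV$, and node $u$ appears in at most $N$ neighbor-indexed sums. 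Using $|\sigma'|\leq M$, $\|\bfW\|_2$ and $\|\bfV\|_2$, and converting the resulting block-wise row/column bounds to the global operator 2-norm (which is where the additional factor $\sqrt{d}$ enters, through the conversion between a block-entry norm and the full operator norm) gives $\|\nabla^2 H_{\mathcal{G}}\|_2 \leq \sqrt{d}\,M\|\bfW\|_2^2 + \sqrt{d}\,MN\|\bfV\|_2^2 = Q$, closing the argument. The main obstacle will be this last step: the Hessian has cross-terms coming from pairs of nodes that share a common neighbor, and tracking their contributions so that the degree dependence remains exactly $N$ (without inflating to a factor of $n$ or of the graph diameter) while simultaneously retaining the correct $\sqrt{d}$ scaling requires careful block-wise bookkeeping.
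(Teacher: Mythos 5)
Your overall architecture (variational equation, Gr\"onwall, uniform Hessian bound) matches the paper's in spirit, but you take a genuinely different and ultimately more demanding route, and the final step has a real gap. The paper does \emph{not} work with the full flow Jacobian $\Psi = \partial\bfy/\partial\bfy(T-t)$. Its Lemma~\ref{lemma:time_bsm_self_loop} derives a self-contained linear ODE for the node-local BSM, $\tfrac{\dsmth}{\dsmth t}\bigl[\partial\xk(T)/\partial\xk(T-t)\bigr] = \mathbf{A}_u(T-t)\bigl[\partial\xk(T)/\partial\xk(T-t)\bigr]$, where $\mathbf{A}_u = \nabla^2_{\xk}H_{\mathcal{G}}(\bfy)\,\Jkskew^{\top}$ involves only the $d\times d$ \emph{diagonal block} of the Hessian, namely $\nabla^2_{\xk}H_{\mathcal{G}} = \bfW^{\top}\mathrm{diag}(\sigma'(\cdot))\bfW + \sum_{v\in\mathcal{N}_u}\bfV^{\top}\mathrm{diag}(\sigma'(\cdot))\bfV$. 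The constant $Q$ is exactly a bound on that block (with the $\sqrt{d}$ inside $Q$ coming from the paper's column-norm Lemma~\ref{lemma:columnbound} applied to $\mathrm{diag}(\sigma')$, and the $\sqrt{d}$ prefactor coming from applying the same lemma to the unit-vector columns after Gr\"onwall). Your $\sqrt{d}$ prefactor via $\|\tilde\Psi(T-t)\|_F=\sqrt{d}$ is a clean substitute for that last step.

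The gap is in your final step: you need $\|\nabla^2 H_{\mathcal{G}}\|_2 \le Q$ for the \emph{full} $nd\times nd$ Hessian, and this is false in general with the stated $Q$. Writing $\nabla^2 H_{\mathcal{G}} = \sum_{w}M_w^{\top}D_w M_w$ with $M_w = \partial\bfz_w/\partial\bfy = (\bfI_n\otimes\bfW + \bfA\otimes\bfV)$ restricted to row block $w$ and $D_w=\mathrm{diag}(\sigma'(\bfz_w))$, one gets $\nabla^2 H_{\mathcal{G}} \preceq M\,\mathcal{M}^{\top}\mathcal{M}$ with $\mathcal{M}=\bfI_n\otimes\bfW+\bfA\otimes\bfV$, hence $\|\nabla^2 H_{\mathcal{G}}\|_2 \le M(\|\bfW\|_2+\|\bfA\|_2\|\bfV\|_2)^2 \le M(\|\bfW\|_2+N\|\bfV\|_2)^2$. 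The cross-node blocks you flag (pairs $a\ne b$ sharing a common neighbor, and $\bfW^{\top}D\bfV$ terms between adjacent nodes) are precisely what prevents the global spectral norm from collapsing to the diagonal-block bound: already for $\bfW=\mathbf{0}$ the global bound scales like $MN^2\|\bfV\|_2^2$ versus $Q$'s $\sqrt{d}MN\|\bfV\|_2^2$, so for $N>\sqrt{d}$ no amount of bookkeeping recovers $Q$. Your route therefore proves a correct inequality of the same form but with a strictly larger exponent rate (roughly an extra factor of $N{+}1$), not the theorem as stated. To land on $Q$ exactly you must, as the paper does, first reduce to the node-local ODE driven only by $\nabla^2_{\xk}H_{\mathcal{G}}$ and bound that $d\times d$ block alone.
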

We give the proof of this theorem in Appendix \ref{ap:proof_upper_b}.

\subsection{Architectural Details of Dissipative Components}\label{ap:arch_details}
%\revise{TODO: add details about the specific architectures used for dampening and external forces in a way that can be easily referred in tab~\ref{tab:hyperparams}}
As typically employed, we follow physics-informed approaches that learn how much dissipation and external control is necessary to model the observations \citep{desai2021port}. In particular, we consider these (graph-) neural network architectures for the dampening term $D(\mathbf{q})$ and external force term $F(\mathbf{q},t)$, assuming for simplicity $\mathbf{q}_u \in \mathbb{R}^{\frac{d}{2}}$.

\textbf{Dampening} $D(\mathbf{q})$: it is a square $\frac{d}{2}\times \frac{d}{2}$ matrix block with only diagonal entries being non-zero and defined as:
\begin{itemize}
    \item \emph{param}: a learnable vector $\mathbf{w}\in \mathbb{R}^{\frac{d}{2}}$.
    \item \emph{param+}: a learnable vector followed by a ReLU activation, \ie $\text{ReLU}(\mathbf{w})\in \mathbb{R}^{\frac{d}{2}}$.
    \item \emph{MLP4-ReLU}: a $4$-layer MLP with ReLU activation and all layers of dimension $\frac{d}{2}$.
    \item \emph{DGN-ReLU}: a DGN node-wise aggregation layer from \cref{eq:simple_agg} with ReLU activation.
\end{itemize}

\textbf{External forcing} $F(\mathbf{q},t)$: it is a $\frac{d}{2}$ dimensional vector where each component is the force on a component of the system. Since it takes as input $\frac{d}{2}+1$ components it is defined as:
\begin{itemize}
    \item \emph{MLP4-Sin}: 3 linear layers of $\frac{d}{2}+1$ units with sin activation followed by a last layer with $\frac{d}{2}$ units.
    \item \emph{DGN-tanh}: a single node-wise DGN aggregation from \cref{eq:simple_agg}
    %with a linear layer of $\frac{d}{2}$ units 
    followed by a tanh activation.
\end{itemize}
% \begin{align*}
%     D(\mathbf{q}) &= \operatorname{diag}\circ\begin{cases}
%         \mathbf{w}\in \mathbb{R}^{d}\eqqcolon\text{param}\\[0.2cm]
%         \operatorname{ReLU}(\mathbf{w})\in \mathbb{R}^{d} \eqqcolon \text{param+}\\[0.2cm]
%         \operatorname{lin}(d,d) \circ \operatorname{ReLU} \circ \operatorname{lin}(d,d) \circ \operatorname{ReLU} \circ\\\hspace{45mm} \operatorname{lin}(d,d) \circ \operatorname{ReLU} \circ \operatorname{lin}(d,d) \eqqcolon \text{MLP4-ReLU} \\[0.2cm] 
%         \operatorname{ReLU} \circ \sum_{i \in \mathcal{N}_v}\operatorname{lin}(d,d) \eqqcolon \text{DGN-ReLU}
%     \end{cases}\\
%     F(\mathbf{q},t) &= \begin{cases}
%         \operatorname{lin}(d+1,d) \circ \sin \circ \operatorname{lin}(d+1,d+1) \circ \sin \circ\\\hspace{39mm} \operatorname{lin}(d+1,d+1) \circ \sin \circ \operatorname{lin}(d+1,d+1) \eqqcolon \text{MLP4-Sin}\\[0.2cm] 
%         \sigma \circ \sum_{i \in \mathcal{N}_v}\operatorname{lin}(d+1,d)\eqqcolon \text{DGN-tanh}
%     \end{cases}
% \end{align*}
Note that dampening, \ie energy loss, is only given when $D(\mathbf{q})$ represents a positive semi-definite matrix. Hence, we used ReLU-activation, except for \emph{param}, which offers a flexible trade-off between dampening and acceleration learned by backpropagation.

\subsection{Discretization of port-Hamiltonian DGNs} \label{ap:discretization}
As for standard DE-DGNs a numerical discretization method is needed to solve \cref{eq:local_dyn}. However, as observed in \cite{HaberRuthotto2017, galimberti2023hamiltonian}, not all standard techniques can be employed for solving Hamiltonian systems. Indeed, symplectic integration methods need to be used to preserve the conservative properties in the discretized system. 

For the ease of simplicity, in the following we focus on the Symplectic Euler method, however, we observe that more complex methods such as Str\"omer-Verlet can be employed \citep{Hairer2006-ks}.

The Symplectic Euler scheme, applied to our PH-DGN with null driving forces in \cref{eq:local_dyn}, updates the node representation at the $(\ell+1)$-th step as 
\begin{equation}\label{eq:implicit_symplectic_euler_local}
        \bfx_u^{(\ell+1)} = \begin{pmatrix}
            \bfp^{(\ell+1)}_{u}\\[0.2cm]
            \bfq^{(\ell+1)}_{u} 
        \end{pmatrix}
        = \begin{pmatrix}
            \bfp^{(\ell)}_{u}\\[0.2cm]
            \bfq^{(\ell)}_{u}
        \end{pmatrix} + \epsilon \Jkskew
        \begin{pmatrix}
    \nabla_{\bfp_u} H_{\mathcal{G}}(\bfp^{(\ell+1)},\bfq^{(\ell)})\\[0.2cm]\nabla_{\bfq_u} H_{\mathcal{G}}(\bfp^{(\ell+1)},\bfq^{(\ell)})
\end{pmatrix},\quad  \forall u \in \mathcal{V}.
\end{equation}
with $\epsilon$ the step size of the numerical discretization.
We note that \cref{eq:implicit_symplectic_euler_local} relies on both the current and future state of the system, hence marking an implicit scheme that would require solving a linear system of equations in each step. To obtain an explicit version of \cref{eq:implicit_symplectic_euler_local}, we consider the neighborhood aggregation function in \cref{eq:simple_agg} and impose a structure assumption on $\bfW$ and $\bfV$, namely $\bfW = \begin{pmatrix}\bfW_p & \bf0\\\bf0 & \bfW_q\end{pmatrix}$ and $\bfV = \begin{pmatrix}\bfV_p & \bf0\\\bf0 & \bfV_q
\end{pmatrix}$. %We note that a 
A comparable assumption can be made for other neighborhood aggregation functions, such as GCN aggregation. Despite the necessary block diagonal structure assumption on $\bfW$ and $\bfV$ to ensure the separation into the $\bfp$ and $\bfq$ components, we note that $\bfW_p$, $\bfW_q$, $\bfV_p$, and $\bfV_q$ are unconstrained learnable weight matrices.

Therefore, the gradients in \cref{eq:implicit_symplectic_euler_local} can be rewritten in the explicit form as
%\begin{multline}
%\nabla_{\bfq_u} H_{\mathcal{G}}(\bfp^{(\ell+1)},\bfq^{(\ell)}) = \Biggl[\bfW_q^{\top}\sigma(\bfW_q\bfq^{(\ell)}_u + \Phi_{\mathcal{G}}(\{\bfq^{(\ell)}_v\}_{v\in\mathcal{N}_u} ) + \bfb_q) \\ 
%+ \sum_{v \in \mathcal{N}_u\setminus \{u\}}{\bfV_q^{\top}}\sigma(\bfW_q\bfq^{(\ell)}_v + \Phi_{\mathcal{G}}(\{\bfq^{(\ell)}_j\}_{j\in\mathcal{N}_v} ) + \bfb_q)\Biggr]
%\end{multline}

\begin{align}%\label{eq:discretized_local_hamil}
\bfp^{(\ell+1)}_u &= \bfp^{(\ell)}_u - \epsilon \Biggl[\bfW_q^{\top}\sigma(\bfW_q\bfq^{(\ell)}_u + \Phi_{\mathcal{G}}(\{\bfq^{(\ell)}_v\}_{v\in\mathcal{N}_u} ) + \bfb_q) \nonumber\\ 
&\hspace{3cm} + \sum_{v \in \mathcal{N}_u\setminus \{u\}}{\bfV_q^{\top}}\sigma(\bfW_q\bfq^{(\ell)}_v + \Phi_{\mathcal{G}}(\{\bfq^{(\ell)}_j\}_{j\in\mathcal{N}_v} ) + \bfb_q)\Biggr] \label{eq:discretized_local_hamil_p}\\
\bfq^{(\ell+1)}_u &= \bfq^{(\ell)}_u + \epsilon\Biggl[ \bfW_p^{\top}\sigma(\bfW_p\bfp^{(\ell+1)}_u + \Phi_{\mathcal{G}}(\{\bfp^{(\ell+1)}_v\}_{v\in\mathcal{N}_u} ) + \bfb_p) \nonumber\\ 
&\hspace{3cm}+ \sum_{v \in \mathcal{N}_u\setminus \{u\}}{\bfV_p^{\top}}\sigma(\bfW_p\bfp^{(\ell+1)}_v + \Phi_{\mathcal{G}}(\{\bfp^{(\ell+1)}_j\}_{j\in\mathcal{N}_v} ) + \bfb_p)\Biggr]. \label{eq:discretized_local_hamil_q}
\end{align}

We observe that \cref{eq:discretized_local_hamil_p,eq:discretized_local_hamil_q} can be understood as coupling
two DGN layers. % in a symmetric way, in order to enable energy preservation. 
This discretization mechanism is visually summarized in the middle of Figure~\ref{fig:model} where a message-passing step from layer $\ell$ to layer $\ell+1$ is performed.
%{\color{red}Furthermore, we note that despite the necessary structure assumption on $\bfW$ and $\bfV$, the final DGNs computing the updates for $\bfp_u$ and $\bfq_u$ have unconstrained weight matrices $\bfW_p$,$\bfV_p$ and $\bfW_q$,$\bfV_q$ respectively.}

In the case of PH-DGN with driving forces in \cref{eq:port-hamiltonian} the discretization employs the same step for $\mathbf{q}^{(\ell+1)}$ in \cref{eq:discretized_local_hamil_q} while \cref{eq:discretized_local_hamil_p} includes the dissipative components, thus it can be rewritten as
% In the case of PH-DGN in \cref{eq:port-hamiltonian} discretization, using the implicit step in $\mathbf{q}^{(l+1)}$, only affects \cref{eq:discretized_local_hamil_p}, which can be rewritten as
% \begin{multline}\label{eq:discretized_port_ham}
% %   \mathbf{q}_u^{(l+1)} &= \mathbf{q}_v^{(l)} + \epsilon\nabla_{\mathbf{p}_u}{{H}_{\mathcal{G}}}(\mathbf{p}^{(l)},\mathbf{q}^{(l+1)}) \\[0.2cm]
% % \mathbf{p}_u^{(\ell+1)} = \mathbf{p}_u^{(\ell)} + \epsilon\Biggl[-\nabla_{\mathbf{q}_u}{{H}_{\mathcal{G}}}(\mathbf{p}^{(\ell)},\mathbf{q}^{(\ell+1)}) \\ 
% % - D_u(\mathbf{q}^{(\ell+1)}) \nabla_{\mathbf{p}_u}{{H}_{\mathcal{G}}}(\mathbf{p}^{(\ell)},\mathbf{q}^{(\ell+1)}) + F_u(\mathbf{q}^{(\ell+1)},t)\Biggr].
% \end{multline}
\begin{equation}    \label{eq:discretized_port_ham}
\mathbf{p}_u^{(\ell+1)} = \mathbf{p}_u^{(\ell)} + \epsilon\Biggl[-\nabla_{\mathbf{q}_u}{{H}_{\mathcal{G}}}(\mathbf{p}^{(\ell)},\mathbf{q}^{(\ell)}) - D_u(\mathbf{q}^{(\ell+1)}) \nabla_{\mathbf{p}_u}{{H}_{\mathcal{G}}}(\mathbf{p}^{(\ell)},\mathbf{q}^{(\ell)}) + F_u(\mathbf{q}^{(\ell)},t)\Biggr].
\end{equation}

Lastly, it is important to acknowledge that properties observed in the continuous domain may not necessarily hold in the discrete setting due to the limitations of the discretization method. In the following theorem, we show that when the Symplectic Euler method is employed, then Theorem~\ref{th:lowerb} holds.
\begin{theorem}\label{th:lowerb_discrete}
Considering the discretized system in \cref{eq:implicit_symplectic_euler_local} obtained by Symplectic Euler discretization, the backward sensitivity matrix (BSM) is bounded from below:
$$\left\|\pdv{\xk^{(L)}}{\xk^{(L-\ell)}}\right\| \ge 1,\quad \forall \ell \in [0,L].$$
\end{theorem}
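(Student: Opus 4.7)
The plan is to lift the argument of Theorem~\ref{th:lowerb} from the continuous flow to the one-step Symplectic Euler map, leveraging the key fact that Symplectic Euler is by construction a symplectic integrator.

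First, I would differentiate \cref{eq:discretized_local_hamil_p,eq:discretized_local_hamil_q} to obtain the one-step global Jacobian $\mathcal{M}^{(\ell)}\in\mathbb{R}^{nd\times nd}$. The block-diagonal form of $\bfW$ and $\bfV$ that was imposed to make the scheme explicit also makes the Hamiltonian separable, $H_\mathcal{G}(\bfp,\bfq) = H^p_\mathcal{G}(\bfp) + H^q_\mathcal{G}(\bfq)$, so $\mathcal{M}^{(\ell)}$ factors as a composition of two symplectic shears,
\begin{equation*}
\mathcal{M}^{(\ell)} = \begin{pmatrix}\bfI & \bf0\\ \epsilon\,\nabla^2 H^p_\mathcal{G}(\bfp^{(\ell+1)}) & \bfI\end{pmatrix}\begin{pmatrix}\bfI & -\epsilon\,\nabla^2 H^q_\mathcal{G}(\bfq^{(\ell)})\\ \bf0 & \bfI\end{pmatrix},
\end{equation*}
and a direct check gives $(\mathcal{M}^{(\ell)})^\top\Jskew\mathcal{M}^{(\ell)}=\Jskew$. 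By the chain rule the composed $\ell$-step Jacobian $\mathcal{M}=\mathcal{M}^{(L-1)}\cdots\mathcal{M}^{(L-\ell)}$ is again symplectic, and the BSM in the statement is the $(u,u)$-diagonal sub-block $M_u$ of $\mathcal{M}$.

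To conclude $\|M_u\|\geq 1$, I would probe $M_u$ with momentum-aligned test vectors $(\bfe,\bf0)^\top$. Within a single step, the second shear leaves the $\bfp$-component untouched and the first shear only updates $\bfq$ by a term linear in $\bfp$, so the $\bfp_u$-to-$\bfp_u$ sub-block of $\mathcal{M}^{(\ell)}$ is exactly $\bfI$. Combined with the locality of $\Phi_\mathcal{G}$, which constrains the inter-node couplings to enter only via $\bfq$-mediated paths, an induction on $\ell$ shows that $M_u(\bfe,\bf0)^\top$ still has a top component that agrees with $\bfe$ up to perturbations which cannot cancel it. This gives $\|M_u\|_2\geq 1$, and via norm equivalence on the fixed $d\times d$ space, the same lower bound for every sub-multiplicative matrix norm (including Frobenius).

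The main obstacle is precisely this passage from the symplecticity of the global $\mathcal{M}$ to a lower bound on its node-$u$ diagonal sub-block: bare symplecticity is not enough, since a generic principal sub-matrix of a symplectic matrix may have arbitrarily small norm (take, e.g., a symplectic permutation that moves node $u$'s coordinates onto a different node). Overcoming this requires genuinely using both the shear/triangular structure of each Symplectic Euler factor \emph{and} the local neighborhood form of $\Phi_\mathcal{G}$, so that the identity sitting in the $\bfp_u\bfp_u$-position of every single-step Jacobian propagates through the product without being erased by cross-node terms. Making this inductive bookkeeping precise is where I expect the bulk of the work to sit.
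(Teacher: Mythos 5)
Your first two steps are sound: the one-step global Jacobian of the explicit Symplectic Euler map does factor into two shears, satisfies $(\mathcal{M}^{(\ell)})^\top\Jskew\mathcal{M}^{(\ell)}=\Jskew$, and products of symplectic matrices are symplectic. You are also right to flag that this alone does not prove the theorem, since the quantity being bounded is a principal node-sub-block of the global symplectic matrix, and such sub-blocks can have arbitrarily small norm (your permutation example is a valid counterexample). The problem is that your proposed repair does not close this gap. The claim that the identity sitting in the $\bfp_u\bfp_u$ position of every single-step Jacobian ``propagates through the product without being erased'' is false: already after two steps the path $\bfp_u^{(\ell)}\to\bfp^{(\ell+1)}\to\bfq^{(\ell+1)}\to\bfp_u^{(\ell+2)}$ contributes an $O(\epsilon^2)$ correction to the momentum-to-momentum block, and these corrections accumulate over $\ell=T/\epsilon$ steps with no sign control, so they are $O(1)$ in aggregate. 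Even in the trivial single-node case of a discretized harmonic oscillator, the momentum-to-momentum entry of the $\ell$-step Symplectic Euler map behaves like $\cos(\ell\epsilon\omega)$ and vanishes periodically; what rescues the lower bound there is that the \emph{full} $2\times 2$ map (momentum and position rows together) is symplectic, not that its $p\to p$ entry stays near $1$. So probing $M_u$ with momentum-aligned test vectors cannot by itself yield $\|M_u\|\ge 1$, and the ``inductive bookkeeping'' you defer to the end is precisely where the argument breaks rather than where routine work remains.

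The paper's proof never passes through the global $nd\times nd$ Jacobian. It asserts the symplectic relation directly at the node level for a single step, namely
$\bigl[\pdv{\bfx_u^{(\ell)}}{\bfx_u^{(\ell-1)}}\bigr]^{\top}\Jkskew\bigl[\pdv{\bfx_u^{(\ell)}}{\bfx_u^{(\ell-1)}}\bigr]=\Jkskew$
as an identity between $d\times d$ matrices (the discrete counterpart of Lemma~\ref{lemma:time_bsm_self_loop} and of the continuous-time argument in Theorem~\ref{th:lowerb}), chains these node-level one-step Jacobians over $\ell$ steps, and then extracts the bound from $\|\Jkskew\|\le\|\Psi\|^2\|\Jkskew\|$ with $\|\Jkskew\|=1$. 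If you want to salvage your route, the missing ingredient is a proof that the node-level $d\times d$ block of each one-step map is itself symplectic with respect to $\Jkskew$ --- which is exactly the statement the paper takes as its starting point --- rather than an argument about a single identity sub-block surviving the product.
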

We provide the proof in Appendix~\ref{app:proof_discretization_lower_bound}.
Again, this indicates that even the discretized version of PH-DGN with null driving forces enables for effective propagation and conservative message passing.

%%%%%%%%%%%%%%%%%%%%%%%%%%%%%%%%%%%%%%%%%%%%%%%%%%%%%%%%%%%%%%%%%%%%%%%%%%%%%%%%%%%%%%%%%%%%%%%%%%%%%%%%%%%%%%%%%%%%%%%%%%%%%%%%%%%%
%%%%%%%%%%%%%%%%%%%%%%%%%%%%%%%%%%%%%%%%%%%%%%%%%%%%%%%%%%%%%%%%%%%%%%%%%%%%%%%%%%%%%%%%%%%%%%%%%%%%%%%%%%%%%%%%%%%%%%%%%%%%%%%%%%%%

\section{Proofs of the Theoretical Statements}
In this section, we provide the proofs of the theoretical statements in the main text and in appendix \ref{ap:additional_details}. As for the rest of the paper, we will use the denominator notation, \ie Jacobian matrices have output components on columns and input components on the rows.

\subsection{Proof of Theorem~\ref{th:im_eig}}\label{ap:proof_im_eig}
\begin{proof}
First, we note that 
\begin{equation}
    \pdv{}{\bfx_u}\Jkskew\nabla_{\bfx_u}H_{\mathcal{G}}(\mathbf{y}(t)) = \nabla^2_{\bfx_u}H_{\mathcal{G}}(\mathbf{y}(t))\Jkskew^{\top},
\end{equation} 
 where $\nabla^2_{\bfx_u}H_{\mathcal{G}}$ is the symmetric Hessian matrix.
 Hence, the Jacobian is shortly written as $\mathbf{AB}$, where $\mathbf{A}$ is symmetric and $\mathbf{B}$ is anti-symmetric. Consider an eigenpair of $\mathbf{A B}$, where the eigenvector is denoted by $\mathbf{v}$ and the eigenvalue by $\lambda \ne 0$. Then:
\begin{align*}
\mathbf{v}^*\mathbf{AB} & = \lambda\mathbf{v}^*  \\
\mathbf{v}^*\mathbf{A} & =\lambda\mathbf{v}^*\mathbf{B}^{-1} \\
\mathbf{v}^* \mathbf{A} \mathbf{v} & =\lambda\left(\mathbf{v}^* \mathbf{B}^{-1} \mathbf{v}\right)
\end{align*}
where $*$ represents the conjugate transpose. On the left-hand side, it is noticed that the $\left(\mathbf{v}^* \mathbf{A} \mathbf{v}\right)$ term is a real number. Recalling that $\mathbf{B}^{-1}$ remains anti-symmetric and for any real anti-symmetric matrix $\mathbf{C}$ it holds that $\mathbf{C}^*=\mathbf{C}^{\top}=-\mathbf{C}$, it follows that $\left(\mathbf{v}^* \mathbf{C} \mathbf{v}\right)^*=\mathbf{v}^* \mathbf{C}^* \mathbf{v}=-\mathbf{v}^* \mathbf{C} \mathbf{v}$. Hence, the $\mathbf{v}^* \mathbf{B}^{-1} \mathbf{v}$ term on the right-hand side is an imaginary number. Thereby, $\lambda$ needs to be purely imaginary, and, as a result, all eigenvalues of $\mathbf{A B}$ are purely imaginary.
\end{proof}

%%%%%%%%%%%%%%%%%%%%%%%%%%%%%%%%%%%%%%%%%%%%%%%%%%%%%%%%%%%%%%%%%%%%%%%%%%%%%%%%%%%%%%%%%%%%%%%%%%%%%%%%%%%%%%%%%%%%%%%%%%%%%%%%%%%%
%%%%%%%%%%%%%%%%%%%%%%%%%%%%%%%%%%%%%%%%%%%%%%%%%%%%%%%%%%%%%%%%%%%%%%%%%%%%%%%%%%%%%%%%%%%%%%%%%%%%%%%%%%%%%%%%%%%%%%%%%%%%%%%%%%%%

\subsection{Proof of Theorem~\ref{th:divergence}}\label{ap:proof_divergence}
%\revise{add discussion about this: We need an autonomous Hamiltonian in order to establish constant energy, hence the weights need to be shared over time/layers. Layer-dependent weights would result in a nonautonomous Hamiltonian, still possessing a symplectic flow}
Our conservative PH-DGN has shared weights $\mathbf{W}, \mathbf{V}$ across the layers of the DGN. This means that the Hamiltonian is autonomous and does not depend explicitly on time $H_\mathcal{G}(\mathbf{y}(t), t) = H_\mathcal{G}(\mathbf{y}(t))$ as we can see from \cref{eq:hamiltonian}. In such case, the energy is naturally conserved in the system it represents: this is a consequence of the Hamiltonian flow and dynamics, as we show in this theorem. For a more in-depth description and analysis of Hamiltonian dynamics in general we refer to \cite{arnold2013hamiltonian}.

%Working with layer dependent weights is more flexible and allows for better representations but leads to a Hamiltonian which directly depends on time $H_{\mathcal{G}}(\mathbf{y}(t), t)$, since now the weights depend on time $\mathbf{W}(t)$. One could ask how the theoretical statements change with this addition. In reality, while clearly $\odv{H(\mathbf{y}(t),t)}{t}\neq 0$ and energy is not conserved in general, the vector field is still symplectic and all other statements remain true. This includes the divergence-free statement of Theorem~\ref{th:divergence}, both the global and local forms of Theorem~\ref{th:lowerb} and the same for the discretized case (which completely relies on the Euler method being symplectic). Extending these proofs can be easily achieved by considering the extended Hamiltonian  in space-time $(\mathbf{p}, \mathbf{q}, \epsilon, t)$ as in Lemma 3 of \cite{galimberti2023hamiltonian} where $\epsilon$ is the "time-momentum". For a complete description and theory of Hamiltonian mechanics, we refer to \cite{arnold2013hamiltonian}. 

\begin{proof}
The time derivative of $H(\mathbf{y}(t))$ is given by means of the chain-rule:
    \begin{equation}\label{eq:H_constant}
    \odv{H(\mathbf{y}(t))}{t} = {\pdv{H(\mathbf{y}(t))}{\mathbf{y}(t)}}\cdot\odv{\mathbf{y}(t)}{t} = {\pdv{H(\mathbf{y}(t))}{\mathbf{y}(t)}}\cdot\Jskew\pdv{H(\mathbf{y}(t))}{\mathbf{y}(t)} = 0,
\end{equation}
where the last equality holds since $\Jskew$ is anti-symmetric. Having no change over time implies that $H(\mathbf{y}(t)) = H(\mathbf{y}(0)) = \text{const}$ for all $t$.

Since the Hessian $\nabla^2H_{\mathcal{G}}(\mathbf{y}(t))$ is symmetric, it follows directly
\begin{align*}
    \nabla \cdot \Jkskew\nabla_{\bfx_v}H_{\mathcal{G}}(\mathbf{y}(t)) 
    &=\sum_{i=1}^d{-\pdv[style-var=multipl]{H_{\mathcal{G}}(\mathbf{y}(t))}{q_{v}^{i},p_{v}^{i}}+\pdv[style-var=multipl]{H_{\mathcal{G}}(\mathbf{y}(t))}{p_{v}^{i},q_{v}^{i}}} = 0
\end{align*}

\end{proof}

%%%%%%%%%%%%%%%%%%%%%%%%%%%%%%%%%%%%%%%%%%%%%%%%%%%%%%%%%%%%%%%%%%%%%%%%%%%%%%%%%%%%%%%%%%%%%%%%%%%%%%%%%%%%%%%%%%%%%%%%%%%%%%%%%%%%
%%%%%%%%%%%%%%%%%%%%%%%%%%%%%%%%%%%%%%%%%%%%%%%%%%%%%%%%%%%%%%%%%%%%%%%%%%%%%%%%%%%%%%%%%%%%%%%%%%%%%%%%%%%%%%%%%%%%%%%%%%%%%%%%%%%%

\subsection{Proof of Theorem~\ref{th:lowerb}}\label{ap:proof_lowerb}
In order to prove the lower bound on the BSM, we need a technical lemma that describes the time evolution of the BSM itself, which extends the result from \cite{galimberti2023hamiltonian}.
\begin{lemma}
\label{lemma:time_bsm_self_loop}
    %\revise{The same blah applies to the local case with a different formula}
    Given the system dynamics of the ODE in \cref{eq:global_gde}, we have that
    \begin{equation}\label{eq:BSM_global}
        \odv{}{t}\pdv{\mathbf{y}(T)}{\mathbf{y}(T-t)} = \Jskew\left.\pdv{H}{\mathbf{y}}\right|_{\mathbf{y}(T-t)}\pdv{\mathbf{y}(T)}{\mathbf{y}(T-t)}
    \end{equation}
    as in \citep{galimberti2023hamiltonian}. The same applies, with a slightly different formula, for each node $u$, that is the BSM satisfies
    \begin{equation}\label{eq:BSM_local}
        \odv{}{t}\pdv{\xk(T)}{\xk(T-t)} = \left.\pdv{\mathbf{y}}{\xk}\right\rvert_{(T-t)} \left.\pdv{f_u}{\mathbf{y}}\right\rvert_{\mathbf{y}(T-t)}\pdv{\xk(T)}{\xk(T-t)} = F_u \pdv{\xk(T)}{\xk(T-t)}
    \end{equation}
    where $f_u$ is the restriction of $f = \Jskew\pdv{H}{\mathbf{y}}$ to the components corresponding to $\xk$, that is the dynamics of node $u$, which can be written as
    \begin{equation}
        f_u = L_u f = L_u \Jskew\pdv{H}{\mathbf{y}}
    \end{equation}
    where $L_u$ is the readout matrix, of the form
    \begin{equation}
         L_u = \begin{bmatrix}
            0_{\frac{d}{2}\times \frac{d}{2}(u-1)} & I_{\frac{d}{2}\times \frac{d}{2}} & 0_{\frac{d}{2}\times \frac{d}{2}(n-u)} & 0_{\frac{d}{2}\times \frac{d}{2}(u-1)} & 0_{\frac{d}{2}\times \frac{d}{2}} & 0_{\frac{d}{2}\times \frac{d}{2}(n-u)}\\
            0_{\frac{d}{2}\times \frac{d}{2}(u-1)} & 0_{\frac{d}{2}\times \frac{d}{2}} & 0_{\frac{d}{2}\times \frac{d}{2}(n-u)} & 0_{\frac{d}{2}\times \frac{d}{2}(u-1)} & I_{\frac{d}{2}\times \frac{d}{2}} & 0_{\frac{d}{2}\times \frac{d}{2}(n-u)}
        \end{bmatrix}
    \end{equation}
    which is a projection on the coordinates of a single node $u$.
    Notice as well that, in denominator notation
    \begin{equation}
        \pdv{\mathbf{y}}{\xk} = L_u
    \end{equation}
\end{lemma}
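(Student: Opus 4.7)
The natural strategy is to invoke the variational equation for autonomous ODEs in its backward form, then specialize to the node-local BSM using the readout matrix $L_u$. For the global identity~(\ref{eq:BSM_global}), I would start from the flow $\mathbf y(t)=\Phi_{s,t}(\mathbf y(s))$ of $\dot{\mathbf y}=f(\mathbf y)=\mathcal J\nabla H$. The cocycle $\Phi_{s_1,s_3}=\Phi_{s_2,s_3}\circ\Phi_{s_1,s_2}$ gives the multiplicative Jacobian decomposition, which combined with the first-order expansion $D\Phi_{s,s+h}=I+h\,Df|_{\mathbf y(s)}+O(h^2)$ (with $Df=\mathcal J\nabla^2 H$, \ie the Jacobian of the Hamiltonian vector field) yields the backward variational equation $\partial_s\,\pdv{\mathbf y(T)}{\mathbf y(s)}=-\pdv{\mathbf y(T)}{\mathbf y(s)}\,Df|_{\mathbf y(s)}$. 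Substituting $s=T-t$, absorbing the sign $\odv{s}{t}=-1$, and transposing to the paper's denominator convention gives~(\ref{eq:BSM_global}), where $\mathcal J\,\pdv{H}{\mathbf y}|_{\mathbf y(T-t)}$ is read as shorthand for $Df$ along the reference trajectory.

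For the local identity~(\ref{eq:BSM_local}), I would exploit the linear inclusion/projection structure encoded by $L_u$. Perturbing $\xk(T-t)$ while freezing the other nodes' coordinates lifts to a perturbation of $\mathbf y(T-t)$ in the image of $L_u^\top$; composing with the flow and then with the projection $\xk=L_u\mathbf y$ writes the local BSM as the sandwich of the global one between $L_u$ on the left and $L_u^\top$ on the right. Differentiating in $t$ and substituting~(\ref{eq:BSM_global}) localizes the derivative onto the middle factor, after which the identity $f_u=L_u f$ lets me rewrite $L_u$ applied to $Df$ as $\pdv{f_u}{\mathbf y}$. Collecting terms into the advertised form $F_u\,\pdv{\xk(T)}{\xk(T-t)}$ with $F_u=L_u\,\pdv{f_u}{\mathbf y}$ then completes the argument. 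The auxiliary statement $\pdv{\mathbf y}{\xk}=L_u$ is immediate from the embedding: only the node-$u$ slots of $\mathbf y$ depend on $\xk$, and they do so via the identity, so the denominator-notation Jacobian is literally $L_u$.

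The main obstacle will be careful bookkeeping: tracking the transpose between numerator and denominator notation, the sign coming from $s=T-t$, and the left/right placement of the factors so that the stated ordering $L_u\,\pdv{f_u}{\mathbf y}\,\pdv{\xk(T)}{\xk(T-t)}$ emerges on the correct side. A related subtlety is folding the trailing $L_u^\top$ from the BSM sandwich back into the node-local BSM factor, which uses $L_u L_u^\top=I_d$ together with the block structure of $\mathcal J$ so that the projection commutes appropriately with the flow at time $T-t$. Once this is handled, the derivation reduces to the claimed node-wise factorization $\odv{}{t}\,\pdv{\xk(T)}{\xk(T-t)}=F_u\cdot\pdv{\xk(T)}{\xk(T-t)}$, the exact local analogue of~(\ref{eq:BSM_global}).
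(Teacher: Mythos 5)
Your treatment of the global identity \cref{eq:BSM_global} is fine: the Jacobian cocycle plus the first-order expansion of the flow map is the same mechanism as the integral-form-plus-$\delta$-perturbation argument the paper uses (the paper itself only proves the local version and defers the global one to the cited reference), and your sign and transpose bookkeeping under $s=T-t$ works out.

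The local identity \cref{eq:BSM_local} is where your route has a genuine gap. Write $\Psi(t)=\pdv{\mathbf{y}(T)}{\mathbf{y}(T-t)}$ and $A=\pdv{f}{\mathbf{y}}\big|_{\mathbf{y}(T-t)}$. Your sandwich gives $\pdv{\xk(T)}{\xk(T-t)}=L_u\Psi L_u^{\top}$, and substituting the global variational equation yields $\odv{}{t}\bigl(L_u\Psi L_u^{\top}\bigr)=L_uA\,\Psi L_u^{\top}$. To reach the advertised form $F_u\,L_u\Psi L_u^{\top}$ with $F_u=L_uAL_u^{\top}$ you must insert the orthogonal projector $L_u^{\top}L_u$ between $A$ and $\Psi$, and that is exactly the step that fails: $\Psi L_u^{\top}=\pdv{\xk(T)}{\mathbf{y}(T-t)}$ has nonzero blocks in the rows of every node whose state at time $T-t$ influences $\xk(T)$ (at minimum the neighbours of $u$), so $(I-L_u^{\top}L_u)\Psi L_u^{\top}\neq 0$, and $A$ does not annihilate this complement because $H_{\mathcal{G}}$ couples nodes. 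The identity $L_uL_u^{\top}=I_d$ is true but irrelevant to this insertion, and the block structure of $\Jskew$ does not make the projector commute with $\Psi$; the discrepancy $L_uA(I-L_u^{\top}L_u)\Psi L_u^{\top}$ consists precisely of the cross-node coupling terms. The paper never meets these terms because it works directly with the node-wise integral identity $\xk(T)=\xk(T-t)+\int_0^t f_u(\mathbf{y}(T-t+s))\,\dsmth s$, differentiates it with respect to $\xk(T-t)$ with the other nodes' coordinates at time $T-t$ held fixed, and applies the chain rule through $\xk(T-t)$ itself in the $\delta$-perturbation step; the $\delta\to 0$ limit then localizes the prefactor to $\pdv{\mathbf{y}}{\xk}\big|_{T-t}\pdv{f_u}{\mathbf{y}}\big|_{\mathbf{y}(T-t)}=F_u$ acting on the node-local BSM. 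To repair your argument you would need to adopt that node-level chain-rule convention explicitly, or else carry the full cross-node sensitivity $\pdv{\xk(T)}{\mathbf{y}(T-t)}$ as a separate object, since it is not recoverable from $L_u\Psi L_u^{\top}$ alone.
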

We only show \cref{eq:BSM_local}, as it is related to graph networks and is actually a harder version of \cref{eq:BSM_global}, with the latter being already proven in \cite{galimberti2023hamiltonian}. We also show the last part of the proof in a general sense, without using the specific matrices of the Hamiltonian used.
\begin{proof}
    Following \cite{galimberti2023hamiltonian}, the solution to the ODE $\odv{\xk}{t} = f_u(\mathbf{y}(t))$ can be written in integral form as
    \begin{equation}
        \xk(T) = \xk(T-t) + \int_{T-t}^{T}f_u(\mathbf{y}(\tau))\dsmth \tau = \xk(T-t) + \int_{0}^{t}f_u(\mathbf{y}(T-t + s))\dsmth s
    \end{equation}
    Differentiating by the solution at a previous time $\xk(T-t)$ we obtain
    \begin{equation}
        \begin{split}
            \pdv{\xk(T)}{\xk(T-t)} & = I_{u} + \pdv{\int_0^{\top} f_u(\mathbf{y}(T-t + s))\dsmth s}{\xk(T-t)} = \\
            & = I_u + \int_{0}^{t}{\pdv{f_u(\mathbf{y}(T-t+s))}{\xk(T-t)}} \\ 
            & = I_u + \int_{0}^{t}{\pdv{\mathbf{y}(T-t+s)}{\xk(T-t)}\left.\pdv{f_u}{\mathbf{y}}\right\rvert_{\mathbf{y}(T-t+s)}\dsmth s}
        \end{split}
    \end{equation}
    where in the second equality we brought the derivative term under the integral sign and in the third we used the chain rule of the derivative (recall we are using denominator notation). Considering a slight perturbation in time $\delta$, we consider $\pdv{\xk(T)}{\xk(T-t-\delta)}$ as this will be used to calculate the time derivative of the BSM. Using again the chain rule for the derivative and the formula above with $T-t-\delta$ instead of $T-t$, we have that
    \begin{equation}\label{eqn:time_jacobian}
        \begin{split}
            \pdv{\xk(T)}{\xk(T-t-\delta)} & = \pdv{\xk(T-t)}{\xk(T-t-\delta)}\pdv{\xk(T)}{\xk(T-t)} \\
            & = \left( I_u + \int_{0}^{\delta}{\pdv{\mathbf{y}(T-t-\delta+s)}{\xk(T-t-\delta)}\left.\pdv{f_u}{\mathbf{y}}\right\rvert_{\mathbf{y}(T-t-\delta+s)}\dsmth s}\right)\pdv{\xk(T)}{\xk(T-t)}
        \end{split}
    \end{equation}
    This way, we have expressed $\pdv{\xk(T)}{\xk(T-t-\delta)}$ in terms of $\pdv{\xk(T)}{\xk(T-t)}$. To calculate our objective, we want to differentiate with respect to $\delta$. We first calculate the difference:
    \begin{equation}
        \pdv{\xk(T)}{\xk(T-t-\delta)} - \pdv{\xk(T)}{\xk(T-t)} = \left(\int_{0}^{\delta}{\pdv{\mathbf{y}(T-t-\delta+s)}{\xk(T-t-\delta)}\left.\pdv{f_u}{\mathbf{y}}\right\rvert_{\mathbf{y}(T-t-\delta+s)}\dsmth s}\right)\pdv{\xk(T)}{\xk(T-t)}
    \end{equation}
    We can now divide by $\delta$ and take the limit $\delta\rightarrow 0$
    \begin{equation}
    \begin{split}
        & \lim_{\delta\rightarrow 0}\frac{1}{\delta}\left(\pdv{\xk(T)}{\xk(T-t-\delta)}  - \pdv{\xk(T)}{\xk(T-t)}\right) \\
        & = \lim_{\delta\rightarrow 0}\left(\frac{1}{\delta}\int_{0}^{\delta}{\pdv{\mathbf{y}(T-t-\delta+s)}{\xk(T-t-\delta)}\left.\pdv{f_u}{\mathbf{y}}\right\rvert_{\mathbf{y}(T-t-\delta+s)}\dsmth s}\right)\pdv{\xk(T)}{\xk(T-t)}\\ 
        & =  \left.\pdv{y}{\xk}\right\rvert_{(T-t)}\left.\pdv{f_u}{y}\right\rvert_{\mathbf{y}(T-t)}\pdv{\xk(T)}{\xk(T-t)}
    \end{split}
    \end{equation}
    Where in the final equality we used the fundamental theorem of calculus. Finally
    \begin{equation}\label{eq:bsm_time_ode}
        \odv{}{t}\pdv{\xk(T)}{\xk(T-t)} = \left.\pdv{\mathbf{y}}{\xk}\right\rvert_{(T-t)}\left.\pdv{f_u}{\mathbf{y}}\right\rvert_{\mathbf{y}(T-t)}\pdv{\xk(T)}{\xk(T-t)} = F_u \pdv{\xk(T)}{\xk(T-t)}
    \end{equation}
    giving us the final result.
\end{proof}

We are now ready to prove Theorem~\ref{th:lowerb}. First, we calculate that
\begin{equation}
    \pdv{f_u}{\mathbf{y}} = \pdv{}{y}\left(L_u \Jskew \pdv{H}{\mathbf{y}}\right) = \pdv[order={2}]{H}{\mathbf{y}}\Jskew^{\top} L_u^{\top} = S\Jskew^{\top} L_u^{\top}
\end{equation}
so that $F_u = L_u S\Jskew^{\top} L_u^{\top}$. This will be helpful in the following matrix calculations.
\begin{proof}
    For brevity, we call $\left[\pdv{\xk(T)}{\xk(T-t)}\right] = \Psi_u(T,T-t)$, which will be indicated simply as $\Psi_u$. When $t=0$, $\Psi_u$ is just the Jacobian of the identity map $\Psi_u(T,T) = I_u$ and the result $\Psi_u^{\top}\Jkskew \Psi_u = \Jkskew$ is true for $t=0$. Calculating the time derivative on $\Psi_u^{\top}\Jkskew \Psi_u$ we have that
    \begin{equation}
        \begin{split}
            \odv{}{t}[\Psi_u^{\top}\Jkskew \Psi_u] & = \dot{\Psi}_u^{\top} \Jkskew \Psi_u + \Psi_u^{\top} \Jkskew \dot{\Psi}_u \\
             & = (F_u\Psi_u)^{\top} \Jkskew \Psi_u + \Psi_u^{\top} \Jkskew F_u \Psi_u = \\
             & = \Psi_u^{\top} L_u \Jskew S^{\top}  L_u^{\top} \Jskew_u \Psi_u + \Psi_u^{\top} \Jskew_u L_u S\Jskew^{\top} L_u^{\top} \Psi \\
             & = \Psi_u^{\top} \left(L_u \Jskew S L_u^{\top} \Jskew_u + \Jskew_u L_u S \Jskew^{\top} L_u^{\top}\right)\Psi_u
        \end{split}
    \end{equation}
    where in the second equality we used the result from lemma \ref{lemma:time_bsm_self_loop}.
    We just need to show that the term in parentheses is zero so that the time derivative is zero. Using the relations $\Jskew^{\top} L_u^{\top}= -L_u^{\top}\Jkskew$ and $J_u L_u= L_u\Jskew$ we easily see that, finally
    \begin{equation}
        \odv{}{t}\left(\left[\pdv{\xk(T)}{\xk(T-t)}\right]^{\top} \Jskew_u \left[\pdv{\xk(T)}{\xk(T-t)}\right]\right) = \Psi_u^{\top}\left(L_u\Jskew S L_u^{\top}\Jkskew +L_u\Jskew S (-L_u^{\top}\Jkskew) \right)\Psi_u= 0
    \end{equation}
    which means that $\left[\pdv{\xk(T)}{\xk(T-t)}\right]^{\top} \Jskew_u \left[\pdv{\xk(T)}{\xk(T-t)}\right]$ is constant and equal to $\Jskew_u$ for all $t$, that is our thesis.
Now, the bound on the gradient follows by considering any sub-multiplicative norm $\|\cdot\|$:
\begin{equation*}
     \left\|\Jkskew \right\| = \left\| \left[\pdv{\xk(T)}{\xk(T-t)}\right]^{\top} \Jskew_k \left[\pdv{\xk(T)}{\xk(T-t)}\right] \right\| \le \left\| \pdv{\xk(T)}{\xk(T-t)}\right\|^2 \left\|\Jkskew\right\| 
\end{equation*}
and simplifying by $\left\|\Jkskew\right\| = 1$.
\end{proof}

%%%%%%%%%%%%%%%%%%%%%%%%%%%%%%%%%%%%%%%%%%%%%%%%%%%%%%%%%%%%%%%%%%%%%%%%%%%%%%%%%%%%%%%%%%%%%%%%%%%%%%%%%%%%%%%%%%%%%%%%%%%%%%%%%%%%
%%%%%%%%%%%%%%%%%%%%%%%%%%%%%%%%%%%%%%%%%%%%%%%%%%%%%%%%%%%%%%%%%%%%%%%%%%%%%%%%%%%%%%%%%%%%%%%%%%%%%%%%%%%%%%%%%%%%%%%%%%%%%%%%%%%%

\subsection{Proof of Theorem~\ref{th:upperb}}\label{ap:proof_upper_b}
To prove the upper bound, we use the following technical lemma:
\begin{lemma}[\cite{galimberti2023hamiltonian}]\label{lemma:columnbound}
Consider a matrix $\mathbf{A} \in \mathbb{R}^{n \times n}$ with columns $\mathbf{a}_i \in \mathbb{R}^n$, i.e., $\mathbf{A}=\left[\begin{array}{llll}\mathbf{a}_1 & \mathbf{a}_2 & \cdots & \mathbf{a}_n\end{array}\right]$, and assume that $\left\|\mathbf{a}_i\right\|_2 \leq$ $\gamma^{+}$ for all $i=1, \ldots, n$. Then, $\|\mathbf{A}\|_2 \leq \gamma^{+} \sqrt{n}$.
\end{lemma}
This lemma gives a bound on the spectral norm of a matrix when its columns are uniformly bounded in norm.
Therefore, our proof strategy for \cref{th:upperb} lies in bounding each column of the BSM matrix.
\begin{proof}
Consider the ODE in \cref{eq:bsm_time_ode} from Lemma \ref{lemma:time_bsm_self_loop} and split $\pdv{\bfx_u(T)}{\bfx_u(T-t)}$ into columns $\pdv{\bfx_u(T)}{\bfx_u(T-t)}=\left[\begin{array}{llll}\mathbf{z}_1(\mathbf{t}) & \mathbf{z}_2(t) & \ldots & \mathbf{z}_d(t)\end{array}\right]$. Then, \cref{eq:bsm_time_ode} is equivalent to
\begin{equation}\label{eq:columnwise_dyn}
    \dot{\mathbf{z}}_i(t)=\mathbf{A}_u(T-t) \mathbf{z}_i(t), \quad t \in[0, T], i=1,2 \ldots, d,
\end{equation}
subject to $\mathbf{z}_i(0)=e_i$, where $e_i$ is the unit vector with a single nonzero entry in position $i$. The solution of the linear system of ODEs in \cref{eq:columnwise_dyn} is given by the integral equation
\begin{equation}\label{eq:columnwise_solu}
    \mathbf{z}_i(t)=\mathbf{z}_i(0)+\int_0^t \mathbf{A}_u(T-s) \mathbf{z}_i(s) d s, \quad t \in[0, T].
\end{equation}
By assuming that $\|\mathbf{A}_u(\tau)\|_2 \leq Q$ for all $\tau \in[0, T]$, and applying the triangular inequality in \cref{eq:columnwise_solu}, it is obtained that:
$$
\left\|\mathbf{z}_i(t)\right\|_2 \leq\left\|\mathbf{z}_i(0)\right\|_2+Q \int_0^t\left\|\mathbf{z}_i(s)\right\|_2 d s=1+Q \int_0^t\left\|\mathbf{z}_i(s)\right\|_2 d s,
$$
where the last equality follows from $\left\|\mathbf{z}_i(0)\right\|_2=\left\|e_i\right\|_2=1$ for all $i=1,2, \ldots, d$. Then, applying the Gronwall inequality, it holds for all $t \in[0, T]$
\begin{equation}\label{eq:general_column_bound}
    \left\|\mathbf{z}_i(t)\right\|_2 \leq \exp (Q T) .
\end{equation}
By applying Lemma \ref{lemma:columnbound}, the general bound follows.

Lastly, we characterize $Q$ by bounding the norm $\|\mathbf{A}_u(\tau)\|_2 \; \forall \tau \in [0,T]$.
From Lemma \ref{lemma:time_bsm_self_loop} $\mathbf{A}_v$ can be expressed as $\mathbf{A}_u = \mathbf{L}_u  \mathbf{S} \Jskew^{\top} \mathbf{L}_u^{\top}$, which is equivalently $ \mathbf{A}_u =  \nabla^2_{\bfx_u}{H}_{\mathcal{G}}(\mathbf{y})\Jkskew^{\top}$, since $\Jskew^{\top} \mathbf{L}_u^{\top}= \mathbf{L}_v^{\top}\Jkskew^{\top}$.
The Hessian $\nabla^2_{\bfx_u}{H}_{\mathcal{G}}(\mathbf{y})$ is of the form:
\begin{align*}
\nabla^2_{\bfx_u}H_{\mathcal{G}}(\mathbf{y}) = \mathbf{W}^{\top}\operatorname{diag}(\sigma'(\mathbf{W}\bfx_u + \Phi_u + b))\mathbf{W} + \sum_{v \in \mathcal{N}_u} \mathbf{V}^{\top}\operatorname{diag}(\sigma'(\mathbf{W}\bfx_v +\Phi_v +b))\mathbf{V}.
\end{align*}
After noting that $\|\nabla^2_{\bfx_u}{H}_{\mathcal{G}}(\mathbf{y})\Jkskew^{\top}\|_2 \le \|\nabla^2_{\bfx_u}{H}_{\mathcal{G}}(\mathbf{y})\|_2\|\Jkskew^{\top}\|_2$, the only varying part is the Hessian $\nabla^2_{\bfx_u}{H}_{\mathcal{G}}(\mathbf{y})$ since $\|\Jkskew^{\top}\|_2=1$.
By Lemma \ref{lemma:columnbound} $\operatorname{diag}(\sigma'(x)) \le \sqrt{d}\,M$ and noting that $\|\mathbf{X}^{\top}\| = \|\mathbf{X}\|$ for any square matrix $\mathbf{X}$, then
\begin{equation*}
    \left\|\nabla^2_{\bfx_u}H_{\mathcal{G}}(\mathbf{y})\right\|_2 \le \sqrt{d}\,M \,\|\mathbf{W}\|_2^2 + \sqrt{d}\,M\, \operatorname{max}_{i \in [n]}|\mathcal{N}_i| \; \|\mathbf{V}\|_2^2 \eqqcolon Q.
\end{equation*}
This also justifies our previous assumption that $\|\mathbf{A}_u(\tau)\|_2$ is bounded.
\end{proof}

\subsection{Proof of Theorem~\ref{th:lowerb_discrete}}\label{app:proof_discretization_lower_bound}
\begin{proof}
%\revise{TODO: FIX and format theorem}
    In the discrete case, since the semi-implicit Euler integration scheme is a symplectic method, it holds that:
 \begin{equation}\label{eq:bsm_symp}
     \left[\pdv{\bfx_u^{(\ell)}}{\bfx_u^{(\ell-1)}}\right]^{\top} \Jskew_u \left[\pdv{\bfx_u^{(\ell)}}{\bfx_u^{(\ell-1)}}\right] = \Jskew_u
 \end{equation}
Further, by using the chain rule and applying \cref{eq:bsm_symp} iteratively we get:
 \begin{equation*}
     \left[\pdv{\bfx_u^{(L)}}{\bfx_u^{(L-\ell)}}\right]^{\top} \Jkskew \left[\pdv{\bfx_u^{(L)}}{\bfx_u^{(L-\ell)}}\right] = \left[\prod_{i=L-\ell}^{L-1}\pdv{\bfx_u^{(i+1)}}{\bfx_u^{(i)}}  \right]^{\top} \Jkskew \left[\prod_{i=L-\ell}^{L-1}\pdv{\bfx_u^{(i+1)}}{\bfx_u^{(i)}}  \right] = \Jkskew
 \end{equation*}
Hence, the BSM is symplectic at arbitrary depth and we can conclude the proof with:
\begin{equation*}
     \left\|\Jkskew \right\| = \left\| \left[\pdv{\bfx_u^{(L)}}{\bfx_u^{(L-\ell)}}\right]^{\top} \Jkskew \left[\pdv{\bfx_u^{(L)}}{\bfx_u^{(L-\ell)}}\right] \right\| \le \left\| \pdv{\bfx_u^{(L)}}{\bfx_u^{(L-\ell)}}\right\|^2 \left\|\Jkskew\right\|.
\end{equation*}

\end{proof}

\subsection{Proof of Theorem~\ref{th:upper_crossnode}}\label{app:proof_crossnode_upper_bound}

\begin{proof}
    We employ a similar proof as in \cite{diGiovanniOversquashing} and redo some calculations based on our model and the Symplectic Euler scheme used to obtain \cref{eq:discretized_local_hamil_p,eq:discretized_local_hamil_q}.
    To calculate $\pdv{\xk^{(\ell+1)}}{\xh^{(\ell)}}$ for two consecutive nodes $u,v$, we consider the $4$ sub-blocks as per
    \begin{equation}
        \pdv{\xk^{(\ell+1)}}{\xh^{(\ell)}} = 
        \begin{bmatrix}
            \pdv{\bfp^{(\ell+1)}_v}{\bfp^{(\ell)}_v} & \pdv{\bfq^{(\ell+1)}_v}{\bfp^{(\ell)}_v}\\
            \pdv{\bfp^{(\ell+1)}_v}{\bfq^{(\ell)}_v} & \pdv{\bfq^{(\ell+1)}_v}{\bfq^{(\ell)}_v}
        \end{bmatrix}.
    \end{equation}
    We proceed to calculate the $4$ blocks independently, stopping at first-order terms in $\epsilon$ for simplicity reasons:
    \begin{equation}\label{eqn:cross_node_pdvs}
        \begin{split}
            \pdv{\mathbf{p}_u^{\ell+1}}{\mathbf{p}_v^{\ell}}  & = \sum_{w\in \mathcal{N}_u\cap \mathcal{N}_v} \pdv{\mathbf{q}_w^{\ell+1}}{\mathbf{p}_v^{\ell}}\pdv{\mathbf{p}_u^{\ell+1}}{\mathbf{q}_w^{\ell}} = O(\epsilon^2),\\
            \pdv{\mathbf{p}_u^{\ell+1}}{\mathbf{q}_v^{\ell}} & = -\epsilon\mathbf{A}_{uv}\left( \mathbf{V}_q^{\top}\mathbf{W}_q^{\top}\sigma'_q(u) + \mathbf{W}_q^{\top}\mathbf{V}_q^{\top}\sigma'_q(v) + \sum_{w\in \mathcal{N}_u\cap \mathcal{N}_v} (\mathbf{V}_q^{\top})^2\sigma'_q(w)\right),\\
            \pdv{\mathbf{q}_u^{\ell+1}}{\mathbf{q}_v^{\ell}} & = \sum_{w\in \mathcal{N}_u\cap \mathcal{N}_v} \pdv{\mathbf{p}_w^{\ell+1}}{\mathbf{q}_v^{\ell}}\pdv{\mathbf{q}_u^{\ell+1}}{\mathbf{p}_w^{\ell+1}} = O(\epsilon^2),\\
            \pdv{\mathbf{q}_u^{\ell+1}}{\mathbf{p}_v^{\ell}} & = \epsilon\mathbf{A}_{uv}\left( \bfV_p^{\top}\bfW_p^{\top}\sigma'_p(u) + \bfW_p^{\top}\bfV_p^{\top}\sigma'_p(v) + \sum_{w\in \mathcal{N}_u\cap \mathcal{N}_v} (\mathbf{V}_p^{\top})^2\sigma'_p(w)\right),\\
        \end{split}
    \end{equation}
    where compressed $\sigma_p(u) = \sigma(\bfW_p\bfp^{(\ell+1)}_u + \Phi_{\mathcal{G}}(\{\bfp^{(\ell+1)}_v\}_{v\in\mathcal{N}_u})$ and analogously for $q$ for length reasons. Now, the norm of a block matrix is less or equal to the sum of the norms of the sub-blocks, so that
    \begin{equation}
        \begin{split}
            \left\| \pdv{\xk^{(\ell+1)}}{\xh^{(\ell)}} \right\|_{L_1} & \leq \left\| \pdv{\bfp^{(\ell+1)}_u}{\bfp^{(\ell)}_v} \right\|_{L_1} + \left\| \pdv{\bfp^{(\ell+1)}_u}{\bfq^{(\ell)}_v} \right\|_{L_1} + \left\| \pdv{\bfq^{(\ell+1)}_u}{\bfp^{(\ell)}_v} \right\|_{L_1} + \left\| \pdv{\bfq^{(\ell+1)}_u}{\bfq^{(\ell)}_v} \right\|_{L_1}\\
            & \leq \left\| \pdv{\bfp^{(\ell+1)}_u}{\bfq^{(\ell)}_v} \right\|_{L_1} + \left\| \pdv{\bfq^{(\ell+1)}_u}{\bfp^{(\ell)}_v} \right\|_{L_1} + O(\epsilon^2).\\
        \end{split}
    \end{equation}
    Now, using the \cref{eqn:cross_node_pdvs} and setting $c_{\sigma} = \max_{x}|\sigma'(x)|$, we can estimate
    \begin{equation}
        \begin{split}
            \left\| \pdv{\xk^{(\ell+1)}}{\xh^{(\ell)}} \right\|_{L_1} & \leq \epsilon\mathbf{A}_{uv}\left(2dc_{\sigma}|\bfW_q||\bfV_q|  + \sum_{w} \mathbf{A}_{vw}\mathbf{A}_{wu} |\bfV_q^2| d c_{\sigma}\right)\\
            & \qquad+ \epsilon\mathbf{A}_{uv}\left(2dc_{\sigma}|\bfW_p||\bfV_p|  + \sum_{w} \mathbf{A}_{vw}\mathbf{A}_{wu} |\bfV_p^2|dc_{\sigma}\right).\\
        \end{split}
    \end{equation}
    Recalling the block structure of $\bfV$, we can say that $|\bfV| = |\bfV_p| + |\bfV_q|$ and obtain this final form of the one-step neighbor sensitivity: %We can also estimate the sum on the common neighbors with $(N-1)|\bfV|_2^2$ where $N$ is the maximum degree of a node, finally obtaining.
    \begin{equation}
        \left\| \pdv{\xk^{(\ell+1)}}{\xh^{(\ell)}} \right\|_{L_1}  \leq \epsilon \mathbf{A}_{uv} c_{\sigma}d \left(2|\bfV||\bfW| + \sum_w \mathbf{A}_{vw}\mathbf{A}_{wu}|\bfV^2|\right).
    \end{equation}
    Repeating the same process for the same node update, one has that
    \begin{equation}
        \begin{split}
            \left\|\pdv{\xh^{(\ell+1)}}{\xh^{(\ell)}} \right\|_{L_1}= d\mathbf{I}\left(1 + \epsilon c_{\sigma}\left( |\bfW^2| + \sum_{w}\mathbf{A}_{uw}\mathbf{A}_{wu} |\bfV^2|\right)\right).
        \end{split}
    \end{equation}
    Now, we can show the inductive step:
    \begin{equation}\label{eq:upper_inductivechain}
        \pdv{\xk^{(\ell+1)}}{\xh^{(0)}} = \pdv{\xk^{(\ell)}}{\xh^{(0)}}\pdv{\xk^{(\ell+1)}}{\xk^{(\ell)}} + \sum_{w\in\mathcal{N}_u}\pdv{\mathbf{x}_w^{(\ell)}}{\xh^{(0)}}\pdv{\xk^{(\ell+1)}}{\mathbf{x}_w^{(\ell)}}.
    \end{equation}
    We now evaluate the norm of this object and, by using the triangular inequality and the two relations above for same-node and neighboring-node updates, we obtain
    \begin{equation}
        \begin{split}
              \left\|\pdv{\xk^{(\ell+1)}}{\xh^{(0)}}\right\|_{L_1}  \leq &  \left\|\pdv{\xk^{(\ell)}}{\xh^{(0)}}\right\|_{L_1}  \left\|\pdv{\xk^{(\ell+1)}}{\xk^{(\ell)}}\right\|_{L_1} + \sum_{w\in \mathcal{N}_u}\left\|\pdv{\mathbf{x}_w^{(\ell)}}{\xh^{(0)}}\right\|_{L_1}  \left\|\pdv{\xk^{(\ell+1)}}{\mathbf{x}_w^{(\ell)}}\right\|_{L_1}\\
               \leq & \left\|\pdv{\xk^{(\ell)}}{\xh^{(0)}}\right\|_{L_1} d \mathbf{I}\left( 1+ \epsilon c_{\sigma}|\bfW^2| + \epsilon c_{\sigma} \sum_{w}\mathbf{A}_{uw}\mathbf{A}_{wu} |\bfV^2|\right)\\
              & \quad+ d\mathbf{A}_{uv} \epsilon c_{\sigma} \sum_{w} \mathbf{A}_{uw} \left\|\pdv{\mathbf{x}_w^{(\ell)}}{\xh^{(0)}}\right\|_{L_1}\left(2|\bfV||\bfW| + \sum_z \mathbf{A}_{vz}\mathbf{A}_{zw}|\bfV^2|\right).\\
        \end{split}
    \end{equation}
    If we now consider the upper bound to be independent of the specific nodes and the maximum degree of a node to be $N$, we have that
    \begin{equation}
        \begin{split}
            \left\|\pdv{\xk^{(\ell+1)}}{\xh^{(0)}}\right\|_{L_1}  \leq & \left\|\pdv{\xk^{(\ell)}}{\xh^{(0)}}\right\|_{L_1} d \mathbf{I} \left( 1 + \epsilon c_{\sigma}|\bfW^2| + \epsilon c_{\sigma} \sum_{w}\mathbf{A}_{uw}\mathbf{A}_{wu} |\bfV^2|\right)\\
            & \quad + \left\|\pdv{\xk^{(\ell)}}{\xh^{(0)}}\right\|_{L_1}\mathbf{A}_{uv}Nd\epsilon c_{\sigma}\left(2|\bfV||\bfW| + \sum_z \mathbf{A}_{vz}\mathbf{A}_{zu}|\bfV^2|\right)\\
            \leq & \left\|\pdv{\xk^{(\ell)}}{\xh^{(0)}}\right\|_{L_1} \mathbf{I}d\left(1 + \epsilon c_{\sigma}|\bfW^2| + (N-1)\epsilon |\bfV^2|\right) \\
            & \quad + \left\|\pdv{\xk^{(\ell)}}{\xh^{(0)}}\right\|_{L_1}\mathbf{A}_{uv}d\left(N\epsilon c_{\sigma} \left(2|\bfV||\bfW| + (N-1)|\bfV^2|\right)\right).
        \end{split}
    \end{equation}
    which is in the form of theorem 3.2 of \cite{diGiovanniOversquashing}. Calling $w = \max \{|W|,|V|\}$, we obtain that
    \begin{equation}
        \begin{split}
            \left\|\pdv{\xk^{(\ell+1)}}{\xh^{(0)}}\right\|_{L_1}  \leq & \left\|\pdv{\xk^{(\ell)}}{\xh^{(0)}}\right\|_{L_1} \left(\mathbf{I}d(1+\epsilon c_{\sigma}Nw^2)+\mathbf{A}_{uv}dw^2\epsilon c_{\sigma} (N^2+N)\right).
        \end{split}
    \end{equation}
    We now sacrifice the term $1$ and set $\epsilon =1$ we can collect a term $wc_{\sigma}Nd$ and, following proof in \cite{diGiovanniOversquashing}, we finally arrive at
    \begin{equation}
        \left\|\pdv{\xk^{(\ell+1)}}{\xh^{(0)}}\right\|_{L_1}  \leq (dwNc_{\sigma})^{\ell+1}((w\mathbf{I} + w(N+1)\mathbf{A})^{\ell+1})_{uv}.
    \end{equation}
    Without the terms we sacrificed along the proof, our upper bound is at least $N^{\ell}$ greater than the one in \cite{diGiovanniOversquashing}.
\end{proof}

{
\subsection{Additional theorems for PH-DGN with driving forces}\label{ap:theo_diving_forces}
In this section, we consider the case of $\sigma$ being a bounded activation function $|\sigma(\mathbf{x})|\leq b_{\sigma}$ with a bounded derivative $|\sigma'(\mathbf{x)}| \leq c_{\sigma}$, e.g. $\tanh$. Further, we suppose the driving forces are bounded and Lipschitz continuous with common constant $B_d$. In formulas, this means that $|D_u(\mathbf{q})|\leq B_d$, $\left\lvert\pdv{D_u(\mathbf{q})}{\mathbf{q}}\right\rvert \leq B_d$, $|F(\mathbf{q}, t)|\leq B_d$ and $\left\lvert \pdv{F(\mathbf{q},t)}{\mathbf{q}}\right\rvert\leq B_d$. Under these hypotheses, the self-influence of a node after one step is bounded from below by a constant. An upper bound on the backward sensitivity matrix (BSM) norm can also be derived for the same node influence and neighboring node influence after one step.

\begin{theorem}[Lower bound, port-Hamiltonian case]\label{th:port-lowerb}
    Consider the full port-Hamiltonian update in %equations %\eqref{eq:discretized_local_hamil_p}, \eqref{eq:discretized_local_hamil_q} and \eqref{eq:discretized_port_ham}
    \cref{eq:discretized_local_hamil_q,eq:discretized_port_ham}. Then, with the above hypotheses, the following lower bound for the BSM holds
    \begin{equation}\label{eq:lower_port}
        \begin{split}
            \left\|\pdv{\xk^{(\ell+1)}}{\xk^{(\ell)}}\right\|_{L_1} & \geq \left\lvert\left \| \pdv{\tilde{\mathbf{x}}_u^{(\ell+1)}}{\xk^{(\ell)}} \right\| - \|\mathbf{D}\| \right\rvert \\
            & \geq 1 - \epsilon w B_d c'_{\sigma}(N+3+w),
        \end{split}
    \end{equation}
    where $w = \max\{|\mathbf{W}|, |\mathbf{V}|\}$ as before. In this equation, $[{\partial\xk^{(\ell+1)}}/{\partial\xh^{(\ell)}}]$ represents the BSM for the full port-Hamiltonian update as in \cref{eq:discretized_local_hamil_q,eq:discretized_port_ham}, while $[{\partial\tilde{\mathbf{x}}_u^{(\ell+1)}}/{\partial\xh^{(\ell)}}]$ is the BSM for the pure Hamiltonian case as in \cref{eq:discretized_local_hamil_q,eq:discretized_local_hamil_p}, and $\mathbf{D}$ is defined as
    \begin{equation}
        \mathbf{D} = \pdv{}{\xk^{\ell}}\left(D_u(\mathbf{q}^{(\ell+1)}) \nabla_{\mathbf{p}_u}{{H}_{\mathcal{G}}}(\mathbf{p}^{(\ell)},\mathbf{q}^{(\ell)})\right) + \pdv{ F_u(\mathbf{q}^{(\ell)},t)}{\xk^{\ell}}
    \end{equation}
\end{theorem}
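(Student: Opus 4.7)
The plan is to write the full port-Hamiltonian update as a perturbation of the pure Hamiltonian one, and then combine the reverse triangle inequality with the single-step version of Theorem~\ref{th:lowerb_discrete}. First, I would use \cref{eq:discretized_port_ham} together with the fact that the $\mathbf{q}$-update \cref{eq:discretized_local_hamil_q} is unchanged by the dissipative and forcing terms to write $\xk^{(\ell+1)} = \tilde{\mathbf{x}}_u^{(\ell+1)} + \epsilon\, \mathbf{c}_u^{(\ell)}$, where $\tilde{\mathbf{x}}_u^{(\ell+1)}$ is the purely-Hamiltonian update from \cref{eq:discretized_local_hamil_p,eq:discretized_local_hamil_q} and the correction $\mathbf{c}_u^{(\ell)}$ collects the two non-conservative contributions, both of which act only on the $\mathbf{p}$-block. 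Differentiating with respect to $\xk^{(\ell)}$ then gives $\partial \xk^{(\ell+1)}/\partial \xk^{(\ell)} = \partial \tilde{\mathbf{x}}_u^{(\ell+1)}/\partial \xk^{(\ell)} - \mathbf{D}$, with $\mathbf{D}$ matching (up to the $\epsilon$ factor) the object named in the statement, and the first inequality in \cref{eq:lower_port} then becomes the reverse triangle inequality $\|A-B\|\geq |\|A\|-\|B\||$, which holds for the $L_1$ norm.

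For the second inequality, I would apply Theorem~\ref{th:lowerb_discrete} to a single symplectic step to obtain $\|\partial \tilde{\mathbf{x}}_u^{(\ell+1)}/\partial \xk^{(\ell)}\| \geq 1$, reducing the task to upper-bounding $\|\mathbf{D}\|$. Expanding $\mathbf{D}$ via the product and chain rules yields three contributions: (i) an external-forcing term $\epsilon\,\partial F_u/\partial \xk^{(\ell)}$, controlled directly by $\epsilon B_d$; (ii) a Lipschitz-$D$ term $\epsilon (\partial D_u/\partial \mathbf{q}^{(\ell+1)}) (\partial \mathbf{q}^{(\ell+1)}/\partial \xk^{(\ell)}) \nabla_{\mathbf{p}_u} H_{\mathcal{G}}$, bounded using $|\partial D_u/\partial \mathbf{q}|\leq B_d$, $|\sigma|\leq b_\sigma$, and the fact that $\nabla_{\mathbf{p}_u} H_{\mathcal{G}}$ consists of a self-term plus at most $N$ neighbor terms each of norm at most $w\, b_\sigma$; and (iii) a Hessian term $\epsilon D_u(\mathbf{q}^{(\ell+1)}) \,\partial \nabla_{\mathbf{p}_u} H_{\mathcal{G}}/\partial \xk^{(\ell)}$, bounded via $|D_u|\leq B_d$ and the same Hessian-style estimate used in the proof of Theorem~\ref{th:upperb}, which contributes a factor of order $w^2 c_\sigma (N+1)$. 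Collecting the three non-neighbor contributions (self-term of $\nabla_{\mathbf{p}_u} H_{\mathcal{G}}$, self Hessian term under $D_u$, and the forcing) together with the $N$ neighbor contributions and the extra weight factor $w$ from the Hessian structure yields a bound of the form $\|\mathbf{D}\| \leq \epsilon w B_d c'_\sigma (N + 3 + w)$, where $c'_\sigma$ absorbs the interplay between $b_\sigma$ and $c_\sigma$, and the desired estimate follows.

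The main obstacle is the bookkeeping inside the upper bound on $\|\mathbf{D}\|$. Two subtleties drive it. First, $D_u$ is evaluated at $\mathbf{q}^{(\ell+1)}$, which itself depends on $\xk^{(\ell)}$ through \cref{eq:discretized_local_hamil_q}, so the chain rule produces an additional factor $\partial \mathbf{q}^{(\ell+1)}/\partial \xk^{(\ell)}$ that must be shown to differ from the identity only by an $O(\epsilon w c_\sigma)$ correction which can be absorbed into $c'_\sigma$. Second, one must count the neighbor aggregation carefully: $\nabla_{\mathbf{p}_u} H_{\mathcal{G}}$ involves a self-aggregation plus up to $N$ neighbor contributions, and these must be summed so that the final estimate stays linear in $N$ (matching the $(N+3+w)$ factor) rather than becoming quadratic. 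Once these two points are handled, the remainder is routine algebra, and combining the bound on $\|\mathbf{D}\|$ with the Hamiltonian lower bound yields the second inequality of \cref{eq:lower_port}.
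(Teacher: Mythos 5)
Your proposal is correct and follows essentially the same route as the paper's own proof: decompose the one-step Jacobian into the pure-Hamiltonian part plus the dissipative correction $\mathbf{D}$, apply the reverse triangle inequality, invoke Theorem~\ref{th:lowerb_discrete} for the single symplectic step, and bound $\|\mathbf{D}\|$ term by term (forcing, Lipschitz-$D$, and $D_u$-times-Hessian contributions) using the boundedness hypotheses. The two subtleties you flag — the chain rule through $\mathbf{q}^{(\ell+1)}$ and keeping the neighbor count linear in $N$ — are precisely the points the paper handles by absorbing higher-order corrections into $O(\epsilon^2)$ and into the constant $c'_{\sigma}$.
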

This theorem shows that, with these hypotheses, the effect of the driving forces on the BSM norm can be controlled with a small enough step size.

\begin{proof}
    We start by providing the same calculations as %theorem 
    \cref{th:upper_crossnode}, this time with port-Hamiltonian components included, for the one-step update in both the cross-node case
    \begin{equation}\label{eqn:cross_node_pdvs_diss}
        \begin{split}
            \pdv{\mathbf{p}_u^{\ell+1}}{\mathbf{p}_v^{\ell}}  & = -\epsilon\mathbf{A_{uv}} D_u(\mathbf{q}_u)\left(\mathbf{V}_p^{\top}\mathbf{W}_p^{\top}\sigma'_p(u) + \mathbf{W}_p^{\top}\mathbf{V}_p^{\top}\sigma'(p)_v + \sum_{w\in \mathcal{N}_u\cap \mathcal{N}_v} (\mathbf{V}_p^{\top})^2\sigma'_p(w)\right) + O(\epsilon^2),\\
            \pdv{\mathbf{p}_u^{\ell+1}}{\mathbf{q}_v^{\ell}} & = -\epsilon\mathbf{A}_{uv}\left( \mathbf{V}_q^{\top}\mathbf{W}_q^{\top}\sigma'_q(u) + \mathbf{W}_q^{\top}\mathbf{V}_q^{\top}\sigma'_q(v) + \sum_{w\in \mathcal{N}_u\cap \mathcal{N}_v} (\mathbf{V}_q^{\top})^2\sigma'_q(w)\right)\\
            & - \epsilon\mathbf{A}_{uv}\left(\pdv{D_u(\mathbf{q})}{\mathbf{q}_v}(\mathbf{W}_p^{\top}\sigma_p(u) + \sum_{w\in \mathcal{N}_u}\mathbf{V}_p^{\top}\sigma_p(w)) + \pdv{F(\mathbf{q},t)}{\mathbf{q}_v}\right)\\
            \pdv{\mathbf{q}_u^{\ell+1}}{\mathbf{q}_v^{\ell}} +O(\epsilon^2)& = \sum_{w\in \mathcal{N}_u\cap \mathcal{N}_v} \pdv{\mathbf{p}_w^{\ell+1}}{\mathbf{q}_v^{\ell}}\pdv{\mathbf{q}_u^{\ell+1}}{\mathbf{p}_w^{\ell+1}} = O(\epsilon^2),\\
            \pdv{\mathbf{q}_u^{\ell+1}}{\mathbf{p}_v^{\ell}} & = \epsilon\mathbf{A}_{uv}\left( \bfV_p^{\top}\bfW_p^{\top}\sigma'_p(u) + \bfW_p^{\top}\bfV_p^{\top}\sigma'_p(v) + \sum_{w\in \mathcal{N}_u\cap \mathcal{N}_v} (\mathbf{V}_p^{\top})^2\sigma'_p(w)\right) + O(\epsilon^2)\\
        \end{split}
    \end{equation}
    and same node case
    \begin{equation}\label{eqn:same_node_pdvs_diss}
        \begin{split}
            \pdv{\mathbf{p}_u^{\ell+1}}{\mathbf{p}_u^{\ell}}  & = \mathbf{I}-\epsilon\left(D_u(\mathbf{q})(\mathbf{W}_p^{\top})^2\sigma_p(u)\right) + O(\epsilon^2),\\
            \pdv{\mathbf{p}_u^{\ell+1}}{\mathbf{q}_u^{\ell}} & = -\epsilon \mathbf{I}\left( (\mathbf{W}_q^{\top})^2\sigma'_q(u) + \sum_{w\in \mathcal{N}_u} (\mathbf{V}_q^{\top})^2\sigma'_q(w)\right),\\
            & - \epsilon\mathbf{I}\left(\pdv{D_u(\mathbf{q})}{\mathbf{q}_u}(\mathbf{W}_p^{\top}\sigma_p(u) + \sum_{w\in \mathcal{N}_u}\mathbf{V}_p^{\top}\sigma_p(w)) + \pdv{F(\mathbf{q},t)}{\mathbf{q}_u}\right) +O(\epsilon^2)\\
            \pdv{\mathbf{q}_u^{\ell+1}}{\mathbf{q}_u^{\ell}} & =  -\epsilon \mathbf{I}\left( (\mathbf{W}_p^{\top})^2\sigma'_p(u) + \sum_{w\in \mathcal{N}_u} (\mathbf{V}_p^{\top})^2\sigma'_p(w)\right) +O(\epsilon^2),\\
            \pdv{\mathbf{q}_u^{\ell+1}}{\mathbf{p}_u^{\ell}} & = \mathbf{I} +O(\epsilon^2).\\
        \end{split}
    \end{equation}
    If we consider the one-step update for the same node, we can divide it into the non-dissipative component, called $\pdv{\tilde{\mathbf{x}}_u^{(\ell+1)}}{\xk^{(\ell)}}$, and the dissipative component given by
    \begin{equation}
        \mathbf{D} = -\epsilon\left(D_u(\mathbf{q})(\mathbf{W}_p^{\top})^2\sigma_p(u)\right) - \epsilon\mathbf{I}\left(\pdv{D_u(\mathbf{q})}{\mathbf{q}_u}(\mathbf{W}_p^{\top}\sigma_p(u) + \sum_{w\in \mathcal{N}_u}\mathbf{V}_p^{\top}\sigma_p(w)) + \pdv{F(\mathbf{q},t)}{\mathbf{q}_u}\right)
    \end{equation}
    Now, to estimate a lower bound on the self-influence norm, we can use the triangular inequality
    \begin{equation}
        \begin{split}
            \left\|\pdv{\xk^{(\ell+1)}}{\xk^{(\ell)}}\right\|_{L_1} & \geq \left\lvert \left\|\pdv{\tilde{\mathbf{x}}_u^{(\ell+1)}}{\xk^{(\ell)}}\right\|_{L_1} - \|\mathbf{D}\|_{L_1}\right\rvert
        \end{split}
    \end{equation}
    Since the first term is the update for the non-dissipative case, \cref{th:lowerb_discrete} holds and then the right-hand side is greater or equal to
    \begin{equation}
        \left( 1 - \| \mathbf{D}\|\right).
    \end{equation}
    The driving forces effect can then be estimated as
    \begin{equation}
        \begin{split}
            \| \mathbf{D}\| & \leq\epsilon (\|\mathbf{W}\|^2|\sigma'| |D_u|) + \epsilon L (1+\|\mathbf{W}\||\sigma| + N \|\mathbf{V}\| |\sigma|)\\
            & \leq d\epsilon w L c'_{\sigma}(N+3+w)
        \end{split}
    \end{equation}
    as per the definitions above, which leads to \cref{eq:lower_port}.
\end{proof}

We now calculate an upper bound on the one-step BSM on a node $u$.
\begin{theorem}[BSM upper bound, same-node update in port-Hamiltonian case]\label{th:upper_same_port}
    Consider again the full port-Hamiltonian update in \cref{eq:discretized_local_hamil_q,eq:discretized_port_ham} with the same hypothesis above. Then, the following upper bound for the BSM holds:
    \begin{equation}
        \begin{split}
            \left\|\pdv{\xk^{(\ell+1)}}{\xk^{(\ell)}}\right\|_{L_1} & \leq \left \| \pdv{\tilde{\mathbf{x}}_u^{(\ell+1)}}{\xk^{(\ell)}} \right\| + \|\mathbf{D}\| \\
            & \leq d(1+\epsilon c'_{\sigma}w(1+B_d)(N+w+3))),
        \end{split} 
    \end{equation}
    where the individual terms are defined as in theorem~\ref{th:port-lowerb}.    
\end{theorem}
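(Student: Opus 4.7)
The plan is to use the same decomposition as in Theorem~\ref{th:port-lowerb}, splitting the one-step same-node update into the purely Hamiltonian component $\pdv{\tilde{\mathbf{x}}_u^{(\ell+1)}}{\xk^{(\ell)}}$ and the driving-force contribution $\mathbf{D}$. Where the lower bound in that earlier theorem invoked the reverse triangular inequality, here I would simply apply the standard triangular inequality to write
\begin{equation*}
\left\|\pdv{\xk^{(\ell+1)}}{\xk^{(\ell)}}\right\|_{L_1} \leq \left\|\pdv{\tilde{\mathbf{x}}_u^{(\ell+1)}}{\xk^{(\ell)}}\right\|_{L_1} + \|\mathbf{D}\|_{L_1},
\end{equation*}
which immediately yields the first of the two inequalities stated in the theorem.

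To control the first term, I would reuse the explicit same-node partials in \cref{eqn:same_node_pdvs_diss}, restricted to their non-dissipative parts. The four $d/2\times d/2$ blocks are $\mathbf{I}$, $\mathbf{I}$, and two $\epsilon$-order perturbations built from $(\mathbf{W}_{p/q}^{\top})^{2}\sigma'$ and $\sum_{w\in\mathcal{N}_u}(\mathbf{V}_{p/q}^{\top})^{2}\sigma'$. This is precisely the same-node calculation performed inside the proof of Theorem~\ref{th:upper_crossnode}, which gives a bound of the form $d(1+\epsilon c_\sigma(|\mathbf{W}|^{2}+N|\mathbf{V}|^{2}))$. Setting $w=\max\{|\mathbf{W}|,|\mathbf{V}|\}$ and writing $c'_\sigma$ for the derivative bound, this simplifies to $d(1+\epsilon c'_\sigma\,w^{2}(N+1))$.

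For the second term, $\|\mathbf{D}\|_{L_1}$ has already been estimated inside the proof of Theorem~\ref{th:port-lowerb}, where the boundedness and Lipschitz continuity of $D_u$ and $F$ together with the bounds on $\sigma$ and $\sigma'$ yield $\|\mathbf{D}\|_{L_1}\leq d\epsilon w B_d c'_\sigma(N+w+3)$. Summing the two contributions produces $d\bigl(1+\epsilon c'_\sigma\,w[\,w(N+1)+B_d(N+w+3)\,]\bigr)$. The last cosmetic step is to relax $w(N+1)\leq w(N+w+3)$ so that both terms share the factor $(N+w+3)$, and then factor out $(1+B_d)$; this delivers the stated upper bound $d(1+\epsilon c'_\sigma\,w(1+B_d)(N+w+3))$.

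The main obstacle here is purely algebraic bookkeeping rather than any new analytic estimate: both sub-bounds are already established in earlier proofs, and the only non-routine choice is how loosely to relax constants so that the Hamiltonian and the dissipative contributions can be packaged under a single uniform prefactor $(1+B_d)(N+w+3)$. No additional Lipschitz or spectral argument is required beyond what Theorem~\ref{th:upper_crossnode} and the $\mathbf{D}$-estimate inside Theorem~\ref{th:port-lowerb} already supply.
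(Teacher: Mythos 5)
Your proposal matches the paper's own proof essentially step for step: the same triangle-inequality split into the pure-Hamiltonian term and $\mathbf{D}$, the same reuse of the same-node estimate from Theorem~\ref{th:upper_crossnode} for the first term and of the $\mathbf{D}$-bound from Theorem~\ref{th:port-lowerb} for the second, followed by the same loose repackaging of constants under the common factor $(1+B_d)(N+w+3)$. The only differences are in the bookkeeping of powers of $w$ during the final relaxation, where the paper is at least as cavalier as you are, so nothing substantive separates the two arguments.
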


\begin{proof}
    We start by recalling the calculations for the same-node and cross-node update gradients from \cref{th:port-lowerb}. Then, we employ again the triangular inequality to obtain
    \begin{equation}
            \left\|\pdv{\xk^{(\ell+1)}}{\xk^{(\ell)}}\right\|_{L_1} \leq  \left\|\pdv{\tilde{\mathbf{x}}_u^{(\ell+1)}}{\xk^{(\ell)}}\right\|_{L_1} + \|\mathbf{D}\|_{L_1}
    \end{equation}
    Since the first term involves the non-dissipative components only, we use the same calculations as in \cref{th:upper_crossnode}. We use the same calculations for the second term as in \cref{th:port-lowerb}. Finally, we can calculate
    \begin{equation}
        \begin{split}
            \left\|\pdv{\xk^{(\ell+1)}}{\xk^{(\ell)}}\right\|_{L_1} & \leq  \left\|\pdv{\tilde{\mathbf{x}}_u^{(\ell+1)}}{\xk^{(\ell)}}\right\|_{L_1} + \|\mathbf{D}\|_{L_1}\\
            & \leq \mathbf{I}d\left(1 + \epsilon c_{\sigma}|\bfW^2| + (N-1)\epsilon |\bfV^2|\right) +  d\epsilon w L c'_{\sigma}(N+3+w)\\
            & \leq d(1+\epsilon c'_{\sigma}Nw) + d\epsilon w L c'_{\sigma}(N+3+w) \\
            & \leq d(1+\epsilon c'_{\sigma}w(1+L)(N+w+3)) )
        \end{split}
    \end{equation}
\end{proof}

\begin{theorem}[BSM upper bound, cross-node update in the port-Hamiltonian case]\label{th:uppersamenode-port}
    Consider again the full port-Hamiltonian update in \cref{eq:discretized_local_hamil_q,eq:discretized_port_ham} with the same hypothesis above. Then, the following upper bound for the cross-node BSM holds:
    \begin{equation}
        \begin{split}
            \left\|\pdv{\xk^{(\ell+1)}}{\xh^{(\ell)}}\right\|_{L_1} & \leq \left \| \pdv{\tilde{\mathbf{x}}_u^{(\ell+1)}}{\xh^{(\ell)}} \right\| + \|\mathbf{D
            _{\text{cross}}}\| \\
            & \leq d(\epsilon c'_{\sigma}w(1+L)(N+w+3))),
        \end{split} 
    \end{equation}
    where $[{\partial\xk^{(\ell+1)}}/{\partial\xh^{(\ell)}}]$ represents the cross-node BSM for the full port-Hamiltonian update as in \cref{eq:discretized_local_hamil_q,eq:discretized_port_ham}, while $[{\partial\tilde{\mathbf{x}}_u^{(\ell+1)}}/{\partial\xh^{(\ell)}}]$ is the cross-node BSM for the pure Hamiltonian case as in \cref{eq:discretized_local_hamil_q,eq:discretized_local_hamil_p}, and $\mathbf{D}_{\text{cross}}$ is defined as
    \begin{equation}
        \mathbf{D}_{\text{cross}} = \pdv{}{\xh^{\ell}}\left(D_u(\mathbf{q}^{(\ell+1)}) \nabla_{\mathbf{p}_u}{{H}_{\mathcal{G}}}(\mathbf{p}^{(\ell)},\mathbf{q}^{(\ell)})\right) + \pdv{ F_u(\mathbf{q}^{(\ell)},t)}{\xh^{\ell}}.
    \end{equation}
\end{theorem}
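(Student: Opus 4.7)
The plan is to mirror the strategy of \cref{th:upper_same_port}, splitting the full port-Hamiltonian cross-node Jacobian into a purely conservative part plus the driving-force perturbation, and then bounding the two pieces independently via the triangle inequality for the $L_1$ norm. Explicitly, starting from the one-step gradient formulas in \cref{eqn:cross_node_pdvs_diss} I would write
\[
\pdv{\xk^{(\ell+1)}}{\xh^{(\ell)}} \;=\; \pdv{\tilde{\mathbf{x}}_u^{(\ell+1)}}{\xh^{(\ell)}} \;+\; \mathbf{D}_{\text{cross}},
\]
where the tilde term collects the contributions already present in the pure Hamiltonian discretisation of \cref{eq:discretized_local_hamil_p,eq:discretized_local_hamil_q}, while $\mathbf{D}_{\text{cross}}$ gathers the extra factors proportional to $D_u(\mathbf{q})$, $\partial D_u/\partial \mathbf{q}_v$, and $\partial F_u/\partial \mathbf{q}_v$. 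The key observation, visible in \cref{eqn:cross_node_pdvs_diss}, is that these corrections only appear in the two $\mathbf{p}$-row sub-blocks of the Jacobian: the $\mathbf{q}$-update in \cref{eq:discretized_local_hamil_q} is unaffected by the driving forces.

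Next, I would bound $\left\|\pdv{\tilde{\mathbf{x}}_u^{(\ell+1)}}{\xh^{(\ell)}}\right\|_{L_1}$ by reusing the one-step cross-node calculation inside the proof of \cref{th:upper_crossnode}, which gives an estimate of the form $\epsilon\,d\,c_\sigma\,\mathbf{A}_{uv}\bigl(2|\bfV||\bfW| + \sum_w \mathbf{A}_{vw}\mathbf{A}_{wu}|\bfV^2|\bigr)$; after setting $w = \max\{|\bfW|,|\bfV|\}$ and using $\mathbf{A}_{uv}\le 1$ together with $\sum_w \mathbf{A}_{vw}\mathbf{A}_{wu} \le N$, this collapses to $\epsilon\,d\,c'_\sigma\,w\,(N + w)$ up to the multiplicative constants later absorbed into $N+w+3$. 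For $\|\mathbf{D}_{\text{cross}}\|_{L_1}$ I would repeat the estimate already carried out for $\|\mathbf{D}\|$ in the proof of \cref{th:port-lowerb}, exploiting the assumed bounds $|D_u|, |\partial D_u/\partial \mathbf{q}|, |F_u|, |\partial F_u/\partial \mathbf{q}| \le B_d$ together with $|\sigma|\le b_\sigma$, $|\sigma'|\le c_\sigma$ and $|\bfW|,|\bfV|\le w$; the aggregation over neighbors contributes at most a factor $N$, yielding a bound of the form $\epsilon\,d\,B_d\,w\,c'_\sigma\,(N + w + 3)$.

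Summing the two contributions and factoring out the common $\epsilon\,d\,c'_\sigma\,w$ produces the stated $d\bigl(\epsilon\,c'_\sigma\,w\,(1+L)(N+w+3)\bigr)$, the coefficient $(1+L)$ with $L=B_d$ arising from pairing the conservative term (weight $1$) against the dissipative term (weight $L$). This matches exactly the symbolic structure of the companion same-node bound in \cref{th:upper_same_port}, as it should for the two pieces to be aggregated across layers by the inductive chain analogous to \cref{eq:upper_inductivechain}.

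\textbf{Main obstacle.} The only delicate point is bookkeeping: keeping the constants and the powers of $N$ and $w$ consistent with those appearing in \cref{th:upper_same_port} so that both statements collapse into the same compact symbolic form. In particular, one has to ensure that the inner sum over $w\in\mathcal{N}_u\cap\mathcal{N}_v$ inside $\partial \mathbf{p}_u^{(\ell+1)}/\partial\mathbf{q}_v^{(\ell)}$ is absorbed into the factor $N+w+3$ rather than producing a spurious $N^2$ term; this is handled by applying $\sum_w \mathbf{A}_{vw}\mathbf{A}_{wu}\le N$ for each fixed $v$, instead of summing once more over $v$. Everything else is a routine application of sub-multiplicativity and the triangle inequality.
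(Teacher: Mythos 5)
Your proposal is correct and follows essentially the same route as the paper's proof: decompose the cross-node Jacobian into the pure-Hamiltonian part plus $\mathbf{D}_{\text{cross}}$, apply the triangle inequality, bound the first term by the one-step cross-node estimate from \cref{th:upper_crossnode}, and bound $\mathbf{D}_{\text{cross}}$ by the same driving-force estimate used in \cref{th:port-lowerb}. Your reading of the constant $L$ as the driving-force bound $B_d$ also matches the paper's (admittedly inconsistent) notation.
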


\begin{proof}
    We start by recalling the calculations for the same-node and cross-node update gradients from \cref{th:port-lowerb}. Then, we employ again the triangular inequality to obtain
    \begin{equation}
            \left\|\pdv{\xk^{(\ell+1)}}{\xh^{(\ell)}}\right\|_{L_1} \leq  \left\|\pdv{\tilde{\mathbf{x}}_u^{(\ell+1)}}{\xh^{(\ell)}}\right\|_{L_1} + \|\mathbf{D}_{\text{cross}}\|_{L_1}
    \end{equation}
    Since the first term involves the non-dissipative components only, we use the same calculations as in \cref{th:upper_crossnode}. For the driving forces component, similar calculations to the ones in \cref{th:port-lowerb} result in
    \begin{equation}
         \|\mathbf{D_{\text{cross}}}\| \leq d(\epsilon c'_{\sigma}w(1+L)(N+w+3)))
    \end{equation}
    from which our bound follows.
\end{proof}

%\begin{proof}
%    Trivially, an upper bound can be done using the triangular inequality
%    \begin{equation}
%        \begin{split}
%            \left\|\pdv{\xk^{(\ell+1)}}{\xk^{(\ell)}}\right\|_{L_1} & \leq  \left\|\pdv{\tilde{\mathbf{x}}_u^{(\ell+1)}}{\xk^{(\ell)}}\right\|_{L_1} + \|\mathbf{D}\|_{L_1}\\
%            & \leq \mathbf{I}d\left(1 + \epsilon c_{\sigma}|\bfW^2| + (N-1)\epsilon |\bfV^2|\right) +  d\epsilon w L c'_{\sigma}(N+3+w)\\
%            & \leq d(1+\epsilon c'_{\sigma}Nw) + d\epsilon w L c'_{\sigma}(N+3+w) \\
%            & \leq d(1+\epsilon c'_{\sigma}w(1+L)(N+w+3)) )       \end{split}
%    \end{equation}
%    with similar calculations, one can end up with very similar bounds for the modification on the cross node influence (the product of $\mathbf{V}$ and $\mathbf{W}$ appears instead of the squares, but given our definition of $w$, the same calculations can be applied)
%    \begin{equation}
%        \begin{split}
%            \left\|\pdv{\xk^{(\ell+1)}}{\xh^{(\ell)}}\right\|_{L_1} & \leq  \left\|\pdv{\tilde{\mathbf{x}}_u^{(\ell+1)}}{\xh^{(\ell)}}\right\|_{L_1} + \|\mathbf{D}_{\text{cross}}\|_{L_1}\\
%            & \leq d(\epsilon c'_{\sigma}w(1+L)(N+w+3)) ) 
%        \end{split}
%    \end{equation}
%    Therefore, upper bounds can be rescaled using this and the upper bound for the same node influence.
%\end{proof}
}
%%%%%%%%%%%%%%%%%%%%%%%%%%%%%%%%%%%%%%%%%%%%%%%%%%%%%%%%%%%%%%%%%%%%%%%%%%%%%%%%%%%%%%%%%%%%%%%%%%%%%%%%%%%%%%%%%%%%%%%%%%%%%%%%%%%%%%%%%%%%%%%%%%%%%%%%%%%%%%%%%%%%%%%%%%%%%%%%%

% \section{(Port-)Hamiltonian Message Passing}
% \subsection{Port-Hamiltonian}\label{ap:ph_discrete}
% \begin{align*}
%     \mathbf{p}_u^{(l+1)} = \mathbf{p}_u^{(l)} &- \epsilon\Biggl[ \bfW_q^{\top}\sigma(\bfW_q\bfq^{(\ell+1)}_u + \Phi_{\mathcal{G}}(\{\bfq^{(\ell+1)}_v\}_{v\in\mathcal{N}_u} ) + \bfb_q) \nonumber\\ 
% &\hspace{3cm} - \sum_{v \in \mathcal{N}_u\setminus \{u\}}{\bfV_q^{\top}}\sigma(\bfW_q\bfq^{(\ell+1)}_v + \Phi_{\mathcal{G}}(\{\bfq^{(\ell+1)}_j\}_{j\in\mathcal{N}_v} ) + \bfb_q) \Biggr] \\&- \epsilon D_u(\mathbf{q}^{(l+1)}) \Biggl[ \bfW_p^{\top}\sigma(\bfW_p\bfp^{(\ell)}_u + \Phi_{\mathcal{G}}(\{\bfp^{(\ell)}_v\}_{v\in\mathcal{N}_u} ) + \bfb_p) \nonumber\\ 
% &\hspace{3cm}+ \sum_{v \in \mathcal{N}_u\setminus \{u\}}{\bfV_p^{\top}}\sigma(\bfW_p\bfp^{(\ell)}_v + \Phi_{\mathcal{G}}(\{\bfp^{(\ell)}_j\}_{j\in\mathcal{N}_v} ) + \bfb_p) \Biggr] \\&+  \epsilon F_u(\mathbf{q}^{(l+1)},\epsilon \ell)
% \end{align*}

\section{Experimental Details}\label{app:experimental_details}
% \subsection{Explicit PH-DGN Model}
% \begin{align}\label{eq:discrete_portHamil}
%     \begin{split}
%         \mathbf{q}_v^{(n+1)} &= \mathbf{q}_v^{(n)} + \epsilon\nabla_{\mathbf{p}_v}{{H}_{\mathcal{G}}}(\mathbf{p}^{(n)},\mathbf{q}^{(n+1)}) \\[0.2cm]
%     \mathbf{p}_v^{(n+1)} &= \begin{multlined}[t][0.6\textwidth]\mathbf{p}_v^{(n)} + \epsilon\Biggl[-\nabla_{\mathbf{q}_v}{{H}_{\mathcal{G}}}(\mathbf{p}^{(n)},\mathbf{q}^{(n+1)}) \\- D_v(\mathbf{q}^{(n+1)}) \nabla_{\mathbf{p}_v}{{H}_{\mathcal{G}}}(\mathbf{p}^{(n)},\mathbf{q}^{(n+1)}) + F_v(\mathbf{q}^{(n+1)},t)\Biggr].\end{multlined}
%     \end{split}
% \end{align}
% \begin{equation}
%     D_v(\mathbf{q}) = \begin{cases}
%         \mathbf{w}\in \Rd\\[0.2cm]
%         \operatorname{lin}(\Tilde{d},d) \circ \operatorname{ReLU} \circ \operatorname{lin}(\Tilde{d},\Tilde{d}) \circ \operatorname{ReLU} \circ \operatorname{lin}(\Tilde{d},\Tilde{d}) \circ \operatorname{ReLU} \circ \operatorname{lin}(d,\Tilde{d}) \\[0.2cm] 
%         \operatorname{ReLU} \circ \sum_{i \in \mathcal{N}_v}\operatorname{lin}(d,d)
%     \end{cases}
% \end{equation}
% \begin{equation}
%     F_v(\mathbf{q},t) = \begin{cases}
%         \operatorname{lin}(\Tilde{d},d) \circ \sin \circ \operatorname{lin}(\Tilde{d},\Tilde{d}) \circ \sin \circ \operatorname{lin}(\Tilde{d},\Tilde{d}) \circ \sin \circ \operatorname{lin}(d+1,\Tilde{d}) \\[0.2cm] 
%         \sigma \circ \sum_{i \in \mathcal{N}_v}\operatorname{lin}(d+1,d)
%     \end{cases}
% \end{equation}

\subsection{Employed Baselines}\label{app:baselines_details}
In our experiments, the performance of our port-Hamiltonian DGNs is compared with various state-of-the-art DGN baselines from the literature. Specifically, we consider:
\begin{itemize}
    \item classical MPNN-based methods, \ie GCN~\citep{GCN}, GraphSAGE~\citep{SAGE}, GAT~\citep{GAT}, GatedGCN~\citep{gatedgcn}, GIN~\citep{GIN}, GINE~\citep{gine}, and GCNII~\citep{gcnii};
    \item DE-DGNs, \ie DGC~\citep{DGC}, GRAND~\citep{GRAND}, GraphCON~\citep{graphcon}, A-DGN~\citep{gravina_adgn}, HANG~\citep{HANG_Neurips}, HamGNN~\citep{han2024from} , and SWAN~\citep{gravina_swan};
    \item Graph Transformers, \ie Transformer~\citep{vaswani2017attention, dwivedi2021generalization}, SAN~\citep{san}, and GraphGPS~\citep{graphgps};
    \item Higher-Order DGNs, \ie DIGL~\citep{DIGL}, MixHop~\citep{abu2019mixhop}, and DRew~\citep{drew}.
\end{itemize} 
Note that since DE-DGNs belong to the same family as our proposed method, they are a direct competitor to our port-Hamiltonian DGNs.

\subsection{Graph Transfer Task}\label{app:graph_transf_exp_detail}
The graph transfer task consists of the problem of propagating a label from a source node to a target node located at increased distance $k$ in the graph. In other words, we set up a node-level regression task that measures how much of the source node information has reached the target node.  %We develop this task based on the work of 
The task was first proposed in the work of \citep{diGiovanniOversquashing}, but here we follow the data generation and more challenging experimental setting of \citet{gravina_swan}. %We 
The task employ the same graph distributions as in \cite{diGiovanniOversquashing}, \ie line, ring, and crossed ring graphs (see Figure~\ref{fig:gt_topologies} for a
visual exemplification of the three types of graphs when the distance between the source and target nodes is $5$). However, to make the task more challenging we follow \citet{gravina_swan}, thus, we initialize node input features with random values uniformly sampled in the range $[0, 0.5)$. In each graph, the source node is assigned with label ``1'', and a target node is assigned with label ``0''. We use an input dimension of $1$, and a source-target distance equal to $3, 5, 10,$ and $50$. The target output is constructed by modifying the input features: the source node's target is the label ``0'', the target node's target is the label ``1'', and the intermediate nodes retain their original random values from the input. In other words, the ground truth values are the switched node labels of source and target nodes.  We generate $1000$ graphs for training, $100$ for validation, and $100$ for testing.

\begin{figure}[h]
    \centering
    \includegraphics[width=\textwidth]{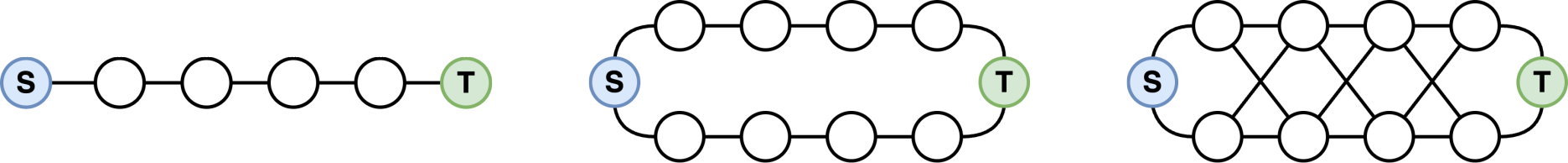}
    \caption{Three topologies for Graph Transfer. Left) Line. Center) Ring. Right) Crossed-Ring. The distance between source and target nodes is equal to 5. Nodes
marked with \textit{S} are source nodes, while the nodes with a \textit{T} are target nodes.}\label{fig:gt_topologies}
\end{figure}

Following \citet{gravina_swan}, We design each model using three main components. First is the encoder, which maps the node input features into a latent hidden space. Second is the graph convolution component (\ie PH-DGN or other baselines). Third is the readout, which maps the convolution's output into the output space. The encoder and readout have the same architecture across all models in the experiments.

We perform hyperparameter tuning via grid search, optimizing the Mean Squared Error (MSE) computed on the node features of the whole graph. In other words, the model is trained to predict the target values for all nodes, and the loss is computed as the MSE between the predicted values and the corresponding target values across the entire graph. We train the models using Adam optimizer \citep{kingma2015adam} for a maximum of 2000 epochs and early stopping with patience of 100 epochs on the validation loss. For each model configuration, we perform 4 training runs with different weight initialization and report the average and standard deviation of the results. We report in Table~\ref{tab:hyperparams} the grid of hyperparameter employed by each model in our experiment.

\subsection{Graph Property Prediction}\label{app:gpp_exp_details}
The tasks consist of predicting two node-level (\ie single source shortest path and eccentricity) and one graph-level (\ie graph diameter) properties on synthetic graphs generated by different graph distributions. In our experiments, we follow the data generation and experimental procedure outlined in \cite{gravina_adgn}. Therefore, graphs contains between 25 and 35 nodes and are randomly generated from multiple distributions. Nodes are randomly initialized with a value uniformly sampled in the range $[0,1)$, while target values represent single source shortest path, eccentricity, and graph diameter depending on the selected task. 5120 graphs are used as the training set, 640 as the validation set, and the rest as the test set.

As in the original work, each model is designed as three components, \ie encoder, graph convolution component (%H-DGN, PH-DGN, 
PH-DGN or baselines), and readout.
We perform hyperparameter tuning via grid search, optimizing the Mean Square Error (MSE), training the models using Adam optimizer for a maximum of 1500 epochs, and early stopping with patience of 100 epochs on the validation error. For each model configuration, we perform 4 training runs with different weight initialization and report the average of the results. We report in Appendix~\ref{app:hyperparams} the grid of hyperparameters employed by each model in our experiment.

\subsection{Long-Range Graph Benchmark}\label{app:lrgb_exp_details}
We consider the Peptide-func and Peptide-struct tasks from the Long-Range Graph Benchmark \citep{LRGB}. Both tasks contain graphs corresponding to peptide molecules for a total of 15,535 graphs, 2.3 million nodes and 4.7 million edges. Each graph has an average of 150 nodes, 307 edges, and an average diameter of 57. Peptide-func is a multi-label graph classification task with the aim of predicting the peptide function. Peptide-struct is a multi-dimensional graph regression task, whose objective is to predict structural properties of the peptides.

We follow the experimental setting of \cite{LRGB}, thus we consider a stratified splitting of the data, with 70\% of graphs as the training set, 15\% as the validation set, and 15\% as the test set. We perform hyperparameter tuning via grid search, optimizing the Average Precision (AP) in the Peptide-func task and the Mean Absolute Error (MAE) in the Peptide-struct task, training the models using AdamW optimizer for a maximum of 300 epochs. For each model configuration, we perform 3 training runs with different weight initialization and report the average of the results. Moreover, we stay within 500K parameter budget as proposed by \cite{LRGB}. In our experiments, we also consider the setting of \cite{tonshoff2023where} that propose to employ a 3-layer MLP as readout. We report in Appendix~\ref{app:hyperparams} the grid of hyperparameters employed by each model in our experiment.

\subsection{Explored Hyperparameter Space}\label{app:hyperparams}
In Table~\ref{tab:hyperparams} we report the grid of hyperparameters employed in our experiments by each method, adhering to the established procedures for each task to ensure fair evaluation and reproducibility. We note that for hyperparameters specific to our PH-DGN, such as the step size $\epsilon$ and the number of layers $L$, we selected values based on the specific benchmark protocol (whenever possible) and considering factors like the average graph diameter in the training set.
\begin{table}[h]
\centering
\caption{The grid of hyperparameters employed during model selection for the graph transfer tasks (\emph{Transfer}), graph property prediction tasks (\emph{GraphProp}), and Long Range Graph Benchmark (\emph{LRGB}). %We observe that, in each graph transfer task, we use a number of layers that is equal to the distance between the source and target nodes.
We refer to Appendix~\ref{ap:arch_details} for more details about dampening and external force implementations.}\label{tab:hyperparams}
\vspace{5pt}
\begin{tabular}{l|lll}
\toprule
\multirow{2}{*}{\textbf{Hyperparameters}}  & \multicolumn{3}{c}{\textbf{Values}}\\\cmidrule{2-4}
& {\bf \emph{Transfer}} & {\bf \emph{GraphProp}} & {\bf \emph{LRGB}}\\\midrule
Optimizer       & Adam            & Adam                     & AdamW\\
Learning rate   & 0.001           & 0.003                    & 0.001, \ 0.0005\\
Weight decay    & 0              & $10^{-6}$                 & 0\\
embedding dim   & 64              & 10, 20, 30               & 195, \ 300\\
N. layers ($L$)       & 3, \ 5, \ 10, \ 50,    & 1, \ 5, \ 10, \ 20, \ 30             & 32, \ 64\\
                      & 100, \ 150\\
Termination time ($T$) & $L\epsilon$             & 0.1, \ 1, \ 2, \ 3, \ 4               & 3, \ 5, \ 6\\
$\epsilon$      & 0.5, \ 0.2, \ 0.1, & $T/L$             & $T/L$\\
                & 0.05, \ 0.01, \ $10^{-4}$ \\
$\Phi_\bfp$     & \cref{eq:simple_agg}, \ GCN & \cref{eq:simple_agg}, \ GCN & \cref{eq:simple_agg}, \ GCN\\
$\Phi_\bfq$     & \cref{eq:simple_agg}, \ GCN & \cref{eq:simple_agg}, \ GCN & GCN\\
Readout input   & $\bfp$, $\bfq$, $\bfp\|\bfq$ & $\bfp$, $\bfq$, $\bfp\|\bfq$ & $\bfp$, $\bfq$, $\bfp\|\bfq$ \\
$\sigma$        & tanh            & tanh                     & tanh\\
Dampening       & -- & \emph{param}, \emph{param+},  & \emph{param} \\
                &    & \emph{MLP4-ReLU}, \emph{DGN-ReLU} & \\
External Force  & --  & \emph{MLP4-Sin}, \emph{DGN-tanh} & \emph{DGN-tanh} \\
N. readout layers & 1 & 1 & 1, \ 3\\
\bottomrule
\end{tabular}
\end{table}

\section{Additional Experimental Results}

\subsection{Graph Property Prediction Runtimes}
In Table~\ref{tab:results_GraphProp_time} we report runtimes of %both our H-DGN and PH-DGN 
PH-DGN with and without driving forces as well as baseline methods on the graph property prediction task in Section~\ref{sec:gpp_exp}. To obtain a fair comparison, we configured all models with $20$ layers and an embedding dimension of $30$. 
We averaged the time (in seconds) over 4 random weight initializations for a complete epoch, which includes the forward and backward passes on the training data, as well as the forward pass on the validation and test data.

Table~\ref{tab:results_GraphProp_time} shows that %both H-DGN and PH-DGN have
PH-DGN with and without driving forces has improved or comparable runtimes compared to MPNNs. Notably, %H-DGN
PH-DGN without driving forces is on average 5.92 seconds faster than GAT and 5.19 seconds faster than GCN. Compared to DE-DGN baselines, our methods show longer execution times, which are inherently caused by the sequential computation of both $\bfp$ and $\bfq$ explicitly given in \cref{ap:discretization} and non-conservative components (\ie \emph{param} and \emph{MLP4-ReLU} in Table~\ref{tab:results_GraphProp_time}). Lastly, related Hamiltonian approaches require at least twice as much computation (HANG \citep{HANG_Neurips}) up until 126x more seconds per epoch (HamGNN \citep{Ham_ICML}. This underlines our key contribution of a MPNN-complexity model adhering to Hamiltonian laws, \ie linear in number of edges and nodes.
%
% Both H-DGN and PH-DGN belong to the class of message-passing models with linear time complexity in the number of nodes and edges. Hence, Table~\ref{tab:results_GraphProp_time} shows similar runtimes as MPNNs. Notably, H-DGN is faster than GAT by 5.92 seconds and GCN by 5.19 seconds on average. Compared to A-DGN, the sequential numerical neighborhood aggregation shown in \cref{ap:discretization}, inherently causes longer execution times. Also, by adding dissipative components to H-DGN, such as \emph{param}-based dissipation and \emph{MLP4}-based external forces, PH-DGN shows slightly increased computation time. 
\begin{table}[ht]
\centering
\caption{Average time per epoch (measured in seconds) and std, averaged over 4 random weight initializations. Each time is obtained by employing $20$ layers and an embedding dimension equal to $30$. Our PH-DGN employs \emph{param} and \emph{MLP4-ReLU} as dampening and external force, respectively. The evaluation was carried out on an AMD EPYC 7543 CPU @ 2.80GHz. \one{First}, \two{second}, and \three{third} best results.\label{tab:results_GraphProp_time}}
\begin{tabular}{lccc}
\toprule
\textbf{Model} &\textbf{Diameter} & \textbf{SSSP} & \textbf{Eccentricity} \\\midrule
\textbf{MPNNs} \\
$\,$ GCN &   32.45$_{\pm2.54}$ &  17.44$_{\pm3.85}$ & 11.78$_{\pm2.43}$\\
$\,$ GAT &  20.20$_{\pm5.18}$ & 26.41$_{\pm8.34}$ & 17.28$_{\pm1.92}$\\
$\,$ GraphSAGE & 13.12$_{\pm2.99}$ & 13.12$_{\pm2.99}$ & \three{8.20$_{\pm0.75}$}\\
$\,$ GIN & \one{6.63$_{\pm0.28}$}  &21.16$_{\pm2.33}$ & 14.22$_{\pm3.17}$\\
$\,$ GCNII & 13.13$_{\pm6.85}$ & 14.96$_{\pm7.17}$ & 15.70$_{\pm3.92}$\\

\midrule
\textbf{DE-DGNs} \\
$\,$ DGC & \three{8.97$_{\pm9.07}$} & \three{12.54$_{\pm1.62}$} & \one{7.21$_{\pm11.10}$}\\
$\,$ GRAND & 133.84$_{\pm42.57}$ & 109.15$_{\pm27.49}$ & 202.46$_{\pm85.01}$\\
$\,$ GraphCON & 9.26$_{\pm0.47}$ & \one{7.76$_{\pm0.05}$} & \two{7.80$_{\pm0.05}$} \\

% \midrule
% \multicolumn{4}{l}{\textbf{Ours - random weights}}\\
% $\,$ A-DGN\textsubscript{fix} & \two{7.77$_{\pm6.98}$} & \one{7.69$_{\pm3.56}$} & \one{5.33$_{\pm3.82}$}\\
% $\,$ A-DGN\textsubscript{fix}(GCN) & 10.73$_{\pm1.50}$ & 14.80$_{\pm2.72}$ & 10.92$_{\pm1.77}$\\

% \midrule
% \multicolumn{4}{l}{\textbf{Ours - weight sharing}}\\
$\,$ A-DGN & \two{8.42$_{\pm2.71}$} & \two{7.86$_{\pm2.11}$} & 13.18$_{\pm9.07}$\\
%$\,$ A-DGN\textsubscript{ws}(GCN) & 13.08$_{\pm5.49}$ & 28.74$_{\pm10.92}$ & 16.26$_{\pm4.58}$\\

% \midrule
% \multicolumn{4}{l}{\textbf{Ours - layer dependent weights}}\\
% $\,$ A-DGN\textsubscript{ldw} & 14.59$_{\pm8.67}$ & \three{10.47$_{\pm6.95}$} & 14.04$_{\pm11.60}$\\
% $\,$ A-DGN\textsubscript{ldw}(GCN) & 40.50$_{\pm16.45}$ & 26.72$_{\pm17.98}$ & 24.43$_{\pm19.10}$\\
$\,$ HamGNN & 1097.90$_{\pm379.56}$ &1897.04$_{\pm 22.08}$  & 1773.43$_{\pm54.37}$\\
$\,$ HANG & 28.92$_{\pm 0.10}$ & 34.24$_{\pm 1.08}$ & 34.63$_{\pm 0.96}$\\
 \midrule
\multicolumn{4}{l}{\textbf{Ours}}\\
$\,$ %H-DGN 
PH-DGN\textsubscript{C} & 15.49$_{\pm0.05}$ & 15.28$_{\pm0.02}$ & 15.34$_{\pm0.04}$\\
$\,$ PH-DGN & 17.18$_{\pm0.04}$ & 17.12$_{\pm0.07}$ & 17.13$_{\pm0.06}$\\
\bottomrule

\end{tabular}
\end{table}

\subsection{Complete LRGB Results}\label{app:complete_lrgb_res}
In Table~\ref{tab:long_range_complete} we report the complete results for the LRGB tasks including more multi-hop DGNs and ablating on the scores obtained with the original setting from \cite{LRGB} and the one proposed by \cite{tonshoff2023where}, which leverage 3-layers MLP as a decoder to map the DGN output into the final prediction. Since we are unable to compare the validation scores of the baselines directly, we cannot determine the best method between \citep{LRGB} and \cite{tonshoff2023where}. Therefore, in Table~\ref{tab:long_range_complete}, we present all the results.

\begin{table}[h]

\centering
\caption{Results for Peptides-func and Peptides-struct averaged over 3 training seeds. Baseline results are taken from \cite{LRGB}, \cite{tonshoff2023where}, \cite{drew}, \cite{gravina_swan}. "+PE/SE" indicates the use of positional or structural encoding. We have detailed the type of encoding wherever the original source explicitly specifies it. ``*'' means that we re-computed the performance of the method strictly following the experimental protocol from \cite{tonshoff2023where}.  %while new scores are denoted by ``$\ddagger$''. 
As proposed in \cite{tonshoff2023where}, re-evaluated methods employ a 3-layer MLP readout and (Laplacian) positional or (random walk) structural encoding. % denoted as ``+PE/SE''and ``+LapPE''.
The \one{first}, \two{second}, and \three{third} best scores are colored% within standard deviation respectively
. 
}\label{tab:long_range_complete}
\vspace{5pt}
%\begin{tabular}{@{}lccc@{}}
\begin{tabular}{@{}lcc@{}}
\toprule
\multirow{2}{*}{\textbf{Model}} & \textbf{Peptides-func}  & \textbf{Peptides-struct}  %&  \textbf{Pascal}              
\\
% & \textbf{func} & \textbf{struct} %& \textbf{voc-sp}
% \\
& \small{AP $\uparrow$} & \small{MAE $\downarrow$} %& \scriptsize{F1 $\uparrow$}                             
\\ \midrule  
\textbf{MPNNs, \cite{LRGB}} \\
$\,$ GCN                        & 0.5930$_{\pm0.0023}$ & 0.3496$_{\pm0.0013}$ \\ %& 0.1268$_{\pm0.0060}$\\
$\,$ GINE                       & 0.5498$_{\pm0.0079}$ & 0.3547$_{\pm0.0045}$ \\ %& 0.1265$_{\pm0.0076}$\\
$\,$ GCNII                      & 0.5543$_{\pm0.0078}$ & 0.3471$_{\pm0.0010}$ \\ %& 0.1698$_{\pm0.0080}$\\
$\,$ GatedGCN                   & 0.5864$_{\pm0.0077}$ & 0.3420$_{\pm0.0013}$ \\ %& 0.2873$_{\pm0.0219}$\\
%$\,$ GatedGCN+PE         & 0.6069$_{\pm0.0035}$ & 0.3357$_{\pm0.0006}$ \\ %& 0.2860$_{\pm0.0085}$\\ 

\midrule
\textbf{Multi-hop DGNs, \cite{drew}}\\
$\,$ DIGL+MPNN           & 0.6469$_{\pm0.0019}$         & 0.3173$_{\pm0.0007}$ \\ % & 0.2824$_{\pm0.0039}$\\
$\,$ DIGL+MPNN+LapPE     & 0.6830$_{\pm0.0026}$         & 0.2616$_{\pm0.0018}$ \\ % & 0.2921$_{\pm0.0038}$\\
$\,$ MixHop-GCN          & 0.6592$_{\pm0.0036}$         & 0.2921$_{\pm0.0023}$ \\ % & 0.2506$_{\pm0.0133}$\\
$\,$ MixHop-GCN+LapPE    & 0.6843$_{\pm0.0049}$         & 0.2614$_{\pm0.0023}$ \\ % & 0.2218$_{\pm0.0174}$\\ 
$\,$ DRew-GCN            & 0.6996$_{\pm0.0076}$         & 0.2781$_{\pm0.0028}$ \\ % & 0.1848$_{\pm0.0107}$\\
$\,$ DRew-GCN+LapPE             & \one{0.7150$_{\pm0.0044}$}   & 0.2536$_{\pm0.0015}$ \\ % & 0.1851$_{\pm0.0092}$\\
$\,$ DRew-GIN            & 0.6940$_{\pm0.0074}$         & 0.2799$_{\pm0.0016}$ \\ % & 0.2719$_{\pm0.0043}$\\
$\,$ DRew-GIN+LapPE      & \two{0.7126$_{\pm0.0045}$}   & 0.2606$_{\pm0.0014}$ \\ % & 0.2692$_{\pm0.0059}$\\
$\,$ DRew-GatedGCN       & 0.6733$_{\pm0.0094}$         & 0.2699$_{\pm0.0018}$ \\ % & 0.3214$_{\pm0.0021}$\\
$\,$ DRew-GatedGCN+LapPE & 0.6977$_{\pm0.0026}$         & 0.2539$_{\pm0.0007}$ \\ % & \two{0.3314$_{\pm0.0024}$}\\

\midrule
\textbf{Transformers, \cite{drew}} \\
$\,$ Transformer+LapPE & 0.6326$_{\pm0.0126}$ & 0.2529$_{\pm0.0016}$           \\ % & 0.2694$_{\pm0.0098}$\\
$\,$ SAN+LapPE         & 0.6384$_{\pm0.0121}$ & 0.2683$_{\pm0.0043}$           \\ % & \three{0.3230$_{\pm0.0039}$}\\
$\,$ GraphGPS+LapPE    & 0.6535$_{\pm0.0041}$ & {0.2500$_{\pm0.0005}$}   \\ % & \one{0.3748$_{\pm0.0109}$}\\ 
\midrule
\textbf{Modified experimental protocol, \cite{tonshoff2023where}}\\
$\,$ GCN+PE/SE%$^\ddagger$  
& 0.6860$_{\pm0.0050}$ & \one{0.2460$_{\pm0.0007}$}\\
$\,$ GINE+PE/SE%$^\ddagger$     
& 0.6621$_{\pm0.0067}$ & 0.2473$_{\pm0.0017}$\\
$\,$ GCNII+PE/SE$^*$
& 0.6444$_{\pm0.0011}$ & 0.2507$_{\pm0.0012}$\\
$\,$ GatedGCN+PE/SE%$^\ddagger$ 
& 0.6765$_{\pm0.0047}$ & 0.2477$_{\pm0.0009}$\\
%$\,$ GraphGPS & 0.6534$_{\pm0.0091}$ & 0.2509$_{\pm0.0014}$\\
$\,$ DRew-GCN+PE/SE$^*$  
& 0.6945$_{\pm0.0021}$        & 0.2517$_{\pm0.0011}$\\
$\,$ GraphGPS+PE/SE%$^\ddagger$
& 0.6534$_{\pm0.0091}$ & 0.2509$_{\pm0.0014}$\\
\midrule
\textbf{DE-DGNs} \\
$\,$ GRAND%\color{red}$^\ddagger$      
& 0.5789$_{\pm0.0062}$ & 0.3418$_{\pm0.0015}$ \\ % &  0.1918$_{\pm0.0097}$ \\
$\,$ GraphCON%\color{red}$^\ddagger$   
& 0.6022$_{\pm0.0068}$ & 0.2778$_{\pm0.0018}$ \\ % &  0.2108$_{\pm0.0091}$ \\
$\,$ A-DGN%\color{red}$^\ddagger$      
& 0.5975$_{\pm0.0044}$ & 0.2874$_{\pm0.0021}$ \\ % &  0.2349$_{\pm0.0054}$\\ 
$\,$ SWAN & 0.6751$_{\pm0.0039}$ & \three{0.2485$_{\pm0.0009}$}\\
\midrule
\textbf{Ours} \\    
%$\,$ H-DGN             & 0.6698$_{\pm0.0037}$         & 0.3129$_{\pm0.0049}$\\
$\,$ %H-DGN%$^\ddagger$  
PH-DGN\textsubscript{C}%\color{red}$^\ddagger$   
& 0.6961$_{\pm0.0070}$         & 0.2581$_{\pm0.0020}$\\
%$\,$ PH-DGN            & 0.6356$_{\pm0.0037}$         & 0.2885$_{ \pm 0.0041}$ \\
$\,$ PH-DGN\color{red}%$^\ddagger$  
& \three{0.7012$_{\pm0.0045}$} & \two{0.2465$_{\pm0.0020}$}\\
\bottomrule
\end{tabular}
\end{table}

\subsection{Comparison to HamGNN and HANG}\label{app:comparison_ham_gnns}
In this section, we discuss on the differences with related works in the line of Hamiltonian systems for DGNs, such as HamGNN \citep{Ham_ICML} and HANG \citep{HANG_Neurips}. %In the following, we list critical differences to our proposed PH-DGN and highlight our key contributions.\\

At first, we highlight that HamGNN and HANG are both instances of auto-differentiated models that are solved with a neuralODE solver, while our PH-DGN provides exact equations for the layer-wise information aggregation, revealing a generalized formulation that allows to turn any MPNN-based model into its energy-preserving version, thus remaining efficient and general.

From an architectural point of view, HamGNN uses black-box Hamiltonian functions that are agnostic to the graph structure, thus limiting its interpretation as a DGN, while HANG explicitly couples the graph structure with the energy function, without making the connection nor investigate this design choice in relation to (long-range) message passing propagation, which is the main focus of our work.
In contrast, our PH-DGN uses a port-Hamiltonian function that has a clear interpretation and is directly translated into the nonlinear message-passing framework, aiming to preserve information in the evolution and aggregation process. Thus, the port-Hamiltonian function drives the aggregation process based on the graph topology, allowing information to be propagated for an arbitrary number of layers. Additionally, our approach extends and generalizes Hamiltonian-inspired DGNs by introducing the ability to adaptively balance between conservative and dissipative behaviors, thus incorporating port-Hamiltonian dynamics. As demonstrated theoretically in Section~\ref{sec:method} and empirically in Section~\ref{sec:experiments}, PH-DGN achieves long-range propagation through intrinsic graph coupling combined with conservative dynamics, which can be modulated by learnable components from the port-Hamiltonian framework, such as internal damping and external forces. These properties, crucial for adding generality to the approach and for effective information flow, are absent in existing models like HamGNN and HANG.
We summarize the above comparison in \cref{tab:hamgnnhang} and in \cref{tab:hamgnnhang_mem} we report an empirical assessment between PH-DGN, HamGNN, and HANG to evaluate the computational requirements of the three methods on the Eccentricity task (Section~\ref{sec:gpp_exp}). To ensure a fair evaluation, we conducted a model selection given officially released implementations of HamGNN\footnote{\url{https://github.com/zknus/Hamiltonian-GNN}} and HANG\footnote{\url{https://github.com/zknus/NeurIPS-2023-HANG-Robustness}} on a grid of total number of layers $L \in \{1,5,10,20,30\}$, embedding dimension $d \in\{20,30\}$, step size $\epsilon \in \{1.0, 0.5, 0.1\}$, and in case of HamGNN we selected the Hamiltonian fuction to be $H \in \{H_2, H_3\}$, which are the top performing Hamiltonian functions in \cite{Ham_ICML}. Note that for HamGNN, $L=30$ was infeasible due to memory constraints of 40GB. Our results in Table~\ref{tab:hamgnnhang_mem} show that our PH-DGN not only outperforms HamGNN and HANG, but also operates at a lower computational cost, proving to be more efficient in terms of both speed and memory usage.

\begin{table}[h]
\renewcommand{\tabcolsep}{5pt}
\centering
\caption{Key properties of DE-DGN models inspired by Hamiltonian and port-Hamiltonian dynamics.}
\label{tab:hamgnnhang}
\begin{tabular}{lccc}
\toprule
 &  \textbf{Hamiltonian} & \textbf{Graph} & \textbf{Learnable}\\
  &  \textbf{Conservation} & \textbf{Coupling} & \textbf{ Driving Forces}\\
   \midrule
\textbf{Ham.-insp. DGNs}\\
$\,$ HamGNN  & \checkmark & & \\
$\,$ HANG    & \checkmark &\checkmark & \\
   \midrule
\textbf{Our}\\
$\,$ PH-DGN\textsubscript{C}  &\checkmark  & \checkmark &\\
$\,$ PH-DGN  & & \checkmark & \checkmark \\
   \bottomrule
   \end{tabular}
\color{black}
\end{table}
\begin{table}[h]

\centering
%\vspace{-2mm}
%\vspace{-2.6mm}
\caption{Mean test $log_{10}(\text{MSE})$ and std average over 4 training seeds on the Eccentricity task from Section \ref{sec:gpp_exp} along with the measured memory consumption and average time per epoch (in seconds) of HamGNN, HANG, and our PH-DGN with $L=20$ and $d=30$. \one{Best} result is color coded.
%the memory consumption worst case memory usage for $L=30$ and $d_{\text{emb}}=30$ and its average time per epoch (in seconds). 
}
    \label{tab:hamgnnhang_mem}
%\begin{adjustbox}{max width=\textwidth}
\vspace{0.3cm}
\begin{tabular}{lcccc}
%&&&\\[-.5em]
\toprule

\multirow{2}{*}{\textbf{Model}}& \textbf{Eccentricity} &  \textbf{Memory} & \textbf{Time per epoch}\\
 & \small{$log_{10}(\textrm{MSE})$ $\downarrow$} &  \small{GB $\downarrow$} & \small{sec. $\downarrow$}\\
%\multirow{2}{*}{Model}& \textbf{Eccentricity} & \textbf{Diameter} & \textbf{SSSP} \\
%&{\scriptsize$log_{10}(\text{MSE})$ $\downarrow$}&\newline{$\log_{10}\operatorname{MSE}$ $\downarrow$}&\newline{$\log_{10}\operatorname{MSE}$ $\downarrow$}\\\hline
\midrule
\textbf{Ham.-insp. DGNs}\\
$\,$ HamGNN        &0.7851$_{\pm0.0140}$ & 26.8 GB & 1773.43$_{\pm54.37}$\\
$\,$ HANG  &0.8302$_{\pm0.0051}$ & 3.2 GB & 34.63$_{\pm 0.96}$ \\
\midrule
\multicolumn{4}{l}{\textbf{Our}}\\
$\,$ PH-DGN & \one{-0.9348$_{\pm0.2097}$} & \one{1.3 GB} & \one{17.13$_{\pm0.06}$}\\
\bottomrule
\end{tabular}
%\end{adjustbox}
\end{table}

\subsection{Ablation on Driving Forces}\label{sec:exp_ablation}
In order to study the different dampening and external forces we designed for the full port-Hamiltonian setup (\ie including driving forces), we provide an ablation of our model on the Minesweeper task \citep{platonov2023a}, employing the most recent training protocol of \cite{luo2024classic}. 
The model selection is split into two stages. First, the best purely conservative PH-DGN (\ie PH-DGN\textsubscript{C}) is selected from a grid with total number of layers $L \in \{1,5,8\}$, embedding dimension $d \in\{256, 512\}$, step size $\epsilon \in \{0.1, 1.0\}$ according to the validation ROC-AUC. Then, the best selected configuration is tested with different driving forces. %we sequental and then using the selected PHDGN\textsubscript{C} configuration, the driving forces are tested sequentially.
We refer the reader to \cref{ap:arch_details} for the details on the tested architectures.
We report in \cref{tab:minesweeper} the results on the Minesweeper task alongside with the top-6 models from \cite{luo2024classic}. 
We observe that our purely conservative PH-DGN\textsubscript{C} show a strong improvement with respect to baseline methods, achieving new state-of-the-art performance. As evidenced by our ablation in \cref{tab:minesweeper}, the inclusion of driving forces could not improve on this task, thus there is no advantage in deviating from a purely conservative regime. Our intuition is that counting-based tasks, like Minesweeper, greatly benefit from a purely conservative bias, such as that used by PH-DGN\textsubscript{C} since they require preserving all the information. %our energy-conservative bias, while there is empirical evidence showing that the inductive bias of incorporating driving forces is not having an impact here.

To complete the picture of the empirical performance differences between PH-DGN with and without driving forces, we report in \cref{fig:graph_transfer_ablation} the ablation on the Graph Transfer task.\\
We observe that the purely conservative PH-DGN\textsubscript{C} achieves better performance than the full PH-DGN on two out of three tasks in the long-range regime. This finding aligns with our intuition that a purely conservative bias is essential for problems requiring the preservation of all information. Indeed, solving the graph transfer task effectively demands retaining the source node label without dissipation.

These results, together with those reported in \cref{sec:experiments}, suggest that model selection should determine the optimal configuration of driving forces depending on the specific data setting.

\begin{table}[h]
    \centering
    \caption{Ablation results on Minesweeper for different architectures of the driving forces proposed in this work. We report ROC-AUC scores averaged over the 10 official tr/vl/ts splits of \cite{platonov2023a}. %The configuration code is as follows $(a,b,c,d) = (\text{use-dampening}== \text{True}, \text{use-external} == \text{True}, \text{dampening-id}, \text{external-id})$, where dampening-id $ = \{0: \text{None}, 1: \text{param}, 2: \text{MLP4-ReLU}, 3: \text{DGN-ReLU}\}$ and external-id $ = \{0: \text{None}, 1: \text{MLP4-Sin}, 2: \text{DGN-tanh}\}$)
    The \one{first}, \two{second}, and \three{third} best scores are colored.}\label{tab:minesweeper}
    \vspace{5pt}
    \begin{tabular}{cc l l l}
    \toprule
    
    \multicolumn{2}{l}{\multirow{2}{*}{\textbf{Model}}}  & \textbf{Train Score} & \textbf{Val Score} & \textbf{Test Score}\\
    && \small{ROC-AUC $\uparrow$} & \small{ROC-AUC $\uparrow$} & \small{ROC-AUC $\uparrow$} \\
    \midrule
\multicolumn{2}{l}{\textbf{Top-6 models from \cite{luo2024classic}}} \\ 
\multicolumn{2}{l}{\hspace{1.7cm}GraphGPS} & - & - & 90.75$_{\pm 0.89}$\\
\multicolumn{2}{l}{\hspace{1.7cm}SGFormer} & - & - & 91.42$_{\pm 0.41}$\\
\multicolumn{2}{l}{\hspace{1.7cm}Polynormer} & - & - & 97.49$_{\pm 0.48}$\\
\multicolumn{2}{l}{\hspace{1.7cm}GAT} & - & - & 97.73$_{\pm 0.73}$\\
\multicolumn{2}{l}{\hspace{1.7cm}GraphSAGE} & - & - & 97.77$_{\pm 0.62}$\\
\multicolumn{2}{l}{\hspace{1.7cm}GCN} & - & - & \three{97.86$_{\pm 0.24}$}\\
    \midrule
\multicolumn{2}{l}{\textbf{Our - no driving forces}} \\ 
\multicolumn{2}{l}{\hspace{1.7cm}PH-DGN\textsubscript{C}} & 99.78$_{\pm0.05}$ & 98.37$_{\pm0.22}$ & \one{98.45$_{\pm0.21}$} \\
    \midrule
\multicolumn{2}{l}{\textbf{Our - with driving forces}} \\ 
    \multicolumn{2}{l}{\hspace{1.7cm}PH-DGN} \\
    \,\,\, \small{\bf Dampening} & \small{\bf External Force}\\
    \,\,\, \small{--} & \small{MLP4-Sin} %(0,1,0,1)
    & 99.37$_{\pm 0.38}$ & 96.66$_{\pm0.35}$ & 96.61$_{\pm0.57}$\\
    \,\,\, \small{--} & \small{DGN-tanh} %(0,1,0,2)
    & 99.28$_{\pm 0.10}$  &97.29$_{\pm0.28}$ & 97.20$_{\pm0.42}$ \\
    \,\,\, \small{param} & \small{--} %(1,0,1,0)
    & 99.79$_{\pm 0.05}$  & 98.36$_{\pm0.22}$ & \two{98.42$_{\pm0.21}$}\\
    \,\,\, \small{param} & \small{MLP4-Sin} %(1,1,1,1)      
    & 99.55$_{\pm 0.21}$  & 96.86$_{\pm0.33}$ & 96.86$_{\pm0.52}$\\
    \,\,\, \small{param} & \small{DGN-tanh} %(1,1,1,2)      
    & 99.30$_{\pm 0.19}$  & 97.35$_{\pm0.19}$ & 97.27$_{\pm0.29}$\\
    \,\,\, \small{MLP4-ReLU} & \small{--} %(1,0,2,0)        
    & 99.62$_{\pm 0.57}$  & 95.33$_{\pm0.36}$ & 95.33$_{\pm0.65}$\\
    \,\,\, \small{MLP4-ReLU} & \small{MLP4-Sin} %(1,1,2,1)  
    & 99.93$_{\pm 0.03}$ & 95.79$_{\pm0.57}$ & 95.67$_{\pm0.62}$\\
    \,\,\, \small{MLP4-ReLU} & \small{DGN-tanh} %(1,1,2,2)  
    & 98.79$_{\pm 0.24}$  & 95.42$_{\pm0.45}$ & 95.41$_{\pm0.66}$\\
    \,\,\, \small{DGN-ReLU} & \small{--} %(1,0,4,0)         
    & 94.96$_{\pm 0.17}$ & 93.22$_{\pm0.81}$ & 93.42$_{\pm0.61}$\\
    \,\,\, \small{DGN-ReLU} & \small{MLP4-Sin} %(1,1,4,1)   
    & 95.61$_{\pm 0.48}$  & 93.80$_{\pm0.71}$ & 93.87$_{\pm0.55}$\\
    \,\,\, \small{DGN-ReLU} & \small{DGN-tanh} %(1,1,4,2)   
    & 95.01$_{\pm 0.47}$ & 93.21$_{\pm0.72}$ & 93.32$_{\pm0.84}$\\
    \bottomrule
    \end{tabular}
\end{table}

\begin{figure}[h]
    \centering
    \includegraphics[width=\textwidth]{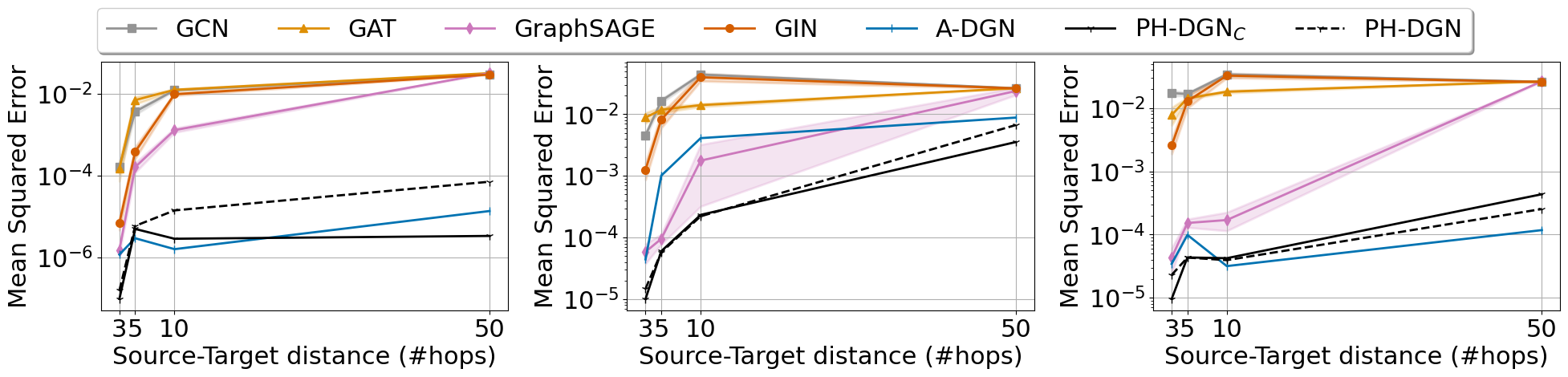}
    {\\\footnotesize \hspace{10mm}(a) Line \hspace{35mm}(b) Ring \hspace{30mm}(c) Crossed-Ring}
    \caption{Information transfer performance on (a) Line, (b) Ring, and (c) Crossed-Ring graphs, including the purely conservative PH-DGN (\ie PH-DGN\textsubscript{C}) and PH-DGN with driving forces.}
    \label{fig:graph_transfer_ablation}
\end{figure}

\end{document}

% --- supplement: sections/old_appendix.tex ---

\maketitle
\section{Dissipation Inequality}
\begin{definition}[Dissipation Inequality, \cite{van2006port} Definition 3.1.2]
A state space system $\Sigma$ is said to be dissipative with respect to the supply rate $s$ if there exists a function $S: \mathcal{X} \rightarrow \mathbb{R}^{+}$, called the storage function, such that for all initial conditions $x\left(t_0\right)=x_0 \in \mathcal{X}$ at any time $t_0$, and for all allowed input functions $u(\cdot)$ and all $t_1 \geq t_0$ the following inequality holds:
\begin{equation}\label{eq:dissipation_ineq}
   S\left(x\left(t_1\right)\right) \leq S\left(x\left(t_0\right)\right)+\int_{t_0}^{t_1} s(u(t), y(t)) d t 
\end{equation}
If it holds with equality for all $x_0, t_1 \geq t_0$, and all $u(\cdot)$, then $\Sigma$ is conservative with respect to $s$. Finally, $\Sigma$ is called cyclo-dissipative with respect to $s$ if there exists a function $S: \mathcal{X} \rightarrow \mathbb{R}$, not necessarily nonnegative, such that \eqref{eq:dissipation_ineq} holds.
\end{definition} 
This definition expresses the fact that the stored energy $S(x(t_1))$ at any future time $t_1$ can not exceed the initial energy $S(x(t_0))$ plus the externally supplied energy. Hence, in such systems, only internal dissipation of energy is possible to occur. This thesis considers the following instance of dissipative systems.
\section{Proof of Theorem Jacobian}
\begin{proof}
First it is noted that $$\pdv{}{\mathbf{h}_u}\Jkskew\nabla_{\mathbf{h}_u}H_{\mathcal{G}}(\mathbf{y}(t)) = (\nabla^2_{\mathbf{h}_u}H_{\mathcal{G}}(\mathbf{y}(t))\Jkskew^{\top})^{\top},$$ where $\nabla^2_{\mathbf{h}_u}H_{\mathcal{G}}$ is the symmetric Hessian matrix of the form:
\begin{equation*}
   \begin{multlined}[c][\linewidth]
    \nabla^2_{\mathbf{h}_u}H_{\mathcal{G}}(\mathbf{y}(t)) = \mathbf{W}^{\top}\text{diag}(\sigma'(\mathbf{W}\mathbf{h}_u + \Phi_u + b))\mathbf{W} + \sum_{u \in \mathcal{N}_u} \mathbf{V}^{\top}\text{diag}(\sigma'(\mathbf{W}\mathbf{h}_u +\Phi_u +b))\mathbf{V}
\end{multlined}
\end{equation*}
Hence, the Jacobian is shortly written as:
$$\Jkskew\nabla^2_{\mathbf{h}_u}H_{\mathcal{G}}(\mathbf{y}(t)) = \Jkskew(\mathbf{W}^{\top}D_u\mathbf{W}+ \mathbf{V}^{\top}(\sum_{u \in \mathcal{N}_u}D_u)\mathbf{V}),$$
where $D_u = \text{diag}(\sigma'(\mathbf{W}\mathbf{h}_u + \Phi_u + b))$, so $\sum_{u \in \mathcal{N}_u\cup\{v\}}D_u$ is also a diagonal matrix $D_{\mathcal{N}_u\cup\{v\}}$. For the most common activation functions $\sigma$, it holds that $\sigma'(x) \ge 0, \forall x$, hence, the Hessian is a sum of psd matrices meaning $\nabla^2_{\mathbf{h}_u}H_{\mathcal{G}} \succeq 0$. 
So it is concluded that the Jacobian is of the form of $\mathbf{AB}$, where $\mathbf{A}$ is anti-symmetric and $\mathbf{B}$ is psd. Consider an eigenpair of $\mathbf{A B}$, where the eigenvector is denoted by $\mathbf{v}$ and the eigenvalue by $\lambda \ne 0$. Then:
\begin{align*}
\mathbf{AB}\mathbf{v} & = \lambda\mathbf{v}  \\
\mathbf{B}\mathbf{v} & =\mathbf{A}^{-1}\lambda\mathbf{v} \\
\mathbf{v}^* \mathbf{B} \mathbf{v} & =\lambda\left(\mathbf{v}^* \mathbf{A}^{-1} \mathbf{v}\right)
\end{align*}
where $*$ represents the conjugate transpose. On the left-hand side, it is noticed that the $\left(\mathbf{v}^* \mathbf{B} \mathbf{v}\right)$ term is a real number. Recalling that $\mathbf{A}^{-1}$ remains anti-symmetric and for real anti-symmetric matrices it holds that $\mathbf{C}^*=\mathbf{C}^{\top}=-\mathbf{C}$, it follows that $\left(\mathbf{v}^* \mathbf{C} \mathbf{v}\right)^*=\mathbf{v}^* \mathbf{C}^* \mathbf{v}=-\mathbf{v}^* \mathbf{C} \mathbf{v}$. Hence, the $\mathbf{v}^* \mathbf{A}^{-1} \mathbf{v}$ term on the right-hand side is an imaginary number. Thereby, $\lambda$ needs to be purely imaginary, and, as a result, all eigenvalues of $\mathbf{A B}$ are purely imaginary.
\end{proof}

\section{Proof of Theorem Lower Bound}
\begin{lemma}[Lemma 1 of \cite{galimberti2023hamiltonian}, local version]
\label{lemma:time_bsm_self_loop}
    The same blah applies to the local case with a different formula
    \begin{equation}
        \odv{}{t}\pdv{\xk(T)}{\xk(T-t)} = \left.\pdv{y}{\xk}\right\rvert_{(T-t)} \left.\pdv{f_u}{y}\right\rvert_{\mathbf{y}(T-t)}\pdv{\xk(T)}{\xk(T-t)} = F_u \pdv{\xk(T)}{\xk(T-t)}
    \end{equation}
    where $f_u$ is the restriction of $f = \Jskew\pdv{H}{\mathbf{y}}$ the the components corresponding to $\xk$, which can be written as
    \begin{equation}
        f_u = L_u f = L_u \Jskew\pdv{H}{\mathbf{y}}
    \end{equation}
    and $L_u$ is the readout matrix, of the form
    \begin{equation}
         L_u = \begin{bmatrix}
            0_{d\times d(u-1)} & I_{d\times d} & 0_{d\times d(n-u)} & 0_{d\times d(u-1)} & 0_{d\times d} & 0_{d\times d(n-u)}\\
            0_{d\times d(u-1)} & 0_{d\times d} & 0_{d\times d(n-u)} & 0_{d\times d(u-1)} & I_{d\times d} & 0_{d\times d(n-u)}
        \end{bmatrix}
    \end{equation}
    Notice as well that (we are using denominator notation)
    \begin{equation}
        \pdv{\mathbf{y}}{\xk} = L_u
    \end{equation}
\end{lemma}
\begin{proof}
    Following \cite{galimberti2023hamiltonian}, we can write that
    \begin{equation}
        \xk(T) = \xk(T-t) + \int_{T-t}^{T}f_u(\mathbf{y}(\tau))\dsmth \tau = \xk(T-t) + \int_{0}^{t}f_u(\mathbf{y}(T-t + s))\dsmth s
    \end{equation}
    Differentiating we obtain
    \begin{equation}
        \begin{split}
            \pdv{\xk(T)}{\xk(T-t)} & = I_{k} + \pdv{\int_0^{\top} f_u(\mathbf{y}(T-t + s))\dsmth s}{\xk(T-t)} = \\
            & = I_u + \int_{0}^{t}{\pdv{f_u(\mathbf{y}(T-t+s))}{\xk(T-t)}} \\ 
            & = I_u + \int_{0}^{t}{\pdv{\mathbf{y}(T-t+s)}{\xk(T-t)}\left.\pdv{f_u}{\mathbf{y}}\right\rvert_{\mathbf{y}(T-t+s)}\dsmth s}
        \end{split}
    \end{equation}
    we now have that
    \begin{equation}
        \begin{split}
            \pdv{\xk(T)}{\xk(T-t-\delta)} & = \pdv{\xk(T-t)}{\xk(T-t-\delta)}\pdv{\xk(T)}{\xk(T-t)} \\
            & = \left( I_u + \int_{0}^{\delta}{\pdv{\mathbf{y}(T-t-\delta+s)}{\xk(T-t-\delta)}\left.\pdv{f_u}{\mathbf{y}}\right\rvert_{\mathbf{y}(T-t-\delta+s)}\dsmth s}\right)\pdv{\xk(T)}{\xk(T-t)}
        \end{split}
    \end{equation}
    so that 
    \begin{equation}
        \pdv{\xk(T)}{\xk(T-t-\delta)} - \pdv{\xk(T)}{\xk(T-t)} = \left(\int_{0}^{\delta}{\pdv{\mathbf{y}(T-t-\delta+s)}{\xk(T-t-\delta)}\left.\pdv{f_u}{\mathbf{y}}\right\rvert_{\mathbf{y}(T-t-\delta+s)}\dsmth s}\right)\pdv{\xk(T)}{\xk(T-t)}
    \end{equation}
    Hence, by taking the limit $\delta\rightarrow 0$
    \begin{equation}
        \odv{}{t}\pdv{\xk(T)}{\xk(T-t)} = \left.\pdv{y}{\xk}\right\rvert_{(T-t)}\left.\pdv{f_u}{y}\right\rvert_{\mathbf{y}(T-t)}\pdv{\xk(T)}{\xk(T-t)} = F_u \pdv{\xk(T)}{\xk(T-t)}
    \end{equation}
\end{proof}
As a remark, we calculate that
\begin{equation}
    \pdv{f_u}{\mathbf{y}} = \pdv{}{y}\left(L_u \Jskew \pdv{H}{\mathbf{y}}\right) = \pdv[order={2}]{H}{\mathbf{y}}\Jskew^{\top} L_u^{\top} = S\Jskew^{\top} L_u^{\top}
\end{equation}
so that $F_u = L_u S\Jskew^{\top} L_u^{\top}$
We are now ready to prove the local form of Lemma 4
\begin{proof}
    Similarly to before, we call $\left[\pdv{\xk(T)}{\xk(T-t)}\right] = \Psi_u(T,T-t)$, which will be indicated simply as $\Psi_u$. We know that $\Psi_u(T,T) = I_u$, calculating the derivative we have
    \begin{equation}
        \begin{split}
            \odv{}{t}[\Psi_u^{\top}\Jskew \Psi_u] & = \dot{\Psi}_u^{\top} \Jskew \Psi_u + \Psi_u^{\top} \Jskew \dot{\Psi}_u \\
             & = (F_u\Psi_u)^{\top} \Jskew \Psi_u + \Psi_u^{\top} \Jskew F_u \Psi_u = \\
             & = \Psi_u^{\top} L_u \Jskew S^{\top}  L_u^{\top} \Jskew_u \Psi_u + \Psi_u^{\top} \Jskew_u L_u S\Jskew^{\top} L_u^{\top} \Psi \\
             & = \Psi_u^{\top} \left(L_u \Jskew S L_u^{\top} \Jskew_u + \Jskew_u L_u S \Jskew^{\top} L_u^{\top}\right)\Psi_u
        \end{split}
    \end{equation}
    We just need to show that the term in parentheses is zero. Using the relations $\Jskew^{\top} L_u^{\top}= -L_u^{\top}\Jkskew$ and $J_u L_u= L_u\Jskew$ we easily see that, finally
    \begin{equation}
        \odv{}{t}\left(\left[\pdv{\xk(T)}{\xk(T-t)}\right]^{\top} \Jskew_u \left[\pdv{\xk(T)}{\xk(T-t)}\right]\right) = \Psi_u^{\top}\left(L_u\Jskew S L_u^{\top}\Jkskew +L_u\Jskew S (-L_u^{\top}\Jkskew) \right)\Psi_u= 0
    \end{equation}
    which means that $\left[\pdv{\xk(T)}{\xk(T-t)}\right]^{\top} \Jskew_u \left[\pdv{\xk(T)}{\xk(T-t)}\right] = \Jskew_u$ for all $t$.
\end{proof}
\begin{proof}
 Since the semi-implicit Euler integration scheme is a symplectic method, it holds:
 \begin{equation}\label{eq:bsm_symp}
     \left[\pdv{\mathbf{h}_u^{(l)}}{\mathbf{h}_u^{(l-1)}}\right]^{\top} \Jskew_u \left[\pdv{\mathbf{h}_u^{(l)}}{\mathbf{h}_u^{(l-1)}}\right] = \Jskew_u
 \end{equation}
Further, by using the chain rule and applying \eqref{eq:bsm_symp} iteratively to get:
 \begin{equation*}
     \left[\pdv{\mathbf{h}_u^{(L)}}{\mathbf{h}_u^{(L-j)}}\right]^{\top} \Jkskew \left[\pdv{\mathbf{h}_u^{(L)}}{\mathbf{h}_u^{(L-j)}}\right] = \left[\prod_{i=L-j}^{L-1}\pdv{\mathbf{h}_u^{(i+1)}}{\mathbf{h}_u^{(i)}}  \right]^{\top} \Jkskew \left[\prod_{i=L-j}^{L-1}\pdv{\mathbf{h}_u^{(i+1)}}{\mathbf{h}_u^{(i)}}  \right] = \Jkskew
 \end{equation*}
 Hence, the BSM is symplectic at arbitrary depth
\end{proof}
\section{Proof of Theorem Upper Bound}
In order to prove this statement, the following technical lemma is required to imply a bound of the spectral norm of a matrix, when each column can be bounded.
\begin{lemma}[\cite{galimberti2023hamiltonian}]\label{lemma:columnbound}
Consider a matrix $\mathbf{A} \in \mathbb{R}^{n \times n}$ with columns $\mathbf{a}_i \in \mathbb{R}^n$, i.e., $\mathbf{A}=\left[\begin{array}{llll}\mathbf{a}_1 & \mathbf{a}_2 & \cdots & \mathbf{a}_n\end{array}\right]$, and assume that $\left\|\mathbf{a}_i\right\|_2 \leq$ $\gamma^{+}$ for all $i=1, \ldots, n$. Then, $\|\mathbf{A}\|_2 \leq \gamma^{+} \sqrt{n}$.
\end{lemma}
Theorem \ref{theorem:upperBound} can now be proven by bounding each column of the BSM matrix.
\begin{proof}
Consider the ODE \eqref{eq:bsmdynamic} from Lemma \ref{lemma:time_bsm_self_loop} and split $\pdv{\mathbf{h}_u(T)}{\mathbf{h}_u(T-t)}$ into columns $\pdv{\mathbf{h}_u(T)}{\mathbf{h}_u(T-t)}=\left[\begin{array}{llll}\mathbf{z}_1(\mathbf{t}) & \mathbf{z}_2(t) & \ldots & \mathbf{z}_d(t)\end{array}\right]$. Then, \eqref{eq:bsmdynamic} is equivalent to
\begin{equation}\label{eq:columnwise_dyn}
    \dot{\mathbf{z}}_i(t)=\mathbf{A}_u(T-t) \mathbf{z}_i(t), \quad t \in[0, T], i=1,2 \ldots, d
\end{equation}
subject to $\mathbf{z}_i(0)=e_i$, where $e_i$ is the unit vector with a single nonzero entry in position $i$. The solution of the linear system of ODEs \eqref{eq:columnwise_dyn} is given by the integral equation
\begin{equation}\label{eq:columnwise_solu}
    \mathbf{z}_i(t)=\mathbf{z}_i(0)+\int_0^t \mathbf{A}(T-s) \mathbf{z}_i(s) d s, \quad t \in[0, T]
\end{equation}
By assuming that $\|\mathbf{A}_u(\tau)\|_2 \leq Q$ for all $\tau \in[0, T]$, and applying the triangular inequality in \eqref{eq:columnwise_solu}, it is obtained that:
$$
\left\|\mathbf{z}_i(t)\right\|_2 \leq\left\|\mathbf{z}_i(0)\right\|_2+Q \int_0^t\left\|\mathbf{z}_i(s)\right\|_2 d s=1+Q \int_0^t\left\|\mathbf{z}_i(s)\right\|_2 d s,
$$
where the last equality follows from $\left\|\mathbf{z}_i(0)\right\|_2=\left\|e_i\right\|_2=1$ for all $i=1,2, \ldots, d$. Then, applying the Gronwall inequality, it holds for all $t \in[0, T]$
\begin{equation}\label{eq:general_column_bound}
    \left\|\mathbf{z}_i(t)\right\|_2 \leq \exp (Q T) .
\end{equation}
By applying Lemma \ref{lemma:columnbound} the general bound follows. The last part is concerned with characterizing $Q$ by bounding the norm $\|\mathbf{A}_u(\tau)\|_2 \; \forall \tau \in [0,T]$.
From Lemma \ref{lemma:time_bsm_self_loop} $\mathbf{A}_u$ can be expressed as $\mathbf{A}_u = \mathbf{L}_u  \mathbf{S} \Jskew^{\top} \mathbf{L}_u^{\top}$, which is equivalently $ \mathbf{A}_u =  \nabla^2_{\mathbf{h}_u}{H}_{\mathcal{G}}(\mathbf{y})\Jkskew^{\top}$, since $\Jskew^{\top} \mathbf{L}_u^{\top}= \mathbf{L}_u^{\top}\Jkskew^{\top}$.
The Hessian $\nabla^2_{\mathbf{h}_u}{H}_{\mathcal{G}}(\mathbf{y})$ was already the key in the proof of Lemma \ref{lemma:eigenvaluesJacobian}:
\begin{align*}
\nabla^2_{\mathbf{h}_u}H_{\mathcal{G}}(\mathbf{y}) = \mathbf{W}^{\top}\operatorname{diag}(\sigma'(\mathbf{W}\mathbf{h}_u + \Phi_u + b))\mathbf{W} + \sum_{u \in \mathcal{N}_u} \mathbf{V}^{\top}\operatorname{diag}(\sigma'(\mathbf{W}\mathbf{h}_u +\Phi_u +b))\mathbf{V}
\end{align*}
After noting that $\|\nabla^2_{\mathbf{h}_u}{H}_{\mathcal{G}}(\mathbf{y})\Jkskew^{\top}\|_2 \le \|\nabla^2_{\mathbf{h}_u}{H}_{\mathcal{G}}(\mathbf{y})\|_2\|\Jkskew^{\top}\|_2$, the only varying part is the Hessian $\nabla^2_{\mathbf{h}_u}{H}_{\mathcal{G}}(\mathbf{y})$.
By Lemma \ref{lemma:columnbound} $\operatorname{diag}(\sigma'(x)) \le \sqrt{d}\,M$ and noting that $\|\mathbf{X}^{\top}\| = \|\mathbf{X}\|$ for any square matrix $\mathbf{X}$, then
\begin{equation*}
    \left\|\nabla^2_{\mathbf{h}_u}H_{\mathcal{G}}(\mathbf{y})\right\|_2 \le \sqrt{d}\,M \,\|\mathbf{W}\|_2^2 + \sqrt{d}\,M\, \operatorname{max}_{i \in [n]}|\mathcal{N}_i| \; \|\mathbf{V}\|_2^2
\end{equation*}
\end{proof}
\section{Numerical Simulations}
\begin{figure}[h!]
    \centering
    \begin{tabular}{c c c}
   \includegraphics[width=0.3\textwidth]{figures/energy_10_FE.png} & \includegraphics[width=0.3\textwidth]{figures/energy_10_SIE.png}& \includegraphics[width=0.3\textwidth]{figures/energy_10_SV.png}
    \end{tabular}
    \caption{Numerical Simulation of Hamiltonian Message Passing. First column corresponds to the forward Euler scheme, second column to symplectic Euler and third column to Str\"omer-Verlet. The y-axis represents the energy difference to the initial state.}
    \label{fig:numerical_sim}
\end{figure}
 
\section{Experimental Details}
\begin{table}[h!]
    \centering
    \renewcommand*{\arraystretch}{1.2}
        \caption{Hyperparameter search space for ablation of the Hamiltonian DGN. 720 configurations in total.}\label{tab:hyper}
    \begin{tabular}{r  l}
    \multicolumn{1}{r}{\underline{Architecture}}&\\
    \multicolumn{1}{r}{Coupling:} & $\Phi_{\mathbf{p}},\Phi_{\mathbf{q}} \in \{\text{Vanilla }, \text{ GCN}\}$\\
    \multicolumn{1}{r}{Decoding:} & $\{\mathbf{p},\mathbf{q},(\mathbf{p},\mathbf{q})\}$\\
    \multicolumn{1}{r}{Dimension:}  & $ d \in \{10, 20, 30\}$\\
    \multicolumn{1}{r}{\underline{Integration}} & \\
    \multicolumn{1}{r}{Layers:}  & $ L \in \{1,5,10,20,30\}$\\
    \multicolumn{1}{r}{Time:} & $T \in \{0.1, 1, 2, 3\}$\\
    \end{tabular}
\end{table}
\begin{equation}
    F_u(\mathbf{q},t) = \begin{cases}
        \operatorname{lin}(\Tilde{d},d) \circ \sin \circ \operatorname{lin}(\Tilde{d},\Tilde{d}) \circ \sin \circ \operatorname{lin}(\Tilde{d},\Tilde{d}) \circ \sin \circ \operatorname{lin}(d+1,\Tilde{d}) \\[0.2cm] 
        \sigma \circ \sum_{i \in \mathcal{N}_u}\operatorname{lin}(d+1,d)
    \end{cases}
\end{equation}
\begin{equation}
    D_u(\mathbf{q}) = \begin{cases}
        \mathbf{w}\in \Rd\\[0.2cm]
        \operatorname{lin}(\Tilde{d},d) \circ \operatorname{ReLU} \circ \operatorname{lin}(\Tilde{d},\Tilde{d}) \circ \operatorname{ReLU} \circ \operatorname{lin}(\Tilde{d},\Tilde{d}) \circ \operatorname{ReLU} \circ \operatorname{lin}(d,\Tilde{d}) \\[0.2cm] 
        \operatorname{ReLU} \circ \sum_{i \in \mathcal{N}_u}\operatorname{lin}(d,d)
    \end{cases}
\end{equation}
the neural network architectures for modeling the physical term for internal dampening are either a raw parameter vector (1) or ReLU-activated parameter vector to assure positive-semi-definiteness (2), or a ReLU-activated 4-layer MLP (3), or a ReLU-activated vanilla graph neural network (4). Considering the external force, options (3) and (4) form the dampening are considered with $\sin(\cdot)$ activation (1) and $\tanh(\cdot)$ activation (2) respectively. 
\begin{table}[h!]
    \centering
        \caption{Hyperparameter search space for port-Hamiltonian dampening and forcing terms. 25 additional configurations since when $\lambda_{\text{d}},\lambda_{\text{e}} = 0$ no architectures need to be evaluated.}   \label{tab:port_grid}
    \begin{tabular}{r  l}
       Dampening term:  &  $\delta_{\text{d}} \in \{1,2,3,4\}$\\%\{1,3,4\}$\\
       External forcing term:  & $\delta_{\text{e}} \in \{1,2\}$\\
       Dampening scalar: & $\lambda_{\text{d}}\in \{0,1\}$\\
       External forcing scalar: & $\lambda_{\text{e}}\in\{-1,0,1\}$\\
    \end{tabular}
\end{table}
\bibliographystyle{plainnat}
\bibliography{bibliography}